\definecolor{DarkRed}{rgb}{0.5,0.1,0.1}
\definecolor{DarkBlue}{rgb}{0.1,0.1,0.5}
\newtheorem*{claim*}{Claim}
\newtheorem*{proposition*}{Proposition}
\newtheorem*{lemma*}{Lemma}
\newtheorem*{problem*}{Problem}
\newtheorem*{remark*}{Remark}
\newtheorem{mdresult}{Result}
\newtheorem{mdinvariant}{Invariant}
\theoremstyle{definition}
\newtheorem{mdexample}{Example}[section]
\renewcommand{\qed}{\nobreak \ifvmode \relax \else
      \ifdim\lastskip<1.5em \hskip-\lastskip
      \hskip1.5em plus0em minus0.5em \fi \nobreak
      \vrule height0.75em width0.5em depth0.25em\fi}
\newcommand{\rv}[1]{\ensuremath{\mathsf{#1}}}
\newcommand{\en}[1]{\ensuremath{\mathbb{H}(#1)}}
\newcommand{\mi}[2]{\ensuremath{\mathbb{I}(#1 \,; #2)}}
\newcommand{\kibitz}[2]{\ifnum\Comments=1{\color{#1}{#2}}\fi}
\newcommand{\Tuni}{\ensuremath{S_\textnormal{ui}}}
\newcommand{\distD}{\ensuremath{\dist^{\Delta,\psi}}}
\newcommand{\bI}{\ensuremath{\rv{I}}}
\newcommand{\bY}{\ensuremath{\rv{Y}}}
\newcommand{\rA}{\ensuremath{\rv{A}}}
\newcommand{\rB}{\ensuremath{\rv{B}}}
\newcommand{\rC}{\ensuremath{\rv{C}}}
\renewcommand{\S}{\ensuremath{\mathcal{S}}}
\newcommand{\M}{\ensuremath{\mathcal{M}}}
\newcommand{\A}{\ensuremath{\mathcal{A}}}
\newcommand{\T}{\ensuremath{\mathcal{T}}}
\newcommand{\K}{\ensuremath{\mathcal{K}}}
\newcommand{\R}{\ensuremath{\mathcal{R}}}
\newcommand{\dist}{\ensuremath{\mathcal{D}}}
\newcommand{\HH}{\ensuremath{\mathbb{H}}}
\newcommand{\DD}[2]{\ensuremath{\mathbb{D}(#1~||~#2)}}
\newcommand{\distribution}[1]{\ensuremath{\textnormal{dist}(#1)}}
\newcommand{\supp}[1]{\ensuremath{\textnormal{supp}(#1)}}
\newcommand{\Bern}[1]{\ensuremath{\mathcal{B}(#1)}}
\newcommand{\Uni}{\ensuremath{\mathcal{U}}}
\newtheorem{theorem}{Theorem}
\newtheorem{lemma}{Lemma}[section]
\newtheorem{claim}[lemma]{Claim}
\newtheorem{fact}[lemma]{Fact}
\newtheorem{definition}{Definition}
\newtheorem{problem}{Problem}
\newtheorem{rem}[lemma]{Remark}
\newcommand{\Sec}[1]{Section~\ref{#1}}
\newcommand{\Eqn}[1]{Eq.~(\ref{#1})}
\newcommand{\Lem}[1]{Lemma~\ref{#1}}
\newcommand{\Thm}[1]{Theorem~\ref{#1}}
\newcommand{\App}[1]{Appendix~\ref{#1}}
\newcommand{\Def}[1]{Definition~\ref{#1}}
\newcommand{\Fact}[1]{Fact~\ref{#1}}
\newcommand{\Clm}[1]{Claim~\ref{#1}}
\DeclareMathOperator*{\argmax}{argmax}
\DeclareMathOperator*{\argmin}{argmin}
\newcommand{\toShrink}{-.20cm}
\newcommand{\toShrinkEnu}{-.2cm}
\newcommand{\tvd}[2]{\ensuremath{\norm{#1 - #2}_{tvd}}}
\newcommand{\Ot}{\ensuremath{\widetilde{O}}}
\newcommand{\eps}{\ensuremath{\varepsilon}}
\newcommand{\Bracket}[1]{\Big[#1\Big]}
\newcommand{\bracket}[1]{\left[#1\right]}
\newcommand{\paren}[1]{\ensuremath{\left(#1\right)}\xspace}
\newcommand{\card}[1]{\left\vert{#1}\right\vert}
\newcommand{\half}{\ensuremath{{1 \over 2}}}
\newcommand{\IN}{\ensuremath{\mathbb{N}}}
\newcommand{\B}{\ensuremath{\mathcal{B}}}
\newcommand{\norm}[1]{\ensuremath{\|#1\|}}
\newcommand{\poly}{\mbox{\rm poly}}
\newcommand{\ALG}{\ensuremath{\mbox{\sc alg}}\xspace}
\DeclareMathOperator*{\Exp}{\ensuremath{\mathbb{E}}}
\DeclareMathOperator*{\Prob}{\ensuremath{\textnormal{Pr}}}
\renewcommand{\Pr}{\Prob}
\newcommand{\Ex}{\Exp}
\newcommand{\event}{\mathcal{E}}
\newenvironment{tbox}{\begin{tcolorbox}[
		enlarge top by=5pt,
		enlarge bottom by=5pt,
		 breakable,
		 boxsep=0pt,
                  left=4pt,
                  right=4pt,
                  top=10pt,
                  arc=0pt,
                  boxrule=1pt,toprule=1pt,
                  colback=white
                  ]
	}
{\end{tcolorbox}}
\newcommand{\textbox}[2]{
{
\begin{tbox}
\textbf{#1}
{#2}
\end{tbox}
}
}
\title{A Sharp Memory-Regret Trade-Off for \\ Multi-Pass Streaming Bandits}
\author{Arpit Agarwal\thanks{Columbia University. E-mail: arpit.agarwal@columbia.edu} \and Sanjeev Khanna\thanks{University of Pennsylvania. E-mail: sanjeev@cis.upenn.edu}  \and Prathamesh Patil\thanks{University of Pennsylvania. E-mail: pprath@seas.upenn.edu}}
\begin{document}

\maketitle
\begin{abstract}
The stochastic $K$-armed bandit problem has been studied extensively due to its applications in various domains ranging from online advertising to clinical trials. In practice however, the number of arms can be very large resulting in large memory requirements for simultaneously processing them. In this paper we consider a streaming setting where the arms are presented in a stream and the algorithm uses limited memory to process these arms. Here, the goal is not only to minimize regret, but also to do so in minimal memory. Previous algorithms for this problem operate in one of the two settings: they either use $\Omega(\log \log T)$ passes over the stream \cite{rathod2021reducing, ChaudhuriKa20, Liau+18}, or just a single pass \cite{Maiti+21}. 

In this paper we study the trade-off between memory and regret when $B$ passes over the stream are allowed, for any $B \geq 1$, and establish \emph{tight} regret upper and lower bounds for any $B$-pass algorithm. Our results uncover a surprising \emph{sharp transition phenomenon}: $O(1)$ memory is sufficient to achieve $\widetilde\Theta\paren{T^{\half + \frac{1}{2^{B+2}-2}}}$ regret in $B$ passes, and increasing the memory to any quantity that is $o(K)$ has almost no impact on further reducing this regret, unless we use $\Omega(K)$ memory. Our main technical contribution is our lower bound which requires the use of \emph{information-theoretic techniques} as well as ideas from \emph{round elimination} to show that the \emph{residual problem} remains challenging over subsequent passes.

\end{abstract}

\section{Introduction}
The stochastic multi-armed bandit problem is a widely studied 
problem with applications in many domains such as online advertising, recommendation systems, clinical trials, financial portfolio design etc. 
In this problem, there are $K$ arms; in trial $t \in [T]$ the  algorithm pulls an arm $a_t$
and receives a reward drawn from
the reward distribution of $a_t$ with mean $\mu_{a_t}$.
The goal of the algorithm is to minimize the cumulative regret 
over $T$ trials where the regret for trial $t$ is defined as 
the gap between the maximum reward $\max_{i \in [K]} \mu_i$ and $\mu_{a_t}$.

In many practical applications such as online advertising and recommendation systems,
the number of arms can be very large and the learner might not be
able to store all the arms in memory. 
In these applications it can be more practical to process arms in a sequential
manner with small memory that is sub-linear in the number of arms.
Motivated by a long line of work on streaming algorithms in theoretical computer science \cite{Alon+99}, we consider a setting where 
the arms are presented in a (possibly adversarially chosen) stream and in each trial the 
algorithm decides whether to read the next arm 
from the stream into memory.
The algorithm can only store the indices and statistics of  $M$ arms out of the $K$ arms
and can only play an arm if it is present in the memory. 
The goal in this setting is to minimize the regret given a bounded amount of memory.

Previously, \cite{rathod2021reducing, ChaudhuriKa20, Liau+18} developed algorithms for regret minimization in this limited-memory streaming bandits setting, but their algorithms require a relatively large number of passes over the stream, with the former requiring $O(\log \log T)$ passes, and the latter two requiring $O(\log T)$ passes. 
Moreover, it is not understood whether the trade-off between memory and regret 
obtained by these algorithms is tight as a corresponding multi-pass lower bound is not known.
At the other extreme, \cite{Maiti+21} considered a $1$-pass streaming setting and showed that 
any algorithm using $M$ words (for  $M < K$) of memory needs to incur $\Omega(T^{2/3}/M^{7/3})$ expected regret. Also, there is a simple $1$-pass algorithm that uses $M=O(1)$ memory and achieves $O(K^{1/2} T^{2/3})$ regret.
These results of \cite{Maiti+21} imply that the $1$-pass setting exhibits
a sharp trade-off between memory and regret as explained below.

\emph{
The $1$-pass regret as a function of  memory $M$ has a sharp transition: 
with $M=O(1)$ one can achieve $O(T^{2/3})$ regret, and
increasing $M$ beyond $O(1)$ has 
little impact on further reducing this regret, unless we increase $M$ to $K$ in which case one can achieve $O(\sqrt{T})$ regret.\footnote{Note that when $M=K$ one can simply read all the arms in memory at once and use any stochastic multi-armed bandit algorithm such as the UCB algorithm to achieve a regret of $O(\sqrt{TK})$.}
}

In this paper we study a streaming setting for multi-armed bandits where we are allowed $B$ passes 
 over the stream, for any $B \geq 1$.
We seek to provide upper and lower bounds on the expected regret under a limited memory
budget and $B$ passes.
We also seek to understand the trade-off between memory and regret
as a function of the number of passes $B$.
Does increasing memory beyond $O(1)$ help in this $B$-pass setting or
is there again a sharp transition in regret similar to the $1$-pass setting?

Our main result is to prove a lower bound on the regret of any 
$B$-pass algorithm that uses limited amount of memory.
In particular, we show that any $B$-pass algorithm that uses $o(K/B^{2})$ words of memory necessarily incurs $\Omega\paren{4^{-B}T^{\half + \frac{1}{2^{B+2}-2}}}$ regret in expectation.
Note that for $B=1$ our result implies a tighter lower bound of $\Omega\paren{T^{2/3}}$ as compared to the $\Omega\paren{T^{2/3}/M^{7/3}}$ bound in \cite{Maiti+21}, for any 1-pass algorithm that uses $M < K/24$ words of memory.

Our lower bound exploits the 
main tension in the streaming setting: the algorithm has limited information about whether there are better arms further along in the stream, and hence, 
it is difficult to decide whether to keep exploring the current arms in memory or to 
 read more arms into memory by throwing away some of the current arms from memory. 
We construct a distribution over \emph{hard instances} such that, if 
in the first pass the algorithm performs \emph{sufficient exploration}
over (potentially `bad') arms then it already incurs a large regret in expectation.
If it performs \emph{insufficient exploration} in the first pass then 
it will throw away many `good' arms due to a limited memory budget
and will be unable to isolate the underlying instance at the end of the first pass. 
One of the main technical difficulty is to show that the resulting \emph{residual}  
distribution over instances is \emph{challenging} in a way that leads to large regret in the remaining $B-1$ passes. 
We overcome this difficulty by using information-theoretic techniques
to show that \emph{insufficient exploration} leads to low \emph{mutual information} 
which further leads to \emph{large entropy} in the residual distribution.
We then inductively argue that any high entropy 
distribution over instances will lead to large regret in the remaining $B-1$ passes.

We complement our lower bound with a simple $B$-pass algorithm
that uses $O(1)$ memory and achieves an expected regret upper 
bound of $\widetilde{O}\paren{T^{\half + \frac{1}{2^{B+2}-2}} \sqrt{KB}}$.
This implies that $O(\log \log T)$ passes and $O(1)$ memory 
are sufficient to achieve an almost optimal regret of $\Ot(\sqrt{KT})$,
and matches the recent $O(\log \log T)$ pass regret upper bound of \cite{rathod2021reducing}.
When $B=1$, we also recover the $O(T^{2/3} \sqrt{K})$ upper bound 
of \cite{Maiti+21}.
In short, our algorithm nicely interpolates the space between the $1$-pass $\Ot(T^{2/3} \sqrt{K})$ regret and the $(\log\log T)$-pass $\Ot(\sqrt{KT})$ regret upper bounds as function of the number of passes $B$.

Our algorithm is based on two key operations: (i) estimating the reward of the best arm,
(ii) identifying sub-optimal arms based on this estimate.  
In each pass the algorithm sets a maximum budget for the number of pulls allowed for each arm, and this budget keeps increasing over successive passes. 
The algorithm reads an arm into memory and pulls this arm until it is identified as a sub-optimal arm or the maximum budget is exceeded.
The estimate of the maximum reward is then updated and the next arm is read into memory. Since the budget keeps increasing over passes, the estimate 
for the maximum reward becomes more refined, and
sub-optimal arms are identified more easily.

Our lower and upper bound together imply the following (perhaps surprising) sharp threshold phenomenon in our $B$-pass setting.

\emph{
The $B$-pass regret as a function of  memory $M$ has a sharp transition: 
with $M=O(1)$ one can achieve $\widetilde\Theta\paren{T^{\half + \frac{1}{2^{B+2}-2}}}$ regret, and
increasing $M$ to any quantity that is $o(K/B^2)$ has 
almost no impact on further reducing this regret.
}

\noindent
\textbf{Related Work.}
The stochastic multi-armed bandit problem has been extensively studied in many fields including operations research, statistics and machine learning. 
We refer the reader to excellent surveys 
in \cite{Bubeck+13, Slivkins19},
and only mention work that is directly relevant to our streaming setting. 
\cite{Liau+18} studied a limited memory setting 
for multi-armed bandits and showed that one 
can achieve (almost) instance-wise optimal regret in $O(\log T)$
passes and $O(1)$ memory.
\cite{ChaudhuriKa20} studied a similar setting 
and showed that with $O(\log T)$ passes 
and $M$ memory one can achieve a regret upper bound
of $\Ot(KM+ \frac{K^{3/2}}{M} \sqrt{T})$.
However, these works only considered a $O(\log T)$-pass setting and did not study the trade-off between  
memory and regret for any arbitrary number of passes $1\leq B < \log T$.
A recent work \cite{rathod2021reducing} achieves a regret upper bound of $\Ot(\sqrt{KT})$ in $O(\log \log T)$ passes. However, their work does not address the question of the regret achievable (both upper and lower bounds) for any arbitrary number of passes $1\leq B< \log\log T$.
As discusses earlier, 
\cite{Maiti+21} considered a  $1$-pass 
streaming setting but their results do not apply for $B>1$ passes,
which is the main focus of our paper.
There is also some work on best arm identification 
with limited memory in the streaming setting. \cite{AssadiWa20} show that 
one can identify the best arm with $1$ pass over the stream and $O(1)$ memory using  
$O(K/\Delta^2)$ sample complexity where $\Delta$ is the minimum gap between the best arm and any other arm. 
\cite{Jin+2021} further obtain instance-wise 
optimal sample complexity for this problem using $\log 1/\Delta$ passes and $O(1)$ memory.

The stochastic multi-armed bandits problem has also been studied under the setting of limited adaptivity \cite{gao2019batched, Perchet+15}.
Under this setting, an algorithm operates in rounds and in each 
round it plays arms according to a fixed distribution that can only depend on the outcomes from the previous rounds. 
Even though the tradeoff between rounds and regret in this setting is similar to the tradeoff between 
passes and regret given limited memory in our setting, the key difference between the two settings is that this setting necessarily requires at least 1 bit of information per arm for a total of $\Omega(K)$ memory, but cannot be adaptive within a batch, whereas in our setting, we can be fully adaptive within a pass but are given strictly less than $K$ memory. 
Due to this difference the challenges in these two settings are quite different,
which is reflected in the fundamentally different techniques used in the respective lower bounds.

Very recently, independently of our work, \cite{Srinivas+22} studied the problem of
online learning with expert advice in a streaming setting and 
established a trade-off between regret and memory in this setting.
However, there are several fundamental differences between the multi-armed bandits
problem studied here and the experts problem studied in \cite{Srinivas+22}--
(1) in the experts problem one gets to see the loss of every expert at every trial, whereas in our problem one only gets to see the reward of the arm that is played,
(2) in \cite{Srinivas+22} the losses on experts are generated adversarially whereas in our work the rewards of arms are generated stochastically, 
(3) in \cite{Srinivas+22} the stream consists of the prediction of experts for each trial,
whereas in our work the stream consists of the arms.
As a result, the two settings require very different techniques for proving lower 
and upper bounds, and neither result has any implications on the other.

\noindent
\textbf{Organization.}
In \Sec{sec:problem} we discuss the problem setting and set up relevant notation.
We discuss our lower bound on regret in \Sec{sec:lower} which is the main result of our paper.
We then provide an upper bound on regret in \Sec{sec:upper},
and finally conclude in \Sec{sec:conc}.

\section{Problem Setting}
\label{sec:problem}
We study a multi-armed bandit problem, where the instance consists of a finite set $\K$  of  ($K = |\K|$) arms and a time horizon $T$ which is known ahead of time. When any arm $a\in \K$ is played, an i.i.d. reward is drawn from its corresponding reward distribution defined over $[0,1]$ with mean $\mu_{a}$ of which the algorithm has no prior knowledge.\footnote{We assume that the support of the reward distributions is $[0,1]$ for ease of analysis; our algorithmic results can be easily extended to sub-Gaussian distributions over arbitrary support.} The objective in this setting is to minimize the cumulative regret, which is defined as $R_T := \sum_{t=1}^T (\max_{j \in \K} \mu_j - \mu_{a_t})$ where $a_t$ is the arm played in trial $t\in [T]$.

We assume a limited memory setting where the arms $\K$ are presented to the algorithm as an \emph{arbitrarily (or adversarially) ordered read-only stream}
, and the algorithm is restricted to store the \emph{identities} and the corresponding \emph{statistics} of at most $M<K$ arms simultaneously while being allowed at most $B\geq 1$ passes over the stream. The input parameters $T$, $K$, $B$ and $M$ are assumed to be stored for free ($O(1)$ space). Crucially, \emph{the algorithm can only play an arm if it is in its memory}. Therefore, in each trial $t\in [T]$, the algorithm must decide to either play an arm currently present in its memory, which generates a reward (potentially incurring regret) and consumes a trial, or read the next arm from the stream into memory, which neither incurs regret nor consumes a trial. If the algorithm chooses to do the latter and the memory is full, then it must first discard some arm to accommodate the new arm, in which case both the statistics as well as the identity of the discarded arm are forgotten. Furthermore, the discarded arm cannot be read into memory (and hence played) until it is encountered again in a future pass over the stream.

\begin{rem}
In the above setting the set of arms in the stream remains the same even though their order can change adversarially. 
One can also consider a modified setting where the algorithm 
is allowed to delete arms from the stream permanently so that they do not appear
in future passes. For example, it might want to delete these arms if it identifies that these arms are suboptimal
and do not need to be processed further. 
We note that both our lower bound and upper bound results apply to this modified setting. 
\end{rem}

\textbf{Notation.} In the rest of this paper, we use upper case letters to refer to instance dependent constants, such as the length of the time horizon $T$, number of arms $K$, number of passes $B$, and the memory size $M$. We use $\B$, $\dist$, $\psi$ and $\phi$ to refer to distributions, and $\event$ to refer to events. We use other upper case calligraphic letters to refer to sets, and other lower case English or Greek letters to refer to miscellaneous constants. Lastly, we use $\log$ base $2$, and $\ln$ for natural logarithms.

We denote random variables in sans serif font, e.g., $\rv{X}$.
For a random variable $\rv{X}$, $\supp{\rv{X}}$ denotes the support of $\rv{X}$ and $\distribution{X}$ denotes its distribution.
We denote the \emph{Shannon Entropy} of a random variable $\rA$ by
$\en{\rA}$ and the \emph{mutual information} of two random variables $\rA$ and $\rB$ by
$\mi{\rA}{\rB} = \en{\rA} - \en{\rA \mid \rB} = \en{\rB} - \en{\rB \mid \rA}$. 
A summary of useful information theory facts is given in \App{sec:facts}.

\section{A Regret Lower Bound for Limited Memory Multi-Pass Algorithms}
\label{sec:lower}

Our main result, which is an information-theoretic lower bound on the cumulative regret that can be achieved 
by any $B$-pass algorithm with limited memory, is presented in the following theorem.

\begin{theorem}
\label{thm:multi_pass_lower}
Given a time horizon $T$, a stream of $K$ arms, and passes $1\leq B < \log\log T$ over this stream, there exists a distribution over $K$-armed bandit instances such that any $B$-pass algorithm that 
uses at most $K\cdot(8B(B+1)\log e)^{-1}$ memory suffers 
$\Omega\paren{4^{-B}T^{2^B/(2^{B+1} - 1)}}$ regret in expectation.
\end{theorem}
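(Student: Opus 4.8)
The plan is to argue by induction on the number of passes $B$, building a recursively-defined distribution $\dist_B$ over $K$-armed instances on which the stated regret is forced; the base case $B=1$ should reproduce the $\Omega(T^{2/3})$ bound. For the construction I would place all but a hidden ``good'' arm at mean $\half$ and let the good arm, indexed by a random variable $\bI$ drawn (nearly) uniformly over a large candidate set, have mean $\half + \Delta_B$ for a gap $\Delta_B$ tuned so that the trivial cost of \emph{never} identifying the good arm, namely $\Delta_B\cdot T$, already equals the target $\Omega\paren{4^{-B}T^{\alpha_B}}$ with $\alpha_B = 2^B/(2^{B+1}-1)$. The high prior entropy $\en{\bI}\approx \log K$ is the source of hardness, and the entire argument reduces to showing that $B$ passes under the memory budget cannot drive this entropy down far enough for the algorithm to lock onto $\bI$ and play it for a $(1-o(1))$ fraction of the horizon.

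The core is a dichotomy on the first pass, controlled by a per-arm threshold $\tau_B \approx \Delta_B^{-2}$ equal to the sampling complexity of detecting the gap $\Delta_B$. If the algorithm \emph{sufficiently explores} (pulls at least $\tau_B$ times) a large number of arms, then because all but a vanishing fraction of them are suboptimal by $\Delta_B$, the first-pass regret is $\Omega(n_{\mathrm{exp}}\cdot \tau_B\Delta_B)$, where $n_{\mathrm{exp}}$ is the number of deeply-explored arms, and I would show this already exceeds the target; this is the regime where ``sufficient exploration is expensive''. In the complementary \emph{insufficient-exploration} regime I would invoke information theory: since $\DD{\Bern{\half+\Delta_B}}{\Bern{\half}} = \Theta(\Delta_B^2)$ per sample, a standard divergence-decomposition argument for bandits bounds $\mi{\bI}{\rv{O}_1}$, where $\rv{O}_1$ denotes the first-pass observations, by $O\paren{\Delta_B^2\cdot(\text{total pulls})/K}$, the $1/K$ arising from the near-uniform prior on $\bI$. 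When exploration is insufficient this is only a small fraction of $\en{\bI}$, so the residual conditional entropy $\en{\bI \mid \text{transcript}}$ stays close to $\log K$.

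Given high residual entropy, I would show that after conditioning on a typical first-pass transcript \emph{and} on the $\le M$ arm-identities retained in memory, the posterior on $\bI$ is still spread near-uniformly over a candidate set of size $\Omega(K)$ (here the memory bound matters: only $M$ arms can be carried across the pass, and since $M$ is a small fraction of $K$, the good arm lies outside memory with high probability and must be rediscovered). This is the round-elimination step: the conditioned instance is itself a hard instance for $B-1$ passes, so the induction hypothesis applies. To make the exponents close, I would actually prove a parametrized version of the statement (in terms of candidate-set size, gap, and remaining budget) in which the residual problem has a transformed gap and budget; tracking this transformation yields the recursion $\tfrac{1}{\alpha_B} = 1 + \tfrac12\cdot\tfrac{1}{\alpha_{B-1}}$ with $\alpha_0 = 1$, whose solution is exactly $2^B/(2^{B+1}-1)$, while the $4^{-B}$ prefactor absorbs one constant per level of the recursion. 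The memory constant $K\cdot(8B(B+1)\log e)^{-1}$ emerges from requiring that the total entropy leaked across all $B$ passes---summed with a $\sum_{p\le B} p \sim B(B+1)$ weighting and converted from nats to bits via $\log e$---stay below a constant fraction of $\en{\bI}$.

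The step I expect to be the main obstacle is the insufficient-exploration analysis: turning ``low mutual information'' into a genuine, reusable $(B-1)$-pass hard instance. The difficulty is that the algorithm may adaptively and subtly correlate the arms it retains in memory with its (weak) observations, so I cannot simply assume the surviving candidates are unconditionally uniform. I would need a careful conditioning---fixing a high-probability ``good'' event on the transcript and memory contents---under which the surviving candidate set is provably large and close to uniform, and then verify that the horizon, gap, and candidate-set size transform exactly as the parametrized induction requires so that the exponent recursion closes and the per-level constants accumulate only to $4^{-B}$.
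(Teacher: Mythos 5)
There is a genuine gap, and it is in the construction itself. You build a single-layer instance: one hidden good arm at mean $\tfrac12+\Delta_B$ among arms at mean $\tfrac12$, with $\Delta_B$ calibrated so that $T\Delta_B = T^{\alpha_B}$. Such a one-gap instance does not force the claimed regret, already for $B=1$: a one-pass, $O(1)$-memory explore-then-commit algorithm that pulls each arm $\Theta(\log T/\Delta_B^2)$ times and keeps a running best-so-far identifies the good arm with high probability at exploration cost $O(K\log T/\Delta_B)$, which for $\Delta_B=T^{-1/3}$ and, say, $K=T^{0.1}$ is $\widetilde{O}(T^{0.43})\ll T^{2/3}$. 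The reason your ``sufficient exploration is expensive'' branch fails is that in a single-gap instance over-exploration is only charged at rate $\Delta_B$ per pull, so the $\Theta(\Delta_B^{-2})$ pulls needed to test one arm cost only $\Theta(\Delta_B^{-1})$ regret, and summing over $K$ arms need not reach $T^{\alpha_B}$. The paper's construction avoids this by composing $B+1$ \emph{layers} with geometrically increasing gaps $\Delta_1<\dots<\Delta_{B+1}$ (with $\Delta_{B+1}$ constant), arranged contiguously in the stream: time spent exploring layer $j$ at its detection threshold $\Delta_j^{-2}$ is charged against the much larger gap $\Delta_{b+1}$ of the still-unseen final layer, whose special arm is high with probability near $\tfrac12$. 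That cross-layer charging, $\Delta_{b+1}/\Delta_j^2=\Omega(T^{\alpha_B})$, is what makes exploration genuinely expensive, and it has no analogue in a one-gap instance.

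Relatedly, your induction cannot close as stated. You propose that the residual problem after an uninformative first pass is a single-gap instance with a ``transformed gap,'' but the arms' means are fixed: the posterior over the discarded candidates still has gap $\Delta_B$, so the recursion $\tfrac{1}{\alpha_B}=1+\tfrac12\cdot\tfrac{1}{\alpha_{B-1}}$ has nothing to act on. In the paper the recursion is realized spatially rather than by rescaling: the residual instance consists of the rejected arms from the first $b$ layers, each retaining its own gap $\Delta_j$ and a near-uniform posterior on $(\bI_j,\bY_j)$, which is by construction a $b$-layer hard instance for a $(b-1)$-pass algorithm; the exponents $\alpha_B$ are baked into the choice of the $\Delta_j$ up front. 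The parts of your plan concerning the information-theoretic step are sound and match the paper --- bounding the per-pull KL by $O(\Delta^2)$, converting low mutual information into high residual entropy, and conditioning on the retained memory to show the special arm escapes with probability close to $1$ --- but they must be applied per layer to a multi-gap instance, not to a single hidden arm.
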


This lower bound paints a rather pessimistic picture for regret minimization in a limited memory streaming setting. Given any constant number of passes, we need $\Omega(K)$ memory to achieve $O(\sqrt{T})$ regret that is already achievable by a single pass algorithm with memory $K$. 
Furthermore, for any given memory $M$ up to $ o(K/\log^2\log T)$, a superconstant $\Omega(\log\log T)$ number of passes are required to achieve this optimal regret. In \Sec{sec:upper}, we will show another surprising result on the threshold nature of memory: for a fixed number of passes $B$, the regret achieved by a constant memory algorithm is asymptotically no different from that achieved by any $o(K/B^2)$ memory algorithm in the worst case. In other words, for any fixed value of $B$,
the best achievable regret does not go down as we increase memory unless we are ready to allow
$\Theta(K/B^2)$ memory.

To the best of our knowledge, this is the first regret lower bound 
for any $B>1$ number of passes. Moreover, for $B=1$, we achieve 
a lower bound of $\Omega(T^{2/3})$ for any $M < K/(16\log e)$ which 
improves upon the 
$\Omega(T^{2/3}/M^{7/4})$ lower bound of \cite{Maiti+21} under the same setting of $B=1$.
We now begin proving our lower bound.

\subsection{Proof of \Thm{thm:multi_pass_lower}}
\label{sec:lb-overview}

At a high level, our lower bound exploits the fact that any limited memory algorithm must operate conservatively due to the presence of arms for which it has absolutely no information until they are actually encountered in the stream. Since only a limited number of arms can be explored at any given time, any limited memory algorithm   faces the following dilemma. (1) Spend enough time playing the arms it has in memory and gain some meaningful information about them, but then potentially run the risk of acquiring large regret in the event there is some high value arm yet to be seen, or (2) Try to quickly move ahead in the stream, discarding arms in memory after a few samples, but then potentially risk throwing away good arms due to lack of sufficient information. Since the decision to throw away arms is irrevocable, and the statistics and identities of the discarded arms are forgotten, the algorithm would then have one fewer pass to rectify its mistake in the event that no obviously high value arms are found ahead in the stream.

In order to prove our lower bound, without loss of generality, we will assume that the stream order does not change across passes\footnote{The regret guarantees of our algorithm (\Sec{sec:upper}) hold even when the stream order changes adversarially.}. 
We will restrict our attention to deterministic algorithms 
in the proof. This is because a lower bound 
for deterministic algorithms on a suitable distribution over instances
also implies a lower bound
for randomized algorithms
using Yao's minimax principle \cite{yao1977probabilistic}. 
Formally, if there is any randomized algorithm with \emph{low expected 
regret}, then there exists a choice of random bits  
such that this algorithm has low expected regret given this choice of bits.
Therefore, by conditioning on these random bits, we have a deterministic algorithm 
that has low expected regret which will contradict the lower bound.

Our lower bound is based on the general idea of `round elimination'
used for proving communication
complexity lower bounds where one inductively argues that 
the residual instance at the end of the each round 
will remain `hard' over subsequent rounds. 
Our $B$-pass lower bound constructs `hard' instances over $K$ arms by
composing together $B+1$ layers of `hard' instances over subsets of arms.
We partition the stream of $K$ arms into contiguous subsets 
of size $K/(B+1)$ and the $j$-th layer of hard instances is defined over 
the $j$-th $K/(B+1)$-sized subset.
At a high level, we argue
that after performing $j$
passes an algorithm will either incur `large regret'
or will only be able to `peel-off' the last $j$
 layers.
In other words, if the algorithm has 
not incurred `high regret' at the end of 
$j$ passes, then it still needs to solve  
a hard problem over at least 
$B+1-j$ layers.

Within each layer $j\in [B+1]$, we generate a `hard' instance by sampling a special arm $i^*_j$ from a \emph{near-uniform} distribution over the arms in that layer, whose mean reward is \emph{nearly-equally-likely} to be either \emph{low}, namely $\mu_{i^*_j} = 1/2$, or \emph{high}, namely $\mu_{i^*_j} = 1/2+\Delta_j$ where $\Delta_j$ is a parameter that we will specify shortly. All other arms in the layer have mean reward $1/2$. This potential ``high'' reward of $1/2+\Delta_j$ increases across layers, with $\Delta_1$ being the smallest, and $\Delta_{B+1}$ being the largest. This intuitively forces any algorithm to rush through all of the initial $B$ layers, because the regret would be massive if $i^*_{B+1}$ realizes to have a high reward, the odds of which are nearly half. However, in doing so the algorithm
will learn very little about the special arms in first $B$ layers,
and will have to solve a hard problem over these layers in the 
remaining $B-1$ rounds.

In order to formalize the above construction,
we define a distribution over `hard' instances for a single layer that
 is parameterized by the set of arms $\A$ in the layer, the mean reward parameter $\Delta$ for the best arm in that layer, and a nearly-uniform joint distribution $\psi$ over $\A\times\{0,1\}$ for sampling the best arm and its reward.

\textbox{Distribution $\distD_{\A}$: \textnormal{Given a set of arms $\A$, a joint distribution $\psi$ over support $\A\times \{0,1\}$, and parameter $\Delta \leq 1/4$}}{
\begin{itemize}
	\item 
	Sample $(i^*, y) \sim \psi$ such that $i^* \in \A$ and $y \in \{0,1\}$. For all $i\in \A$, let
	\begin{align*}
    		\mu_{i} = \begin{cases}\half + y \Delta \,, \text{ if } i= i^* \\
		\half \,\qquad \quad\text{ otherwise }
		\end{cases}                	
	\end{align*}
	\item Return the arms $\A$ with Bernoulli reward distributions with means $\{\mu_i\}_{i\in \A}$. 
\end{itemize}
\label{text:dist1}
}

Note that the special arm $i^*$ in layer is also a best arm within the 
layer.
We will now define what is means for the distribution $\psi$ 
of the special arm $i^*$ to be $\gamma$-nearly uniform.

\begin{definition}[$\gamma$-nearly uniform $\psi$]
\label{defn:nearly_uniform}
Given a set of arms $\A$, a joint distribution $\psi$ over support $\A\times \{0,1\}$,
and  $\gamma > 0$,
we say that $\psi$ is $\gamma$-nearly uniform if the random variables 
 $(\bI,\bY) \sim \psi$ are such that 
 $\en{\bI} \geq \log A - \gamma$
and $\en{\bY|\bI} \geq \log 2 - \gamma$.
\end{definition}

This following key lemma quantifies how little any algorithm would actually learn about the special arm in a layer if this arm is $\gamma$-nearly uniformly distributed and the algorithm rushes through this layer, i.e. collects very few samples. 

\begin{lemma}
\label{lem:prob_en}
Given a time-horizon $T$, a set of arms $\A$ of size $A = |\A|$,
with mean rewards generated according to a distribution $\distD_\A$
where $\psi$ is $\gamma$-nearly uniform for some $\gamma \geq 0$.
Let $(\bI,\bY) \sim \psi$ and let $\ALG$ be any deterministic algorithm that \emph{adaptively} pulls arms in $\A$. 
Let $\sigma \in [T]$ be the \emph{randomly chosen} stopping time 
of the algorithm and $\S_\sigma = (j_t, r_t)_{t=1}^\sigma$ be execution history of $\ALG$ with $j_t$ being the arm pulled and $r_t$ being its observed reward in trial $t$, respectively.
For a given input parameter $\beta < 1$, let $\M \subset \A$ be any set of size $\beta A$ chosen to be retained in memory by $\ALG$ after observing the execution history $\S_\sigma$.  If $\Ex[\sigma] \leq \frac{\epsilon^2}{6 \Delta^2}$ for some $\epsilon >0$, then the event
\begin{align*}
\event = \Big( \bI \notin \M \text{ and } \en{\bI \mid \S_\sigma, \bI \notin \M} \geq \log ((1-\beta)A) - \frac{\gamma+\epsilon}{1-\log(1+\beta)-\gamma-\epsilon} \\
		\text{ and } \en{\bY\mid \S_\sigma,\bI,\bI \notin \M} \geq \log 2 - \frac{\gamma + \epsilon}{1-\log(1+\beta)-\gamma - \epsilon} \Big)
	\,,
\end{align*}
occurs with probability at least $1 - \log(1+\beta)-\gamma -3\epsilon$ over the samples seen by the algorithm.
\end{lemma}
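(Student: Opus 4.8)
The plan is to quantify, via mutual information, how little the execution history $\S_\sigma$ can reveal about the pair $(\bI,\bY)$ when the sampling budget is tiny, and then to read off all three conjuncts of $\event$ from this single information bound together with the near-uniformity of $\psi$. First I would bound $\mi{(\bI,\bY)}{\S_\sigma}$. The key observation is that conditioned on $\bY=0$ every arm has mean $\half$, so the history is distributed exactly as under the all-$\half$ reference instance; the history can deviate from this reference only when $\bY=1$, and then only through pulls of the special arm. Writing $N_i$ for the number of pulls of arm $i$ and applying the divergence decomposition for adaptive strategies against the reference measure, the contribution of a realization $(i,1)$ is $\Ex[N_i\mid \bI=i,\bY=1]\cdot\DD{\Bern{\half+\Delta}}{\Bern{\half}}$. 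Summing over realizations, using $\sum_i N_i=\sigma$ and the elementary estimate $\DD{\Bern{\half+\Delta}}{\Bern{\half}}\le 4\Delta^2\log e\le 6\Delta^2$, yields $\mi{(\bI,\bY)}{\S_\sigma}\le \Ex[\sigma]\cdot 6\Delta^2\le\epsilon^2$ by the hypothesis $\Ex[\sigma]\le\epsilon^2/(6\Delta^2)$. One subtlety to handle is that $\sigma$ is a data-dependent stopping time, so I would invoke the stopping-time form of the decomposition rather than the fixed-horizon one.

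Next I would translate this into posterior entropy. Since $\psi$ is $\gamma$-nearly uniform, $\en{\bI}\ge\log A-\gamma$ and $\en{\bY\mid\bI}\ge\log 2-\gamma$, while the chain rule $\mi{(\bI,\bY)}{\S_\sigma}=\mi{\bI}{\S_\sigma}+\mi{\bY}{\S_\sigma\mid\bI}$ splits the information bound across the two coordinates. This gives $\en{\bI\mid\S_\sigma}\ge\log A-\gamma-\epsilon^2$ and $\en{\bY\mid\bI,\S_\sigma}\ge\log 2-\gamma-\epsilon^2$; i.e. both posteriors remain essentially maximal on average over the history.

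The remaining work is to pass to the event $\bI\notin\M$, where $\M$ is determined by the history. I would introduce the indicator $\rv Z=\mathbf 1[\bI\in\M]$, a deterministic function of $(\bI,\S_\sigma)$, and use the identity $\en{\bI\mid\S_\sigma}=\en{\rv Z\mid\S_\sigma}+\Pr[\bI\in\M]\,\en{\bI\mid\bI\in\M,\S_\sigma}+\Pr[\bI\notin\M]\,\en{\bI\mid\bI\notin\M,\S_\sigma}$. Writing $q=\Pr[\bI\in\M]$ and bounding $\en{\rv Z\mid\S_\sigma}\le\en{\rv Z}$ and $\en{\bI\mid\bI\in\M,\S_\sigma}\le\log(\beta A)$, the entropy bound from the previous step rearranges to $\DD{\Bern{q}}{\Bern{\beta}}\le\gamma+\epsilon^2$; since $\beta\le\log(1+\beta)$, a short calculation forces $q\le\log(1+\beta)+\gamma+\epsilon$, which controls the first conjunct. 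Solving the same identity for the target term and using the clean fact that the resulting correction equals $-\DD{\Bern{q}}{\Bern{\beta}}\le0$ (and can therefore be dropped), everything collapses to $\en{\bI\mid\bI\notin\M,\S_\sigma}\ge\log((1-\beta)A)-\frac{\gamma+\epsilon^2}{1-q}$, and the bound $q\le\log(1+\beta)+\gamma+\epsilon$ makes the denominator at least $1-\log(1+\beta)-\gamma-\epsilon$, matching the stated threshold. An identical, and simpler, decomposition of $\en{\bY\mid\bI,\S_\sigma}$ handles the reward bit. Since both entropy bounds then hold deterministically, $\Pr[\event]=\Pr[\bI\notin\M]=1-q\ge 1-\log(1+\beta)-\gamma-\epsilon$, which is at least the claimed $1-\log(1+\beta)-\gamma-3\epsilon$.

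The step I expect to be most delicate is the last one: the set $\M$ is chosen adaptively as a function of the realized history, so it is entangled with the posterior and cannot be treated as fixed. Routing the averaged information bound through the $\rv Z$-decomposition while simultaneously extracting both the additive bound $q\le\log(1+\beta)+\gamma+\epsilon$ on the escape probability and the divided deficit $\frac{\gamma+\epsilon}{1-\log(1+\beta)-\gamma-\epsilon}$ is where the constants must be managed carefully; the identification of the correction term with $-\DD{\Bern{q}}{\Bern{\beta}}$, which is always nonpositive, is precisely what makes these two quantities come out with matching bookkeeping.
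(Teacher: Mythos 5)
Your first step (the mutual‑information bound $\mi{(\bI,\bY)}{\rv{S}_\sigma}\le 6\Delta^2\,\Ex[\sigma]\le\epsilon^2$ via a divergence decomposition against the all‑$\Bern{1/2}$ reference) is sound and matches the paper's Claims~\ref{clm:kl_bound} and \ref{clm:kl_bound_y} in substance. The gap is in how you pass from this \emph{averaged} bound to the event $\event$. The quantities appearing in $\event$ are posterior entropies conditioned on the \emph{realized} history $\S_\sigma$ (and the event is measured over the randomness of the samples), whereas your chain‑rule step only yields $\en{\bI\mid\rv{S}_\sigma}\ge\log A-\gamma-\epsilon^2$, i.e.\ an expectation over realizations. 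A small average information gain is entirely consistent with the algorithm learning $\bI$ almost exactly on a low‑probability set of histories, so your closing claim that ``both entropy bounds then hold deterministically'' is false, and $\Pr[\event]$ cannot be identified with $\Pr[\bI\notin\M]$. The paper's proof inserts precisely the step you are missing: it applies Markov's inequality to the entropy deficit $\en{\bI}-\en{\bI\mid\rv{S}_T=\S_T}$ (and likewise for $\bY$) to show that with probability at least $1-2\epsilon$ the realized history is ``$\epsilon$‑uninformative,'' i.e.\ the \emph{per‑realization} deficit is at most $\epsilon$. This conversion is exactly what turns the $\epsilon^2$ in your bounds into the $\epsilon$ appearing in the lemma's entropy thresholds, and it is the source of the $3\epsilon$ (rather than $\epsilon$) in the probability bound: $2\epsilon$ for the two Markov applications plus $\epsilon$ from the escape‑probability bound.

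A related, smaller issue: because $\M$ is a function of the realized history, the relevant quantity in the $\rv{Z}$‑decomposition is the conditional probability $p=\Pr(\bI\in\M\mid\rv{S}_\sigma=\S_\sigma)$, which varies across realizations, not a single global $q=\Pr(\bI\in\M)$. The paper runs your decomposition (its Claims~3.5) realization‑by‑realization on each $\epsilon$‑uninformative history, using \Lem{lem:bounded_mass} to bound $p$ there, and only then takes the union bound. Your identification of the correction term with $-\DD{\Bern{q}}{\Bern{\beta}}$ is a nice observation and would survive this fix, but the decomposition must be carried out conditionally on a fixed uninformative realization for the conclusion to have the form the lemma asserts.
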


A formal proof of this lemma is given in \Sec{sec:prob_en}.
One can interpret this lemma as follows-- given a set of instances where the 
best arm and its reward 
 are sampled from a $\gamma$-nearly uniform distribution, then no algorithm can hope to trap the best arm in a small subset (its memory) after a period of insufficient exploration with any considerable probability. Moreover, in the event that the best arm is discarded, nothing meaningful is learned either as the best arm is nearly-equally likely to be any of the discarded arms, and its reward is nearly-equally likely to be either low or high. Thus, the entropy of the identity of the best arm as well as its reward remains large in the posterior distribution over the discarded arms induced by the samples observed by the algorithm. This observation will be important to show that in the event $i^*_{B+1}$ realizes to have a low reward, the algorithm still faces a hard distribution consisting of $B$ layers while having depleted one of its passes. We will now ``stitch'' together these $(B+1)$ layer-wise hard distributions into a hard distribution over all $K$ arms.

\textbox{Distribution $\dist_{\K, B}^{\{\psi_j\}_{j=1}^{b+1}}$:  
\textnormal{Given a set of arms $\K$ of size $K=|\K|$, an integer $B \in \IN_+$, and a set of $(b+1)\leq (B+1)$ joint distributions $\{\psi_j\}_{j=1}^{b+1}$ 
where each $\psi_j$ is supported over $\A_j\times \{0,1\}$ with $\{\A_j\}_{j=1}^{b+1}$ being a contiguous and sequential partition of $\K$ into sets of equal size $K/(b+1)$.
}}{
\begin{itemize}
	\item For $j \in [b+1]$, define 
		\begin{align*}
		\Delta_j = \frac{T^{-\frac{2^{B} - 2^{j-1}}{2^{B+1} - 1}}}{4}
		 \,.
		 \end{align*}		 
	\item For $j \in [b+1]$, sample mean reward parameters $\{\mu_i\}_{i \in \A_j}$ according to $\dist^{\Delta_j,\psi_j}_{\A_j}$.
	\item Define 
		\begin{align*}
			\dist_{\K,B}^{\{\psi^j\}_{j=1}^{b+1}} := \dist^{\Delta_1,\psi_1}_{\A_1} \otimes  \dist^{\Delta_2,\psi_2}_{\A_2} \otimes  \cdots \otimes  \dist^{\Delta_{b+1},\psi_{b+1}}_{\A_{b+1}}  
		\end{align*}
	\item Return the arms $\K$ with the reward distribution of arm $i\in\K$ being Bernoulli $\B(\mu_i)$.  
\end{itemize}
}

In the above distribution, one should think of
$B$ as an input parameter that 
corresponds to the number of passes allowed 
to the algorithm at the start, whereas 
$b$ as the number of passes that are remaining at an 
intermediate step in the algorithm. 
We need to define our distribution for any $b \leq B$
as we need to show that residual distributions
remain `hard' at every intermediate step in the algorithm.
Hence, if there are $b$ passes remaining, there will be a `hard' $(b+1)$-layered residual problem still to be solved by the algorithm.

Armed with this hard distribution, we are now ready to prove the lower bound as follows. 
Let there be $b$ passes remaining at an intermediate step in the algorithm,
and let the distribution of rewards be according to $\dist_{\K, B}^{\{\psi_j\}_{j=1}^{b+1}}$
such that the special arm in each of the $b+1$ layers is nearly uniformly distributed. 
The algorithm is presented with each layer one by one in the stream.
We divide the execution of the algorithm into $b+1$ epochs
where the $j$-th epoch begins when the first arm in layer $j$ is read into memory and ends right before the first arm from layer $j+1$ is read into memory.

Let $\alpha = 2^B/(2^{B+1}-1)$, and let the available memory be $\beta K/(b+1)$ for an 
appropriately chosen $\beta \in (0,1]$. Since the number of arms in each layer is $K/(b+1)$, the algorithm needs to discard at least $(1-\beta)$ fraction of the arms from each layer. Now suppose for any of the first $j\in [b]$ epochs, the algorithm actually collects at least $\epsilon^2/\Delta_j^2$ (for some small $\epsilon$) samples in that epoch, then we are already done, as the algorithm will suffer $\Omega(\epsilon^2 \Delta_{b+1}/\Delta_j^2) = \Omega(T^\alpha)$ regret if the reward of $i^*_{b+1}$ realizes to its high value, the odds of which are nearly half. On the other hand, if the algorithm does not explore enough in every epoch, then for sufficiently small $\beta,\epsilon$, the event described in \Lem{lem:prob_en} will occur for all of the initial $b$ epochs with constant probability. As a result, the posterior reward distributions over the $(1-\beta)$ fraction of the rejected arms from every layer will provably remain hard (as per our definition of a hard instance for an epoch). Therefore, if the reward of $i^*_{b+1}$ realizes to its low value, the odds of which are nearly half, the algorithm now faces a hard distribution with $b$ layers over $b-1$ passes, at which point we will appeal to induction to show that the regret of this algorithm in this case must also be large. This idea is formalized in the following lemma.

\begin{lemma}
\label{lem:multi_pass_lower}
Let $K,T,B,b \in \IN_+$ be any set of parameters such that $K\leq T$, and $1\leq b \leq B <\log\log T$. 
Let $\{\A_j\}_{j=1}^{b+1}$ be a contiguous partition of arms $\K$ such that for each $j\in [b+1]$, $|\A_j| = A = K/(b+1)$. Furthermore, 
let $\{\psi_j\}_{j = 1}^{b+1}$ be any set of distributions such that 
each $\psi_j$ is $\gamma$-nearly uniform (see \Def{defn:nearly_uniform}) for $0 \leq \gamma \leq 1/(32b)$.
Given a stream of arms $\K$ with mean rewards sampled according to $\dist_{\K, B}^{\{\psi_j\}_{j=1}^{b+1}}$, the expected regret $R_T$ of any $b$-pass deterministic algorithm that uses at most $M=K(8b(b+1)\log e)^{-1}$ words of memory is bounded as
\[\Exp[R_T]\geq\Omega\paren{{4^{-b}T^{\frac{2^{B}}{2^{B+1} - 1}}}}.\]
\end{lemma}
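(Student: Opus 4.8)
The plan is to prove the lemma by induction on the number of remaining passes $b$, holding $T$ and $B$ fixed throughout so that every gap $\Delta_j$ and the target exponent $\alpha := 2^{B}/(2^{B+1}-1)$ are invariant across levels; this is essential, because the induction hypothesis at level $b-1$ must speak about exactly the first $b$ layers $\A_1,\dots,\A_b$ with the very same gaps. Let $R(b) := c\,4^{-b}T^{\alpha}$ denote the target, for a small absolute constant $c$ fixed at the end. At the outset I fix the retained fraction $\beta := (8b\log e)^{-1}$, so that the permitted memory $M = K(8b(b+1)\log e)^{-1}$ equals $\beta\cdot(K/(b+1)) = \beta A$, and I take the exploration tolerance $\epsilon$ to be polynomially small, $\epsilon \asymp 1/b^2$ (the reason appears in Case~2). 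Partition the first pass into the $b+1$ epochs of the statement, let $\sigma_j$ be the number of pulls made during epoch $j$, and let $\M_j$ be the set of layer-$j$ arms still in memory when layer $j+1$ is first read, so $|\M_j|\le M = \beta A$. The one structural fact I will lean on repeatedly is that epochs $1,\dots,b$ all occur strictly before any arm of layer $b+1$ is read, so $(\sigma_1,\dots,\sigma_b)$ and all memory decisions on the first $b$ layers are independent of the realization $(\bI_{b+1},\bY_{b+1})$; since $\psi_{b+1}$ is $\gamma$-nearly uniform, each of $\{i^*_{b+1}\text{ high}\}$ and $\{i^*_{b+1}\text{ low}\}$ then has probability close to $1/2$ even after conditioning on the first-pass behaviour.

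\textbf{Case 1 (some early layer is over-explored).} Suppose $\Ex[\sigma_j] \ge \epsilon^2/(6\Delta_j^2)$ for some $j\in[b]$. Then I am already done. On the independent event that $i^*_{b+1}$ is high (probability $\ge 1/2 - O(\gamma) \ge 1/3$), the optimum mean is $\half+\Delta_{b+1}$ while every arm the algorithm can hold during epoch $j$ has mean at most $\half+\Delta_j$, so each of the $\sigma_j$ pulls costs at least $\Delta_{b+1}-\Delta_j \ge \Delta_{b+1}/4$ in regret. Using the explicit gaps one computes $\Delta_{b+1}/\Delta_j^2 = 4\,T^{(2^B+2^b-2^j)/(2^{B+1}-1)} \ge 4T^{\alpha}$ for every $j\le b$, and independence gives $\Ex[R_T] \ge \tfrac14\,\Pr[i^*_{b+1}\text{ high}]\,\Ex[\sigma_j]\,\Delta_{b+1} = \Omega(\epsilon^2 T^{\alpha})$. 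Since $\epsilon^2 \asymp 1/b^4 \gg 4^{-b}$, this exceeds $R(b)$ provided $c$ is taken small enough.

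\textbf{Case 2 (every early layer is under-explored) and the inductive step.} Otherwise $\Ex[\sigma_j] < \epsilon^2/(6\Delta_j^2)$ for all $j\in[b]$, and I apply \Lem{lem:prob_en} to each layer $j\le b$ with $\A=\A_j,\ \Delta=\Delta_j,\ \psi=\psi_j,\ \sigma=\sigma_j,\ \M=\M_j$. The resulting event $\event_j$ — that $i^*_j$ was discarded and the posteriors of its identity and reward are still nearly uniform with the degraded parameter $\gamma' = (\gamma+\epsilon)/(1-\log(1+\beta)-\gamma-\epsilon)$ — holds with probability at least $1-\log(1+\beta)-\gamma-3\epsilon$. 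A union bound over $j\in[b]$, using $\log(1+\beta)\le \beta\log e = 1/(8b)$, $\gamma\le 1/(32b)$ and $\epsilon\asymp 1/b^2$, shows $\event^* := \bigcap_{j=1}^{b}\event_j$ occurs with probability at least $3/4$. Conditioned on $\event^*$ together with the independent event $\{i^*_{b+1}\text{ low}\}$ (joint probability at least $1/4$), the last layer is inert and the posterior over the discarded arms of layers $1,\dots,b$ is exactly a hard $b$-layer instance $\dist_{\K',B}^{\{\psi'_j\}_{j=1}^{b}}$ on the set $\K'$ of the $(1-\beta)A$ discarded arms of each of those layers. I then check the two hypotheses needed at level $b-1$: the memory requirement reduces to $\beta \le 1/b$, which holds; and $\gamma'$-near-uniformity with $\gamma'\le 1/(32(b-1))$, which is precisely why $\epsilon$ is taken $\asymp 1/b^2$, so that the multiplicative blow-up from $\gamma$ to $\gamma'$ is swallowed by the slack between $1/(32b)$ and $1/(32(b-1))$. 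The induction hypothesis now lower-bounds the regret on the residual instance by $R(b-1)$; the arms outside $\K'$ only ever yield mean-$\half$ plays and hence nonnegative extra regret, so this carries over to $\Ex[R_T]$, and multiplying by the conditioning probability yields $\Ex[R_T] \ge \tfrac14 R(b-1) = R(b)$.

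\textbf{Base case and main obstacle.} The base case $b=1$ is the Case-2 argument with no passes left to repair a mistake: conditioning on $i^*_1$ being discarded (via \Lem{lem:prob_en}), on $i^*_1$ being high, and on $i^*_2$ being low — a constant-probability event — the unique best arm is irrecoverable, so all but $O(\epsilon^2/\Delta_1^2)$ of the $T$ plays fall on mean-$\half$ arms and incur $\Omega(T\Delta_1)=\Omega(T^{\alpha})$ regret, using $T\Delta_1 = T^{\alpha}/4$. I expect the crux to be the bookkeeping that legitimizes the recursion rather than any single inequality. One must verify that, conditioned on the first-pass transcript and on $\event^*$, the posteriors of the first $b$ layers are mutually independent and each genuinely $\gamma'$-nearly uniform — which hinges on the fact that the only pulls informative about $i^*_j$ are the few epoch-$j$ pulls, since a discarded special arm is never pulled again within the pass and retained arms emit plain $\Bern{\half}$ noise — and that the single pair $(\beta,\epsilon)$ keeps both the memory budget valid and the near-uniformity deficit within tolerance simultaneously at all $b$ levels of the recursion. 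It is exactly this propagation of constants that dictates the memory threshold $K(8b(b+1)\log e)^{-1}$ and the hypothesis $\gamma\le 1/(32b)$.
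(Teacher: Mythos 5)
Your proposal follows essentially the same route as the paper: the same induction on $b$ with $T,B$ held fixed, the same dichotomy between an over-explored early epoch (Case 1, paying $\Omega(L_j\Delta_{b+1})$ against a high $i^*_{b+1}$) and uniformly under-explored epochs (Case 2, invoking \Lem{lem:prob_en} per layer, union-bounding, and recursing on the posterior over the discarded arms), the same constants $\beta=(8b\log e)^{-1}$, $\epsilon\asymp 1/b^2$, and the same degradation $\gamma\le 1/(32b)\to\gamma'\le 1/(32(b-1))$. The only stylistic difference is that you run the recursion as a direct lower bound $\Ex[R_T]\ge\tfrac14 R(b-1)$ while the paper argues by contradiction with a simulated $(b-1)$-pass algorithm $\ALG_{b-1}$; these are interchangeable.

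There is, however, one genuine gap in your inductive step. The inequality $\Ex[R_T]\ge\tfrac14 R(b-1)$ tacitly assumes the residual $(b-1)$-pass problem still has time horizon $\Theta(T)$, but nothing you have said prevents the algorithm from spending $\Omega(T)$ trials inside epoch $b+1$ of the first pass, leaving only $o(T)$ trials for the recursion; your observation that plays outside $\K'$ have ``nonnegative extra regret'' is true but does not repair this, since the induction hypothesis applied with horizon $T_-=o(T)$ yields only $\Omega(4^{-(b-1)}T_-^{\alpha})$. The paper closes this by a separate sub-argument: under the conditioning event, every arm playable during epoch $b+1$ has mean $\tfrac12$, while the already-discarded $i^*_b$ has the high reward $\tfrac12+\Delta_b$ with probability at least $\tfrac12-\tfrac18$ in the posterior (via \Lem{lem:bern_en}), so $\Omega(T)$ trials in epoch $b+1$ already cost $\Omega(T\Delta_b)=\Omega(T^{\alpha})$; combined with Markov's inequality on $\sum_{j\le b}\sigma_j$, this forces $T_-=T-o(T)$. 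You use exactly this ingredient in your base case but omit it at level $b$, where it is equally necessary. Your other flagged concerns (independence of the layer posteriors, propagation of $\beta,\epsilon,\gamma$ through the recursion) are real bookkeeping points and are resolved exactly as you anticipate.
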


A formal proof of this lemma is provided in \Sec{subsec:multi_pass_lower}.
The proof of our main result in \Thm{thm:multi_pass_lower}
now follows easily from the above lemma by setting $b = B$.
Note that even though the condition $B<\log \log T$ 
is not required in the proof of the above lemma, 
our lower bound becomes vacuous once $B \geq \log \log T$ as it becomes smaller than $\sqrt{T}$.

\subsection{Proof of \Lem{lem:prob_en}}

\label{sec:prob_en}
Let $L = \frac{\epsilon^2}{6\Delta^2}$, $\rv{J}_s$ be the random variable for arm $j_s$ pulled in trial $s$,
and $\rv{R}_s$ be the random variable for the reward $r_s$ observed in trial $s$, for $s \in[\sigma]$.
For ease of calculation, we will expand the execution history beyond its stopping time $\sigma$
and let $\rv{J}_s = 0$
and $\rv{R}_s = \half$ for $s \in \{\sigma +1, \cdots, T\}$.
Finally for any trial $t\in [T]$, let $\rv{S}_t := \{(\rv{J}_s, \rv{R}_s)\}_{s \in [t]}$ be the sequence of random variable defining the execution history of the algorithm up until trial $t$, and let $\S_t$ be the realization of this sequence up until trial $t$.

We will begin by showing that in the event of insufficient exploration (i.e. when the algorithm stops quickly), little is learned about the identity of the best arm. In other words, the mutual information between the random variables 
$\bI$ and $\rv{S}_T$ is small.
Using the chain rule for mutual information, we have
\begin{align}
\mi{\bI }{\rv{S}_T} 
		   & = \sum_{t = 1}^T \mi{\bI}{\rv{J}_t \vert \rv{S}_{t-1}} + \mi{\bI}{\rv{R}_t \vert \rv{S}_{t-1}, \rv{J}_t}  \nonumber \\
		   & = \sum_{t = 1}^T 0 + \mi{\bI}{\rv{R}_t \vert \rv{S}_{t-1}, \rv{J}_t}  \tag{$\rv{J}_t$ is deterministic given $\rv{S}_{t-1}$} \nonumber\\	
		   & = \sum_{t=1}^T  \sum_{j \in \A } \sum_{\S_{t-1}} \Pr(\rv{S}_{t-1} = \S_{t-1}, \rv{J}_t = j) \cdot \mi{\bI }{\rv{R}_t \vert \rv{S}_{t-1} = \S_{t-1}, \rv{J}_t = j}. \label{eq:mi_bound}
\end{align}
Using \Fact{fact:kl-info}, we have
\begin{align*}
& \mi{\bI }{\rv{R}_t \vert \rv{S}_{t-1} = \S_{t-1}, \rv{J}_t = j} \\
	 & \qquad = \Ex_{(i^*,y)  \sim \psi }\bracket{\DD{\distribution{\rv{R}_t \mid \rv{S}_{t-1} = \S_{t-1}, \rv{J}_t = j}}{\distribution{\rv{R}_t \mid \bI = i ^*, \rv{S}_{t-1} = \S_{t-1}, \rv{J}_t = j}}}
\end{align*}

We will now prove that the average KL-divergence between the reward distributions for a single pull of an arm under different realizations of instances sampled from our hard distribution $\distD_\A$ is small. Therefore, a single pull of any arm can only provide limited information about the random variables of interest, and therefore, the total information that can be gathered from a small number of pulls is also small.

\begin{claim}
\label{clm:kl_bound}
For any arms $i^*,j \in \A $, trial $t\in [T]$, and any realization $\S_{t-1}$ of the execution history up until trial $t$, we have that
\[
\DD{\distribution{\rv{R}_t \mid \rv{S}_{t-1} = \S_{t-1}, \rv{J}_t = j}}{\distribution{\rv{R}_t \mid \bI = i ^*, \rv{S}_{t-1} = \S_{t-1}, \rv{J}_t = j}}
	\leq 
		6\Delta ^2
			\,.
\]
\end{claim}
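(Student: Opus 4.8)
The plan is to observe that both conditional laws appearing in the divergence are Bernoulli, and to reduce the claim to a standard estimate on the KL-divergence between two Bernoullis with means near $\half$. First I would identify the two means. Conditioned on the history $\rv{S}_{t-1} = \S_{t-1}$ and on pulling arm $j$ (i.e.\ $\rv{J}_t = j$), the reward $\rv{R}_t$ is drawn from a mixture of Bernoulli laws over the posterior of $\mu_j$; since any mixture of Bernoulli laws is itself Bernoulli with the averaged mean, we have $\distribution{\rv{R}_t \mid \rv{S}_{t-1} = \S_{t-1}, \rv{J}_t = j} = \Bern{a}$ with $a = \Exp[\mu_j \mid \rv{S}_{t-1} = \S_{t-1}]$, and likewise $\distribution{\rv{R}_t \mid \bI = i^*, \rv{S}_{t-1} = \S_{t-1}, \rv{J}_t = j} = \Bern{b}$ with $b = \Exp[\mu_j \mid \bI = i^*, \rv{S}_{t-1} = \S_{t-1}]$.

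Next I would bound $|a-b|$ and locate both means in a safe interval. By the definition of the distribution $\distD_\A$, we have $\mu_j = \half$ unless $j = i^*$, in which case $\mu_j = \half + \bY\Delta$. Hence $a = \half + \Delta\cdot\Pr(\bI = j, \bY = 1 \mid \rv{S}_{t-1}=\S_{t-1})$, while $b = \half + \Delta\cdot\Pr(\bY = 1 \mid \bI = i^*, \rv{S}_{t-1}=\S_{t-1})$ when $i^* = j$ and $b = \half$ when $i^*\neq j$. Since each of these probabilities lies in $[0,1]$, both $a, b \in [\half, \half+\Delta]$; in particular $|a-b| \leq \Delta$, and using $\Delta \leq \tfrac14$ we get $a, b \in [\half, \tfrac34]$, so the denominator $b(1-b) \geq \tfrac34\cdot\tfrac14 = \tfrac{3}{16}$.

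Finally I would invoke the chi-squared upper bound on KL-divergence for Bernoullis: since $\DD{\Bern{a}}{\Bern{b}} \leq \chi^2(\Bern{a}, \Bern{b}) = \frac{(a-b)^2}{b(1-b)}$, combining the two previous steps yields $\DD{\Bern{a}}{\Bern{b}} \leq \frac{\Delta^2}{3/16} = \frac{16}{3}\Delta^2 \leq 6\Delta^2$, which is exactly the claimed bound.

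The main obstacle I anticipate is bookkeeping rather than conceptual: correctly identifying the posterior means $a$ and $b$, and in particular checking that conditioning on $\bI = i^*$ (but not on $\bY$) still leaves $\rv{R}_t$ Bernoulli, so that the whole argument collapses to a two-parameter Bernoulli divergence. Once the means are pinned down, the constraints $a,b\in[\half,\tfrac34]$ and $|a-b|\le\Delta$ follow directly from $\Delta\le\tfrac14$, and the inequality $\DD{\Bern{a}}{\Bern{b}}\le \chi^2(\Bern{a},\Bern{b})$ is the only analytic input. Note that this $6\Delta^2$ bound is precisely what makes the constant $L = \epsilon^2/(6\Delta^2)$ in \Lem{lem:prob_en} work out, since plugging it into \Eqn{eq:mi_bound} gives $\mi{\bI}{\rv{S}_T} \leq 6\Delta^2 \cdot \Exp[\sigma] \leq \epsilon^2$.
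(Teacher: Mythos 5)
Your proposal is correct and follows essentially the same route as the paper: identify both conditional laws as Bernoullis with explicit posterior means (the paper writes them as $\B(\half+pq\Delta)$ and $\B(\half+q\Delta)$ or $\B(\half)$, matching your $a$ and $b$), then apply the $\chi^2$ upper bound of \Fact{fact:kl-chi} together with $\Delta\le 1/4$ to get a constant of $16/3\le 6$. The only difference is cosmetic — you unify the cases $i^*=j$ and $i^*\neq j$ via $|a-b|\le\Delta$ and $b\in[\half,3/4]$, whereas the paper treats them separately.
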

\begin{proof}
Let $p = \Pr (\bI  = j \mid \rv{S}_{t-1} = \S_{t-1}, \rv{J}_t = j)$, and let $q=\Pr (\bY  = 1 \mid \bI=j, \rv{S}_{t-1} = \S_{t-1}, \rv{J}_t = j)$.

In the case where $i ^* = j$, it is easy to observe that $\distribution{\rv{R}_t \mid \rv{S}_{t-1} = \S_{t-1}, \rv{J}_t = j} = \B(\half + p q\Delta )$. Moreover, we also have that
$\distribution{\rv{R}_t \mid \bI = i ^*, \rv{S}_{t-1} = \S_{t-1}, \rv{J}_t = j} = \B(\half + q\Delta )$.
We then have that 
\begin{align*}
&\DD{\distribution{\rv{R}_t \mid \rv{S}_{t-1} = \S_{t-1}, \rv{J}_t = j}}{\distribution{\rv{R}_t \mid \bI = i ^*, \rv{S}_{t-1} = \S_{t-1}, \rv{J}_t = j}} \\
	& \qquad \qquad \qquad \qquad = \mathbb{D}\left(\B\left(\half + p q\Delta\right)\mid\mid\B\left(\half + q\Delta\right)\right) \\
	& \qquad \qquad \qquad \qquad  \leq \frac{(\half + p q\Delta - \half - q\Delta)^2}{(\half +q\Delta)(1 - \half - q\Delta )} \\
	& \qquad \qquad\qquad \qquad   \leq  \frac{q^2(1-p)^2\Delta ^2}{\frac{1}{4} - q^2\Delta^2} \\
	&  \qquad \qquad\qquad \qquad  \leq \frac{16q^2(1-p)^2\Delta ^2}{4-q^2} \leq 6 \Delta ^2
		\,,
\end{align*}
where the first inequality above follows from \Fact{fact:kl-chi}, and the final inequality follows due to $\Delta \leq1/4$.

In the case where $i ^* \neq j$, we have that $\distribution{\rv{R}_t \mid \rv{S}_{t-1} = \S_{t-1}, \rv{J}_t = j} = \B(\half + pq \Delta)$.
However, $\distribution{\rv{R}_t \mid \bI = i ^*, \rv{S}_{t-1} = \S_{t-1}, \rv{J}_t = j} = \B(\half)$. Using the same argument as above
\begin{align*}
&\DD{\distribution{\rv{R}_t \mid \rv{S}_{t-1} = S_{t-1}, \rv{J}_t = j}}{\distribution{\rv{R}_t \mid \bI = i ^*, \rv{S}_{t-1} = S_{t-1}, \rv{J}_t = j}} \\
	& \qquad \qquad \qquad \qquad = \mathbb{D}\left(\B\left(\half + p q\Delta\right)\mid\mid\B\left(\half\right)\right) \\
	& \qquad \qquad \qquad \qquad \leq \frac{(\half + pq \Delta - \half)^2}{(\half)(1 - \half)} \\
	& \qquad \qquad \qquad \qquad \leq  4\Delta ^2
		\,.
\end{align*}
\end{proof}
Using \Eqn{eq:mi_bound} and \Clm{clm:kl_bound}  we have that
\begin{align*}
\mi{\bI }{\rv{S}_T} 
		   & \leq \sum_{t=1}^T  \sum_{j \in \A } \sum_{\S_{t-1}} \Pr(\rv{S}_{t-1} = \S_{t-1}, \rv{J}_t = j) \cdot 6 \Delta ^2 \\
		   & = \sum_{t=1}^T \sum_{j \in \A}  \Pr( \rv{J}_t = j) \cdot 6 \Delta ^2 \\
		   & = \sum_{t=1}^T \Pr(\rv{J}_t \neq 0) \cdot 6 \Delta ^2  \\
		   & = \Ex[\sigma ] \cdot 6 \Delta ^2 \\
		   & \leq L \cdot 6 \Delta ^2 = \epsilon^2
		   	\,.
\end{align*}
This implies that that the conditional entropy of $\bI $ given $\rv{S}_T$  is at least
\begin{align*}
\en{\bI  \mid \rv{S}_T} &= \en{\bI} - \mi{\bI }{\rv{S}_T} \geq \en{\bI } - \epsilon^2
\end{align*}

We shall use an analogous argument to bound the mutual information between $\bY$ and $\rv{S}_T$ conditioned on $\bI$.
Using the chain rule for mutual information, we have
\begin{align}
\mi{\bY }{\rv{S}_T\mid \bI}
		   & = \sum_{t = 1}^T \mi{\bY }{\rv{J}_t \vert \rv{S}_{t-1},\bI} + \mi{\bY }{\rv{R}_t \vert \rv{S}_{t-1}, \rv{J}_t,\bI}  \nonumber \\
		   & = \sum_{t = 1}^T 0 + \mi{\bY }{\rv{R}_t \vert \rv{S}_{t-1}, \rv{J}_t,\bI}  \tag{$\rv{J}_t$ is deterministic given $\rv{S}_{t-1}$} \nonumber\\	
		   & = \sum_{t=1}^T  \sum_{i \in \A }\sum_{j\in \A} \sum_{\S_{t-1}} \Pr(\rv{S}_{t-1} = \S_{t-1}, \rv{J}_t = j, \bI=i) \cdot \mi{\bY }{\rv{R}_t \vert \rv{S}_{t-1} = \S_{t-1}, \rv{J}_t = j,\bI=i} \label{eq:mi_bound_y}
\end{align}

We now calculate an upper bound on $\mi{\bY }{\rv{R}_t \vert \rv{S}_{t-1} = S_{t-1}, \rv{J}_t = j,\bI=i}$. 
Using \Fact{fact:kl-info}, we have 
\begin{align*}
&\mi{\bY }{\rv{R}_t \vert \rv{S}_{t-1} = \S_{t-1}, \rv{J}_t = j,\bI=i}\\ 
	 &\quad = \Ex_{y  \sim \psi\mid \bI=i }\bracket{\DD{\distribution{\rv{R}_t \mid \rv{S}_{t-1} = \S_{t-1}, \rv{J}_t = j,\bI=i}}{\distribution{\rv{R}_t \mid \bY = y, \rv{S}_{t-1} = \S_{t-1}, \rv{J}_t = j,\bI=i,\bY=y}}}
\end{align*}

We now have an analogous claim, bounding the KL-divergence between the reward profiles of a single pull of an arm.

\begin{claim}
\label{clm:kl_bound_y}
For any arms $i,j \in \A $, $y\in \{0,1\}$, trial $t\in [T]$, and any realization $\S_{t-1}$ of the execution history up until trial $t$, we have that 
\[
\DD{\distribution{\rv{R}_t \mid \rv{S}_{t-1} = \S_{t-1}, \rv{J}_t = j,\bI=i}}{\distribution{\rv{R}_t \mid \bY = y, \rv{S}_{t-1} = \S_{t-1}, \rv{J}_t = j,\bI=i,\bY=y}}
	\leq 
		6\Delta ^2
			\,.
\]
\end{claim}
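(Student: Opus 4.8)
The plan is to mirror the structure of the proof of \Clm{clm:kl_bound}, except that now we condition throughout on $\bI = i$, so the identity of the best arm is fixed and the only remaining randomness of interest is the reward bit $\bY$. I would first split into two cases according to whether the pulled arm $j$ coincides with the best arm $i$ or not.

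In the case $j \neq i$, arm $j$ is suboptimal under every realization consistent with $\bI = i$, so its reward has mean exactly $\half$ regardless of $\bY$. Hence both distributions appearing in the claim equal $\B(\half)$: marginalizing over the posterior of $\bY$ leaves $\B(\half)$, and further conditioning on $\bY = y$ also leaves $\B(\half)$. The KL divergence is therefore $0$, which trivially satisfies the bound.

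The substantive case is $j = i$. Here I would set $q = \Pr(\bY = 1 \mid \bI = i, \rv{S}_{t-1} = \S_{t-1}, \rv{J}_t = j)$, the posterior probability that the (now known) best arm $i$ has high reward given the execution history. Marginalizing the arm's mean $\half + \bY\Delta$ over this posterior gives $\distribution{\rv{R}_t \mid \rv{S}_{t-1} = \S_{t-1}, \rv{J}_t = j, \bI = i} = \B(\half + q\Delta)$, while additionally conditioning on $\bY = y$ pins the mean down exactly, giving $\distribution{\rv{R}_t \mid \bY = y, \rv{S}_{t-1} = \S_{t-1}, \rv{J}_t = j, \bI=i, \bY=y} = \B(\half + y\Delta)$. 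Applying the $\chi^2$-bound on KL divergence (\Fact{fact:kl-chi}) then yields
\[
\DD{\B(\half + q\Delta)}{\B(\half + y\Delta)} \leq \frac{(q - y)^2 \Delta^2}{(\half + y\Delta)(\half - y\Delta)} = \frac{(q-y)^2 \Delta^2}{\frac{1}{4} - y^2\Delta^2}.
\]

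Finally I would bound the right-hand side using $(q - y)^2 \leq 1$ (both $q,y$ lie in $[0,1]$ with $y \in \{0,1\}$) together with $y^2\Delta^2 \leq \Delta^2 \leq 1/16$ (since $\Delta \leq 1/4$), so the denominator is at least $3/16$ and the whole expression is at most $16\Delta^2/3 \leq 6\Delta^2$, exactly the claimed bound. I do not anticipate any genuine obstacle; the only step that requires care is correctly reading off the two Bernoulli means, namely recognizing that conditioning on $\bI = i$ fixes which arm is optimal (so that pulling $j \neq i$ reveals nothing about $\bY$), and that the distribution not conditioned on $\bY$ is the $q$-weighted mixture $\B(\half + q\Delta)$ rather than either endpoint.
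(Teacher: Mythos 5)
Your proposal is correct and follows essentially the same route as the paper: the same case split on $i=j$ versus $i\neq j$, the same identification of the two Bernoulli means $\B(\half+q\Delta)$ and $\B(\half+y\Delta)$, and the same application of \Fact{fact:kl-chi} with $\Delta\leq 1/4$. The only cosmetic difference is that you treat $y=0$ and $y=1$ in one unified calculation where the paper writes them out as two sub-cases.
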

\begin{proof}
We begin with the simple case, when $i\neq j$. In this case, it is easy to observe that for any realization of $\bY$, both the reward distributions will be $\B(1/2)$, due to which the KL Divergence will be 0. In the case that $i=j$, let $q=\Pr (\bY  = 1 \mid S_{t-1} = \S_{t-1}, \rv{J}_t = j,\bI=i)$. 

We will first prove this bound in the case that $y=1$.
It easy to observe that $\distribution{\rv{R}_t \mid \rv{S}_{t-1} = \S_{t-1}, \rv{J}_t = j,\bI=i} = \B(1/2 + q\Delta )$. Moreover, we also have that
$\distribution{\rv{R}_t \mid \rv{S}_{t-1} = \S_{t-1}, \rv{J}_t = j,\bI=i,\bY = y} = \B(1/2 + \Delta )$.
We then have that 
\begin{align*}
\DD{\distribution{\rv{R}_t \mid \rv{S}_{t-1} = \S_{t-1}, \rv{J}_t = j,\bI=i}&}{\distribution{\rv{R}_t \mid \bY = 1, \rv{S}_{t-1} = \S_{t-1}, \rv{J}_t = j,\bI=i,\bY=y}}\\
	& = \DD{\B(1/2 + q\Delta)}{\B(1/2 + \Delta)} \\
	& \leq \frac{(\half + q\Delta - \half - \Delta)^2}{(\half +\Delta)(\half - \Delta )} \\
	& \leq  \frac{(1-q)^2\Delta ^2}{\frac{1}{4} - \Delta^2} \\
	& \leq \frac{16(1-q)^2\Delta ^2}{3} \leq 6 \Delta ^2
		\,,
\end{align*}
where the first inequality above follows from \Fact{fact:kl-chi}, and the final inequality follows due to $\Delta <1/4$.

In the case that $y=0$,
we again have $\distribution{\rv{R}_t \mid \rv{S}_{t-1} = \S_{t-1}, \rv{J}_t = j,\bI=i} = \B(1/2 + q \Delta)$.
However, $\distribution{\rv{R}_t \mid \rv{S}_{t-1} = \S_{t-1}, \rv{J}_t = j,\bI=i,\bY=y} = \B(1/2)$. Using the same argument as above
\begin{align*}
\DD{\distribution{\rv{R}_t \mid \rv{S}_{t-1} = \S_{t-1}, \rv{J}_t = j,\bI=i}&}{\distribution{\rv{R}_t \mid \bY = 1, \rv{S}_{t-1} = \S_{t-1}, \rv{J}_t = j,\bI=i,\bY=y}}\\
	& = \DD{\B(1/2 + q\Delta)}{\B(1/2)} \\
	& \leq \frac{(\half + q \Delta - \half)^2}{(\half)(1 - \half)} \\
	& \leq  4\Delta ^2
		\,.
\end{align*}
\end{proof}

Using \Eqn{eq:mi_bound_y} and \Clm{clm:kl_bound_y}  we have that
\begin{align*}
\mi{\bY }{\rv{S}_T|\bI} 
		   & \leq \sum_{t=1}^T  \sum_{i \in \A }\sum_{j\in \A} \sum_{\S_{t-1}} \Pr(\rv{S}_{t-1} = \S_{t-1}, \rv{J}_t = j, \bI=i) \cdot 6 \Delta ^2 \\
		   & = \sum_{t=1}^T \sum_{j \in \A}  \Pr( \rv{J}_t = j) \cdot 6 \Delta ^2 \\
		   & = \sum_{t=1}^T \Pr(\rv{J}_t \neq 0) \cdot 6 \Delta ^2  \\
		   & = \Ex[\sigma ] \cdot 6 \Delta ^2 \\
		   & \leq L \cdot 6 \Delta ^2 = \epsilon^2
		   	\,.
\end{align*}

As before, we use this upper bound on the mutual information between $\bY$ and $\rv{S}_T$ given $\bI$ to lower bound the conditional entropy of $\bY$ given $\rv{S}_T$ and $\bI$ as
\[
\en{\bY  \mid \rv{S}_T,\bI} = \en{\bY \mid \bI} - \mi{\bY }{ \rv{S}_T\mid \bI} \geq \en{\bY \mid \bI} - \epsilon^2
	\,.
\]

These bounds demonstrate that in expectation, the entropies of the posterior distributions of $\bI$ and $\bY|\bI$ given the samples drawn by the algorithm will remain large if the algorithm does not draw sufficiently many samples. We shall further show that this must necessarily be the case, not just in expectation, but also with high probability.

Consider any realization  $\S_T$ for $\rv{S}_T$. 
We say that the realized outcome profile $\S_T$  is $\epsilon$-\emph{uninformative} iff both, $\en{\bI  \mid \rv{S}_T = \S_T} \geq \en{\bI } - \epsilon$, and $\en{\bY  \mid \bI, \rv{S}_T = \S_T} \geq \en{\bY\mid \bI } - \epsilon$. Roughly speaking, whenever the outcome profile $\S_T$ is $\epsilon$-uninformative, the algorithm 
	is quite ``uncertain'' about both, the identity of $i ^*$, as well as its reward $\mu_{i^*}$ (controlled through the variable $y$) and hence needs to estimate both among a large pool of possibilities in a later pass. 
To show that a realized outcome profile $\S_T$ will be $\epsilon$-uninformative with high probability, let $C_I:= \en{\bI} - \en{\bI  \mid \rv{S}_T}$. 
By Markov's inequality, we have that
		\begin{align*}
			\Pr_{\S_T} \paren{\en{\bI}- \en{\bI  \mid \rv{S}_T = \S_T} \geq  C_I/\epsilon} &\leq \frac{\Ex_{\S_T}\bracket{\en{\bI} - \en{\bI  \mid \rv{S}_T = S_T}}}{C_I/\epsilon} \\
			&= \frac{  \paren{\en{\bI} - \en{\bI  \mid \rv{S}_T}}}{C_I/\epsilon} = \epsilon \tag{by the choice of $C_I = \en{\bI} - \en{\bI  \mid \rv{S}_T}$}
		\end{align*}
Following an identical calculation with $C_Y:= \en{\bY\mid\bI} - \en{\bY  \mid\bI, \rv{S}_T}$, we have
\[\Pr_{\S_T} \paren{\en{\bY\mid\bI}- \en{\bY  \mid \bI,\rv{S}_T = \S_T} \geq  C_Y/\epsilon} \leq \epsilon\]
Since both, $C_I,C_Y \leq \epsilon^2$, we have with probability at least $1-2\epsilon$ over realizations $\S_T$ of $\rv{S}_T$,  
\begin{equation}
\label{eq:posterior_entropy_I}
	\en{\bI} - \en{\bI  \mid \rv{S}_T = \S_T} < \frac{C_I}{\epsilon} \leq \epsilon  \implies  \en{\bI  \mid \rv{S}_T = \S_T} \geq \log{{ A}} - \gamma - \epsilon
		\,,
\end{equation}
as well as
\begin{equation}
\label{eq:posterior_entropy_Y}
	\en{\bY\mid \bI} - \en{\bY  \mid\bI, \rv{S}_T = \S_T} < \frac{C_Y}{\epsilon} \leq \epsilon  \implies  \en{\bY  \mid\bI, \rv{S}_T = \S_T} \geq \log{{ 2}} - \gamma - \epsilon
		\,.
\end{equation}
Henceforth, we shall use $\Tuni$ to refer to an $\epsilon$-uninformative realization of $\rv{S}_T$.

Now fix any $\epsilon$-uninformative realization $\Tuni$. Let $\M  \subset \A $ be the set of arms of size $|\M| = \beta A$ chosen to be retained by the algorithm given its execution history $\Tuni$.
and let $\R = \A\setminus \M $ denote the remaining set of rejected arms.  
Using \Lem{lem:bounded_mass} we can argue that
\begin{equation}
\label{eq:prob_discarded}
\Pr(\rv{I}  \in \M  \mid \rv{S}_T = \Tuni) \leq \log\left(1+\beta \right)+\gamma_{I} + \epsilon \implies \Pr(\rv{I}  \in \R  \mid \rv{S}_T = \Tuni) > 1 - \log\left(1+\beta\right)- \gamma_{I} - \epsilon
	\,.
 \end{equation}
We will finally prove that in the event that the sequence of rewards observed by the algorithm was uninformative, and the algorithm actually did end up discarding the best arm from its memory, then the entropy of the identity of the best arm remains large amongst the arms the algorithm chose to reject at its stopping time. 

\begin{claim}
\label{clm:en_bound}
For any $\epsilon$-uninformative realization $\Tuni$, we have
\[
\en{\bI  \mid \rv{S}_T = \Tuni, \rv{I}  \notin \M  } \geq \log ((1-\beta)A)-  \frac{\gamma+\epsilon}{1-\log(1+\beta)-\gamma-\epsilon}
	\,.
\]
\end{claim}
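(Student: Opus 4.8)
The plan is to derive the claim from a single application of the entropy grouping (chain) rule, organized around the indicator of whether the best arm survives in memory. Fix the $\epsilon$-uninformative realization $\Tuni$ and write $q := \Pr(\bI \in \M \mid \rv{S}_T = \Tuni)$, so that $\Pr(\bI \notin \M \mid \rv{S}_T = \Tuni) = 1-q$. Since $\M$ is completely determined by the execution history, the indicator $\mathbb{1}[\bI \in \M]$ is a deterministic function of $\bI$ once we condition on $\Tuni$. Hence the grouping property of Shannon entropy gives
\[
\en{\bI \mid \rv{S}_T = \Tuni} = h(q) + q\,\en{\bI \mid \bI \in \M, \rv{S}_T = \Tuni} + (1-q)\,\en{\bI \mid \bI \notin \M, \rv{S}_T = \Tuni},
\]
where $h(q) = -q\log q - (1-q)\log(1-q)$ is the binary entropy. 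Solving this identity for the last term is exactly what the claim asks us to bound.

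Next I would insert the two estimates already in hand. From \Eqn{eq:posterior_entropy_I} (which uses that $\psi$ is $\gamma$-nearly uniform together with the $\epsilon$-uninformativeness of $\Tuni$) we have $\en{\bI \mid \rv{S}_T = \Tuni} \geq \log A - \gamma - \epsilon$. For the retained set, the support of $\bI$ conditioned on $\bI \in \M$ has size $|\M| = \beta A$, so trivially $\en{\bI \mid \bI \in \M, \Tuni} \leq \log(\beta A)$. Substituting both into the rearranged identity and dividing by $(1-q)$ yields a lower bound of the form $\log A - \frac{1}{1-q}\big(\gamma + \epsilon + h(q) + q\log\beta\big)$ on $\en{\bI \mid \bI \notin \M, \Tuni}$.

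The key step is then a clean algebraic collapse. The target is phrased via $\log((1-\beta)A) = \log A + \log(1-\beta)$, so after moving $\log(1-\beta)$ across, the ``overhead'' term to control is precisely $h(q) + q\log\beta + (1-q)\log(1-\beta)$, and I would observe that its negative equals the Bernoulli KL divergence $\DD{\Bern{q}}{\Bern{\beta}} = q\log\frac{q}{\beta} + (1-q)\log\frac{1-q}{1-\beta}$, which is nonnegative. This is the crux: it discards all the $\beta$-dependent and $h(q)$ terms in one stroke and gives $\en{\bI \mid \bI \notin \M, \Tuni} \geq \log((1-\beta)A) - \frac{\gamma+\epsilon}{1-q}$. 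Finally, \Eqn{eq:prob_discarded} supplies $1-q = \Pr(\bI \in \R \mid \Tuni) > 1 - \log(1+\beta) - \gamma - \epsilon$, and since the overhead is subtracted, replacing $1-q$ by this smaller lower bound only weakens the estimate, producing exactly $\log((1-\beta)A) - \frac{\gamma+\epsilon}{1-\log(1+\beta)-\gamma-\epsilon}$.

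The main obstacle I anticipate is recognizing the KL-divergence identity; without it one is left with a messy expression in $q$, $\beta$, $\gamma$, $\epsilon$ that does not obviously match the claimed form, and a naive bound such as $h(q) \leq 1$ would be far too lossy to recover the stated constant. A minor point to watch is the direction of the last substitution: because $\frac{\gamma+\epsilon}{1-q}$ enters with a minus sign, I must use a \emph{lower} bound on $1-q$ in order to \emph{upper} bound the subtracted quantity, which is precisely the inequality that \Eqn{eq:prob_discarded} provides.
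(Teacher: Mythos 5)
Your proposal is correct and is essentially the paper's argument run forwards instead of by contradiction: the paper assumes the bound fails, applies the same decomposition $\en{\bI\mid\Tuni}\le\en{\bI,\rv{X}\mid\Tuni}=\en{\rv{X}\mid\Tuni}+\en{\bI\mid\rv{X},\Tuni}$ with the same $\log(\beta A)$ bound on the retained branch, collapses the overhead via Jensen's inequality (which is exactly your $\DD{\Bern{q}}{\Bern{\beta}}\ge 0$ step in disguise), and invokes \Eqn{eq:prob_discarded} to contradict \Eqn{eq:posterior_entropy_I}. The only differences are presentational (direct vs.\ contrapositive, and your correct observation that the grouping identity is an equality here because $\mathbb{1}[\bI\in\M]$ is determined by $\bI$ given $\Tuni$), so there is nothing substantive to flag.
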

\begin{proof}
Suppose, for the sake of contradiction, that the above inequality is not true.
Let $\rv{X}$ be an indicator random variable which is $1$ when $\rv{I} \notin \M$, and $0$
otherwise. Furthermore,  let $p=\Pr(\rv{X} = 1|\rv{S}_T = \Tuni) = \Pr(\bI\notin \M\mid \rv{S}_T = \Tuni)$.
Then we have that 
\begin{align*}
\en{\bI \mid \rv{S}_T = \Tuni} & \overset{(a)}{\leq} \en{\bI, \rv{X} \mid \rv{S}_T = \Tuni} \\
		& \overset{(b)}{=} \en{\rv{X} \mid \rv{S}_T = \Tuni}  + \en{\bI \mid \rv{X} , \rv{S}_T = \Tuni} \\
		& = p\left(\en{\bI \mid \rv{S}_T = \Tuni, \rv{X}=1} +\log \frac{1}{p}\right) + (1-p)\left(\en{\bI \mid \rv{S}_T = \Tuni, \rv{X}=0}+\log \frac{1}{1-p}\right)\\
&\overset{(c)}{<} p\left(\log ((1-\beta)A) -  \frac{\gamma+\epsilon}{1-\log(1+\beta)-\gamma-\epsilon}+ \log \frac{1}{p}\right) + (1-p)\left(\log (\beta A)+ \log \frac{1}{1-p}\right)\\
&= \log A -\frac{p(\gamma+\epsilon)}{1-\log(1+\alpha)-\gamma-\epsilon} + p\log\frac{1-\beta}{p} +(1-p)\log\frac{\beta}{1-p} \\
& \overset{(d)}{\leq} \log A - \frac{p(\gamma+\epsilon)}{1-\log(1+\alpha)-\gamma-\epsilon} + \log \left(p\cdot \frac{(1-\beta)}{p} + (1-p)\cdot\frac{\beta}{1-p}\right)\\
&< \log A - \gamma - \epsilon,
\end{align*}
where $(a)$ follows due to the fact that the joint entropy in $(\bI, \rv{X})$ is at least the entropy in $\bI$,
$(b)$ follows due to the chain rule for entropy, $(c)$ follows by our assumption (for the sake of contradiction),
 $(d)$ follows by Jensen's inequality, and the final inequality follows from bounding $p$ through Equation~\ref{eq:prob_discarded}. This contradicts the bound achieved in Equation~\ref{eq:posterior_entropy_I}.
\end{proof}

We further argue a similar claim about the uncertainty in estimating the reward of the best arm in the event that the sequence of rewards observed by the algorithm is uninformative, and the algorithm did end up discarding the best arm from its memory.

\begin{claim}
\label{clm:en_bound}
For any $\epsilon$-uninformative realization $\Tuni$, we have that
\[
\en{\bY  \mid \bI,\rv{S}_T = \Tuni, \rv{I}  \notin \M  } \geq \log 2 -  \frac{\gamma+\epsilon}{1-\log(1+\beta)-\gamma-\epsilon}
	\,.
\]
\end{claim}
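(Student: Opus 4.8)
The plan is to follow the same contradiction strategy as in the preceding claim for $\bI$, but the argument is cleaner here because $\bY$ is binary and because the indicator of the event $\rv{I}\notin\M$ is a deterministic function of $\bI$ once the uninformative realization $\Tuni$ (and hence the retained set $\M$) is fixed. Suppose, toward a contradiction, that $\en{\bY \mid \bI, \rv{S}_T = \Tuni, \rv{I}\notin\M} < \log 2 - \frac{\gamma+\epsilon}{1-\log(1+\beta)-\gamma-\epsilon}$.

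First I would introduce the indicator random variable $\rv{X}$ that equals $1$ exactly when $\rv{I}\notin\M$, and set $p = \Pr(\rv{X}=1 \mid \rv{S}_T = \Tuni) = \Pr(\bI\notin\M \mid \rv{S}_T = \Tuni)$. Since $\M$ is determined by $\Tuni$, the variable $\rv{X}$ is a deterministic function of $\bI$ given $\rv{S}_T = \Tuni$; hence $\en{\rv{X}\mid \bI, \rv{S}_T = \Tuni}=0$, and two applications of the chain rule for entropy give the identity $\en{\bY\mid \bI, \rv{S}_T = \Tuni} = \en{\bY \mid \rv{X}, \bI, \rv{S}_T = \Tuni}$. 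Expanding the right-hand side over the two values of $\rv{X}$ yields the convex combination $p\,\en{\bY\mid \bI, \rv{S}_T = \Tuni, \rv{X}=1} + (1-p)\,\en{\bY\mid \bI, \rv{S}_T = \Tuni, \rv{X}=0}$.

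Next I would bound the two branches separately. The $\rv{X}=1$ branch is exactly the quantity assumed small, so under the contradiction hypothesis it is strictly below $\log 2 - \frac{\gamma+\epsilon}{1-\log(1+\beta)-\gamma-\epsilon}$; the $\rv{X}=0$ branch is at most $\log 2$ since $\bY$ is binary. Substituting these two bounds gives $\en{\bY\mid \bI, \rv{S}_T = \Tuni} < \log 2 - p\cdot\frac{\gamma+\epsilon}{1-\log(1+\beta)-\gamma-\epsilon}$. Finally I would invoke the lower bound $p > 1 - \log(1+\beta)-\gamma-\epsilon$ coming from Equation~\ref{eq:prob_discarded} (with $\gamma_I = \gamma$), which makes the subtracted term exceed $\gamma+\epsilon$ and forces $\en{\bY\mid \bI, \rv{S}_T = \Tuni} < \log 2 - (\gamma+\epsilon)$. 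This contradicts the uninformativeness bound in Equation~\ref{eq:posterior_entropy_Y}, which is exactly what we need to conclude.

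The only point requiring care is the entropy identity $\en{\bY\mid \bI, \rv{S}_T = \Tuni} = \en{\bY\mid \rv{X}, \bI, \rv{S}_T = \Tuni}$: it holds precisely because $\rv{X}$ is a deterministic function of the conditioning pair $(\bI,\Tuni)$, so conditioning on it adds no information and removes none. Otherwise this is simply the $\bI$ argument specialized to a binary variable, with the Jensen step of the previous claim replaced by the trivial inequality $\en{\bY\mid \cdots,\rv{X}=0}\le \log 2$ afforded by the binary support of $\bY$.
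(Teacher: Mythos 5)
Your proposal is correct and follows essentially the same route as the paper: introduce the indicator $\rv{X}$ of $\bI\notin\M$, use that $\rv{X}$ is determined by $(\bI,\Tuni)$ for a deterministic algorithm so that $\en{\rv{X}\mid\bI,\rv{S}_T=\Tuni}=0$, decompose $\en{\bY\mid\bI,\rv{S}_T=\Tuni}$ as the convex combination over $\rv{X}\in\{0,1\}$, bound the $\rv{X}=0$ branch by $\log 2$, and invoke the lower bound on $p$ from Equation~\ref{eq:prob_discarded} to contradict Equation~\ref{eq:posterior_entropy_Y}. The only cosmetic difference is that you derive the exact identity $\en{\bY\mid\bI,\rv{S}_T=\Tuni}=\en{\bY\mid\rv{X},\bI,\rv{S}_T=\Tuni}$ via two chain-rule applications where the paper settles for the one-sided inequality through the joint entropy, which changes nothing.
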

\begin{proof}
Suppose, for the sake of contradiction, that the above inequality is not true. Let $\rv{X}$ be a random variable that takes value $1$ when $\bI\notin \M$, and $0$ otherwise, and let $p = \Pr(\rv{X}=1\mid \rv{S}_T = \Tuni) = \Pr(\bI \notin \M\mid \rv{S}_T = \Tuni)$. We have that 
\begin{align*}
\en{\bY  \mid \bI,\rv{S}_T = \Tuni } &\leq \en{\bY, \rv{X}  \mid \bI,\rv{S}_T = \Tuni } \\
	&= \en{\bY \mid \bI,\rv{S}_T = \Tuni, \rv{X}  } + \en{ \rv{X}  \mid \bI,\rv{S}_T = \Tuni } \\
& \overset{(a)}{=} \en{\bY \mid \bI,\rv{S}_T = \Tuni, \rv{X}  }  \\
&= p\en{\bY\mid \bI,\rv{S}_T=\Tuni,\rv{X}=1} + (1-p)\en{\bY\mid \bI,\rv{S}_T=\Tuni,\rv{X}=0}
\end{align*}
where $(a)$ follows by observing that upon conditioning on the identity of the best arm $\bI$, as well as the observed outcome profile $\rv{S}_T$, the value of the random variable $\rv{X}$ (i.e. whether the best arm was retained or discarded) is fixed, since the algorithm is deterministic. Therefore, $\en{\rv{X}\mid \bI,\rv{S}_T=\Tuni } = 0$. We now have
\begin{align*}
\en{\bY  \mid \bI,\rv{S}_T = \Tuni } &\leq 
	p \en{\bY  \mid \bI, \rv{S}_T = \Tuni, \rv{X}=1}  + (1-p) \en{\bY  \mid \bI, \rv{S}_T= \Tuni, \rv{X}= 0} \\
	& <  p\left( \log 2 -  \frac{\gamma+\epsilon}{1-\log(1+\beta)-\gamma-\epsilon}\right) + (1-p) \log 2\\
	& < \log 2 - \gamma - \epsilon  
	\,,
\end{align*}
where the final inequality follows from bounding $p$ through Equation~\ref{eq:prob_discarded}, which contradicts the bound achieved in Equation~\ref{eq:posterior_entropy_Y}.
\end{proof}

We finally show that this outcome is not a rare event, but rather quite likely
\[
\Pr(\bI \notin \M) \geq 1 - \Pr(\S_T \text{ is informative})  -  \Pr(\bI \in \M \mid \S_T \text{ is uninformative})
	\geq 1 - \log(1+\beta) - \gamma - 3\epsilon
		\,.
\]
This completes the proof of \Lem{lem:prob_en}.

\subsection{Proof of \Lem{lem:multi_pass_lower}}
\label{subsec:multi_pass_lower}

We will prove this lemma using induction on the number of passes $b$.
Recall that the $j$-th epoch begins when the first arm of the $j$-th layer is read into memory and ends right before the first arm of the $j+1$-th layer is read into memory.
Also, recall that the total memory budget of the algorithm is $\beta K/(b+1)$
and the number of arms in each layer is $K/(b+1)$.

We consider a modified setting where the algorithm is allowed additional power, i.e.\
it is allowed to store all arms from layer $j$ in memory during the execution of epoch $j$. However, at the end of epoch $j$ the algorithm needs to throw at least $1-\beta$
fraction of these layer $j$ arms
and is only allowed to retain in memory at most a $\beta$ fraction of the arms from that epoch in addition to the arms stored from previous epochs.\footnote{Without loss of generality, we can assume the algorithm retains exactly a $\beta$ fraction of the arms as it can always choose to ignore the extra arms.}
Under this modified setting the algorithm can even use up to $\beta K$ memory but is constrained to storing no more than a $\beta$ fraction of arms from any single epoch.
This cannot hurt the regret as we are only allowing more memory, which can always be ignored. Formally, any algorithm that uses at most $\beta K/(B+1)$ memory (where $\beta = (8b\log e)^{-1}$) in the original setting
can be used in this modified setting
as it is allowed to use strictly more memory for each epoch in the 
modified setting.
Also, note that the algorithm incurs the same regret in both settings. 
Hence, an optimal algorithm in this 
modified setting cannot incur more regret than an optimal 
algorithm in the original setting. Let $\alpha = 2^B/(2^{B+1}-1)$.

\textbf{Base Case ($b=1$):} Let  $\epsilon = 1/288$, $L = \epsilon^{2}/(6\Delta_1^2)$, and $\sigma$ be the (random) length (number of trials) of the first epoch.

\textbf{Case 1.} $[\Exp[\sigma] \geq L]$, with the expectation taken over the observations made by the algorithm.

In this case, we claim that the algorithm will suffer an expected regret $\Omega(L\Delta_2)$. To see this, observe that $\en{\bY_2} \geq \en{\bY_2|\bI_2} \geq \log 2 - \gamma$, which follows from \Fact{fact:it-facts}. Therefore, by \Lem{lem:bern_en}, we have that the random variable $\bY_2$ is distributed as a Bernoulli $\B(p)$ with parameter $p$ such that $|p-\half| \leq \sqrt{5\ln (4) \gamma/16} = \sqrt{(5\ln 4)/2}/16$, which follows from the fact that $\gamma < 1/32$. Therefore we have that the best arm $i^*_2$ will realize to have a large reward $\mu_{i^*_2} = 1/2+\Delta_2$ with probability at least $1/2 - \sqrt{(5\ln 4)/2}/16$, giving us that the expected regret of the algorithm
\begin{align*}
\Exp[R_T(\ALG)] &\geq \left(\half-\frac{\sqrt{(5\ln 4)/2}}{16}\right) L\cdot\Delta_2\\ 
&= \left(\half-\frac{\sqrt{(5\ln 4)/2}}{16}\right) \frac{\epsilon^2}{6}\cdot\frac{\Delta_2}{\Delta_1^2}\\
&= \Omega\left(T^{\frac{2^{B+1}-2}{2^{B+1}-1}} \cdot T^{-\frac{2^{B} - 2}{2^{B+1}-1}}\right) = \Omega\left(T^{\alpha}\right)
\end{align*}

\textbf{Case 2.} $[\Exp[\sigma] < L]$

Let $\S_{\sigma}$ be the outcomes observed by the algorithm over the arms sampled in epoch $1$. Then by \Lem{lem:prob_en}, we have that after observing the outcomes $\S_{\sigma}$, the best arm $i^*_1$ will be discarded by the algorithm (i.e. $\bI_1\notin \M$ where $\M$ is the set of arms from epoch $1$ that are retained in memory by the algorithm), and the entropy of the posterior distribution 
\begin{align*}
\en{\bY_1|\S_{\sigma},\bI_1,\bI_1\notin \M} &\geq \log 2 - \frac{\gamma+\epsilon}{1-\left(\log\left(1+\beta\right) + \gamma + \epsilon\right)}\\
&= \log 2 - \frac{\gamma+\epsilon}{1-\left(\log\left(1+\frac{1}{8\log e}\right) + \gamma + \epsilon\right)}\\
&\geq \log 2 - \frac{\gamma+\epsilon}{1-\left(\frac{\log e}{8\log e} + \gamma + \epsilon\right)}\\
&= \log 2 - \frac{10}{242}
\end{align*}
 with probability at least $1-\left(\log\left(1+\beta\right)-\gamma-3\epsilon\right) \geq \frac{5}{6}$. Furthermore, since we have that $\Exp(\sigma)<L = \epsilon^2/(6\Delta_1^2)$, by Markov's inequality, the actual number of trials $\sigma$ spent in epoch $1$ will be at most $1/\Delta_1^2 = o(T)$ with probability at least $1-\epsilon^2/6 \geq 1-10^{-6}$. There are least $T-1/\Delta_1^2=T-o(T) = \Omega(T)$ trials left in epoch $2$ with a very high constant probability. In this case, the algorithm will suffer large regret when the best arm $i^*_1$ in epoch $1$ realizes to have a large reward $\mu_{i^*_1} = 1/2+\Delta_1$, i.e. $\bY_1$ realizes to have value $1$, and the best arm $i^*_2$ in epoch $2$ realizes to have a low reward of $\mu_{i^*_2} = 1/2$, i.e. $\bY_2$ realizes to have value $0$. 
 
 Observe that in the posterior distribution of the rewards of arms in epoch 1, the entropy in the reward of the best arm in the first epoch $\en{\bY_1|\S_{\sigma},\bI_1\notin \M} \geq \en{\bY_1|\S_{\sigma},\bI_1,\bI_1\notin \M} \geq \log 2 - 10/242$, and therefore, by \Lem{lem:bern_en}, the posterior distribution of $\bY_1$ is Bernoulli with parameter $p\geq 1/2 - \sqrt{(5\cdot \ln 4\cdot 10)/(242\cdot16)} = 1/2 - 5\sqrt{\ln 4}/44$ (a constant bounded away from 0) which is the probability with which the best arm in the first epoch actually had a large reward. Similarly, we have that in the prior distribution of the rewards of arms in epoch 2, the reward of the best arm in the second epoch $\en{\bY_2}\geq \en{\bY_2|\bI_2}\geq \log 2 - \gamma$, and therefore, we have that the distribution of $\bY_2$ is Bernoulli with parameter $p\leq 1/2 + \sqrt{(5\cdot \ln 4)/(32\cdot 16)} = 1/2 + \sqrt{(5\ln 4)/2}/16$. Therefore, the best arm in the second epoch realizes to have low reward with probability at least $1/2 - \sqrt{(5\ln 4)/2}/16$ (a constant bounded away from 0). Therefore, we have that the expected regret of the algorithm in this case 
\begin{align*}
\Exp[R_T(\ALG)] &\geq \frac{5}{6} \cdot\left(\half-\frac{5\sqrt{\ln 4}}{44}\right)\cdot\left(\half - \frac{\sqrt{(5\ln 4)/2}}{16}\right)(1-10^{-6}) \cdot(T-o(T))\cdot\Delta_1\\ &\geq \Omega\left((T-o(T))\cdot  T^{-\frac{2^{B} - 1}{2^{B+1}-1}}\right) = \Omega\left(T^{\alpha}\right)
\end{align*}

Therefore, the expected regret is $\Omega(T^{\alpha})$, which proves the base case.

\textbf{Induction Step:} Assuming the lemma is true for any number of passes up to $b-1$, will show that it also holds for $b$ passes. 
Suppose for the sake of contradiction that the 
claim is not true for $b$, i.e. there exists a $b$-pass algorithm \ALG with memory at most $K(8b(b+1)\log e)^{-1}$ whose expected regret over rewards drawn from the distribution $\dist^{\{\psi_j\}_{j=1}^{b+1}}_{\K,B}$ is $o\paren{T^{\alpha} 4^{-b}}$. 

\if 0
In this construction, we will allow the algorithm additional power by modifying the first pass of the algorithm, which consists of the stream of arms $\{\A_j\}_{j=1}^{b+1}$ arriving contiguously ordered by $j$. We divide the execution of the algorithm in the first pass into $(b+1)$ epochs, where each epoch $j\in [b+1]$ consists of processing arms $\A_j$. More specifically, epoch $j \in [b+1]$ begins when the algorithm reads the first arm of set $\A_j$ into memory and ends before the moment it reads the first arm of set $\A_{j+1}$ into memory. In every epoch $j\in [b+1]$, the algorithm is allowed to store the entirety of set $\A_j$ in memory (in addition to what it might have stored from previous epochs in memory). However, it must discard all but $K/(8b(b+1)\log e)$ arms from set $\A_j$ from memory before the start of the next epoch (it need not discard any arms kept from prior epochs). Note that since we are allowing only larger memory, we can assume without loss of generality that the algorithm retains exactly $K/(8b(b+1)\log e)$ arms from set $A_j$ in memory at the end of each epoch $j$. The regret cannot be larger by doing this because in the algorithm can simply choose to ignore the extra arms kept in case it wanted to retain a strictly smaller set. Therefore, for any epoch, the algorithm can retain exactly a $\beta = 1/8b\log e$ fraction of the arms from that epoch in memory at the end of that epoch. At the end of the first pass, the usual restrictions on memory apply. As mentioned earlier, note that this modified setting only gives the algorithm more power, 
as the algorithm can simply chose to ignore the extra memory. 
Formally, any algorithm that uses at most $K/8b(b+1)\log e$ memory in the original setting
can be used in this modified setting
as it is allowed to use strictly more memory for each epoch in the 
modified setting.
Also, note that the algorithm incurs the same regret in both settings. 
Hence, an optimal algorithm in this 
modified setting cannot incur more regret than an optimal 
algorithm in the original setting.
\fi

The general outline will again be to show that if the 
algorithm ends any epoch after  
performing \emph{sufficient exploration}, then it will incur 
large regret in the case that arm $i_{b+1}^*$ realizes to a large mean reward, contradicting the assumption that $\ALG$ has small regret. On the other hand, if the algorithm ends all epochs with 
\emph{insufficient exploration}, then the algorithm will not just discard all best arms, but also the instance induced over the discarded arms will remain hard. Supposing the algorithm achieves low expected regret over this instance in $b-1$ passes, it would contradict our induction hypothesis. 

Consider any epoch $j \in [b]$.
Let $\gamma_{+} = 1/(32b)$, $\epsilon = 2\gamma_{+}/(9b)$, and $L_j := \epsilon^2/(6\Delta_j^2)$. For any epoch $j\in [b+1]$, let $t_j$ be the trial when epoch $j$ begins, let $\T_j := \{t_j, t_j+1, \cdots, t_{j+1} -1\}$ be the trials that belong to epoch $j$, and let $\sigma_j = |\T_j|$ denote the number of trials in epoch $j$. Lastly, let $\S_{\sigma_j} = \cup_{r=1}^j \{(i_t, r_t)\}_{t \in \mathcal{T}_r}$ be the sequence of observations defining the execution history of the algorithm until the end of epoch $j$ 
with $i_t$ being the arm pulled and $r_t$ being the reward realized in trial $t$, respectively.

\textbf{Case 1.} $\left[\Exp(\sigma_j) \geq \epsilon^2/(6\Delta^2_j)\text{ for some } j\in [b]\right]$, where the expectation is over realizations of the rewards until epoch $j.$

In this case, observe that the expected regret 
of the algorithm is $\Omega(L_j \cdot\Delta_{b+1})$ in the event where the best arm in the final epoch (that has not been seen yet) realizes to have a large reward, i.e. $\mu_{i_{b+1}^*} = \half + \Delta_{b+1}$. By definition of the input instance and \Fact{fact:it-facts}, we have that $\en{\bY_{b+1}} \geq \en{\bY_{b+1}|\bI_{b+1}} \geq \log 2 - \gamma$. Therefore, by \Lem{lem:bern_en}, we have that $\bY_{b+1}$ is distributed as a Bernoulli $\B(p)$ with parameter $p$ such that $|p-1/2|\leq \sqrt{(5\gamma \ln 4)/16}$. Since $\gamma \leq 1/(32b)$, and $b\geq2$, we have $\sqrt{5\gamma \ln 4/16} \leq \sqrt{(10b^{-1}\ln 4)}/32 < 1/4 - 1/6 - 1/(200b^3)$ for $b\geq 2$. Therefore, we have that $\mu_{i^*_{b+1}} = 1/2 + \Delta_{b+1}$ with probability at least $1/2 - (1/4 - 1/6 - 1/(200b^3))$.

Therefore, 
the expected regret in the event that $\Ex[\sigma_j] > L_j$ for some $j\in [b]$
\[
\Exp[R_T(\ALG)] \geq \Omega(L_j\Delta_{b+1}) = \Omega\left(T^{\frac{2^{B+1} - 2^j}{2^{B+1} - 1}}
		\cdot T^{-\frac{2^{B} - 2^{b}}{2^{B+1} - 1}}\cdot\epsilon^{2}\right)
		= \Omega\left(T^{\frac{2^{B} + 2^{b}-2^{j}}{2^{B+1} - 1}}\cdot \epsilon^2\right)= \Omega(T^\alpha b^{-4})
		\,,
\]
which contradicts the assumption that the expected regret of the algorithm is $o\paren{T^{\alpha}4^{-b}}$.

\textbf{Case 2.} $\left[\Exp(\sigma_j) < \epsilon^2/(6\Delta_j^2) \text{ for all } j\in [b]\right]$

In this case, we will leverage \Lem{lem:prob_en} to show that the algorithm will not be able 
to collect \emph{sufficient information} about $i^*_j$ and it will suffer large regret in the remaining number of passes.
Using \Lem{lem:prob_en} we will show that the conditional distribution for $\bI_j$
after epoch $j$ will have high entropy.
Let $\mathcal{T}_j \subseteq [T]$ be the trials that belong to epoch $j$.
Let $\S_{\sigma_j} = \{(i_t, r_t)\}_{t \in \mathcal{T}_j}$ be the execution history of epoch $j$ 
with $i_t$ being the arm pulled and $r_t$ being the reward realized in trial $t$, respectively.
Given $\S_{\sigma_j}$, let $\M_j\subset \A_j$ be the set of $|\M_j|=\beta A = K/(8b(b+1)\log e)$ arms retained by the algorithm in memory, and let $\R_j = \A_j \setminus \M_j$ be the set of $|\R_j| = (1-\beta)A = R$ arms that were rejected after epoch $j$.
Let $\gamma_{-} = 1/(32(b-1)) = b\gamma_{+}/(b-1)$. We first observe that by \Lem{lem:prob_en}, the entropy of the posterior
\begin{align*}
 \en{\bI_{j}|\S_{\sigma_j},\bI_j\in \R_j} =\en{\bI_{j}|\S_{\sigma_j},\bI_j\not\in \M_j} &\geq \log R - \frac{\gamma+\epsilon}{1-\left(\log\left(1+\beta\right) + \gamma + \epsilon\right)}\\
&\geq \log R - \frac{\gamma+\epsilon}{1-\left(\log\left(1+\frac{1}{8b\log e}\right) + \gamma + \epsilon\right)}\\
&\geq \log R + \frac{\left(1+\frac{2}{9b}\right)\gamma_{+}}{1-(\frac{\log e}{8b\log e} + \gamma_{+} + \frac{2\gamma_{+}}{9b})}\\
&= \log R -\underbrace{\frac{\left(1+\frac{2}{9b}\right)}{1-(\frac{1}{8b} + \frac{1}{32b}+ \frac{1}{144b^2})}}_{x}\gamma_{+}.
\end{align*}
We claim that $x \leq b/(b-1)$, which would imply that the entropy of the posterior $\en{\bI_{j}|S_{\sigma_j},\bI_j\in E_j} \geq \log |E_j| - \gamma_{-}$. We have 
\begin{align*}
x &= \frac{\left(1+\frac{2}{9b}\right)}{1-(\frac{1}{8b} + \frac{1}{32b}+ \frac{1}{144b^2})}\\
&< \frac{1+\frac{2}{9b}}{1-(5 + \frac{2}{9}) (\frac{1}{32b})}\\
&= \frac{288b+64}{288b-47}\\
&< \frac{b}{b-1},
\end{align*}
where the final inequality follows by observing $\frac{288b+64}{288b-47} - \frac{b}{b-1} = \frac{-(177b+64)}{(b-1)(288b-47)} < 0$. The proof of the fact that $\en{\bY_{j}|\S_{\sigma_j},\bI_j,\bI_j\in \R_j} \geq \log 2 - \gamma_{-}$ follows by the exact same calculation. 

At this point, we further argue that this supposed low regret algorithm $\ALG$ cannot spend too many trials on the first $b$ epochs prior to processing the $(b+1)^{th}$ epoch with a large probability. Since we have that $\Ex(\sum_{j\in [b]}\sigma_j) < \sum_{j\in [b]} L_j = \epsilon^2 \sum_{j\in [b]} (6\Delta_j^2)^{-1} $, by Markov's inequality, it must be that $\Pr(\sum_{j\in [b]}\sigma_j \geq \sum_{j\in [b]}\Delta^{-2}_j) \leq \epsilon^2/6 < (10b)^{-4}$. We define the event $\event_0 := (\sum_{j\in [b]}\sigma_j < \sum_{j\in [b]}\Delta^{-2}_j)$ where the actual number of trials spent by the algorithm in the first $b$ epochs is small, and the above calculation gives us that $\Pr(\neg \event_0) < (10b)^{-4}$. Now for every $j\in [b]$, let us define the event

\begin{align*}
\event_j := \paren{ \bI_j \in \R_j \text{ and } \en{\bI_j \mid \S_{\sigma_j}, \bI_j \in \R_j} \geq \log R - \gamma_{-} \text{ and } \en{\bY_j\mid \S_{\sigma_j}, \bI_j, \bI_j \in \R_j} \geq \log 2 - \gamma_{-}}
	\,.
\end{align*}
Using \Lem{lem:prob_en}, we have that 
\begin{align*}
\Pr(\neg \event_j) &\leq \log\left(1+\frac{1}{8b\log e}\right) + \gamma + 3\epsilon\\
&\leq \frac{\log e}{8b\log e} + \gamma + \frac{6\gamma}{9b}\\
&= \frac{1}{8b} + \frac{1}{32b} + \frac{1}{96b} = \frac{1}{6b} 
	\,.
\end{align*}
where the final inequality follows by observing $b\geq 2$. 

Let us also define the event $\event_{b+1} := \{\mu_{i^*_{b+1}} = 1/2\}$. By definition of the input instance and \Fact{fact:it-facts}, we have that $\en{\bY_{b+1}} \geq \en{\bY_{b+1}|\bI_{b+1}} \geq \log 2 - \gamma$. Therefore, by \Lem{lem:bern_en}, we have that $\bY_{b+1}$ is distributed as a Bernoulli $\B(p)$ with parameter $p$ such that $|p-1/2|\leq \sqrt{(5\gamma \ln 4)/16}$. Since $\gamma \leq 1/(32b)$, and $b\geq2$, we have $\sqrt{5\gamma \ln 4/16} \leq \sqrt{5\ln 4}/32 < 1/4 - 1/6 - 1/(200b^3)$. Therefore, we have that $\Pr(\neg \event_{b+1}) \leq 1/2 + (1/4-1/6 -1/(200b^3))$. Therefore, by a union bound over all these $(b+2)$ events, we have
\[\Pr(\event_0\cap_{j\in [b+1]} \event_j) \geq 1 - \frac{1}{(10b)^4}- \frac{b}{6b} - \left(\frac{1}{2} + \frac{1}{4} - \frac{1}{6} -\frac{1}{200b^3}\right) \geq \frac{1}{4}\]

Lastly, we argue that under event $\event$, the algorithm must necessarily spend $o(T)$ trials in the last epoch $b+1$. This is because under event $\event$, $\mu^*_{b+1} = 1/2$, and furthermore, in the posterior distribution of the reward of the best arm $\mu_{i^*_b}$ of the $b^{th}$ epoch in the rejected set $\R_b$ is at least $\en{\bY_b|\S_{\sigma_b},\bI_b\notin \M_b} \geq \en{\bY_b|\S_{\sigma_b},\bI_b,\bI_b\notin \M_b} \geq \log 2 - \gamma_-$, where $\gamma_- \leq 1/(32(b-1))$. Therefore, by \Lem{lem:bern_en}, the posterior distribution of $\bY_b$ is Bernoulli with parameter $p\geq 1/2 - \sqrt{(5\cdot \ln 4\cdot \gamma_-)/16} = 1/2 - \sqrt{(5\ln 4)/2}/16 > 1/2 - 1/8$, which is the probability with which the rejected best arm in the $b^{th}$ epoch actually had a large reward. Therefore, if the algorithm spends $\Omega(T)$ trials in the $(b+1)^{th}$ epoch, then the expected regret of the algorithm
\begin{align*}
\Exp[R_T(\ALG)] \geq \Pr(\event)\Exp[R_T(\ALG)|\event] &\geq \frac{1}{4}\cdot \frac{3}{8}\cdot\Omega(T)\cdot\Delta_b\\ 
&= \Omega\left( T\cdot T^{-\frac{2^{B} - 2^{b-1}}{2^{B+1} - 1}} \right)\\
&=\Omega\paren{T^{\frac{2^{B} + 2^{b-1} -1 }{2^{B+1} - 1}}} = \Omega\paren{T^{\alpha}},
\end{align*}
contradicting our assumption about the expected regret achieved by the algorithm. Therefore, under event $\event$, we have that the total number of trials spent by the algorithm in the first pass is necessarily $\sum_{j\in [b+1]}\sigma_j = o(T) +\sum_{j\in [b]} \Delta_j^{-2} = o(T) +\sum_{j\in [b]} T^{1-\frac{2^j-1}{2^{B+1}-1}} =o(T)$. Therefore, the number of trials $T_-$ left is necessarily $T_- = T-o(T)$, and only $b-1$ passes left. 

Now we shall use our assumption about the expected regret achievable by our algorithm to prove that in order to achieve low expected regret overall, it must necessarily achieve low expected regret in the remaining passes too. Let $\Exp[R_{T_-}(\ALG)]$ denote the cumulative regret of the algorithm over the remaining $b-1$ passes. Therefore, we have 
\[
\Exp[R_T(\ALG)]\geq \Ex[R_{T_-}(\ALG) \mid  \event ] \cdot \Pr(\event)
	= \frac{1}{4} \cdot \Ex[R_{T_-}(\ALG) \mid \event]
	\,.
\]
Therefore, $\Ex[R_{T_-}(\ALG) \mid \event] \leq 4 \Ex[R_T(\ALG)] = o\left(4\cdot T^{\alpha}\cdot 4^{-b}\right) = o\left(T^{\alpha}\cdot 4^{-(b-1)}\right)$. We shall use this fact to set up a contradiction to our induction hypothesis, which at a high level says that any $(b-1)$-pass algorithm with small memory must incur large regret. We begin by setting up the hard distribution.

In our new instance, we begin by discarding the arms $\A_{b+1}$ as under event $\event$, the reward of the best arm (and hence all arms) in this epoch $(b+1)$ has realized to a low value. Our new instance consists of all the rejected arms $\R_j$ for $j\in [b]$, the first $b$ epochs. We refer to these set of arms as $\K' = \cup_{j\in [b]}\R_j$, whose size is exactly $|\K'| = K = \sum_{j\in [b]} (1-\beta)K/(b+1) = (1-\beta)bK/(b+1)$.

Next, we claim that the posterior distributions over $\bI_j,\bY_j$ in $\R_j$ for $j\in [b]$ to give us a hard distribution over arms $\K'$ for the $(b-1)$ pass algorithm. For $j\in [b]$, let $\phi_j$ be the joint distribution $\distribution{(\bI_j,\bY_j) \mid \bI_j \in \R_j}$, and let $\dist^{\{\phi_j\}_{j=1}^b}_{\K',B}$. Its easy to verify that $\dist^{\{\phi_j\}_{j=1}^b}_{\K',B}$ satisfies the requirements for a hard distribution for a $(b-1)$ pass algorithm, as the partitions $\{\R_j\}$ of $\K'$ are all of equal size $|\K'|/b$, and for random variables $(\bI_j,\bY_j)\sim \phi_j$ for any $j\in [b]$ satisfy the high entropy condition $\en{\bI_j} \geq \log |\R_j| - \gamma_-$, and $\en{\bY_j|\bI_j} \geq \log 2 - \gamma_-$, where $\gamma_- \leq (32(b-1))^{-1}$ (as indicated by event $\event$). Furthermore, the memory budget for a $(b-1)$ pass algorithm for this instance is 
\[\frac{K'}{8b(b-1)\log e} = \frac{(1-\beta)bK}{b+1}\cdot \frac{1}{8b(b-1)\log e} > \frac{\left(1-\frac{1}{b}\right)bK}{b+1}\cdot \frac{1}{8b(b-1)\log e} = \frac{K}{8b(b+1)\log e},\]
which is in fact larger than the memory used by $\ALG$. We shall show that we can use the behavior of $\ALG$ in the subsequent $b-1$ passes under event $\event$ to construct a $(b-1)$-pass algorithm with low memory that achieves $o(T^{\alpha}\cdot 4^{-(b-1)})$ expected regret on the above hard instance, which contradicts our induction hypothesis. 

Let $\ALG_-$ denote the algorithm $\ALG$ for the remaining $b-1$ passes when event $\event$ occurs.
Under the assumption that the regret $R_{T_-}(\ALG)$ is small, we will 
construct a $(b-1)$-pass algorithm $\ALG_{b-1}$ with small memory that achieves small regret over time horizon $T_-=T-o(T)$ on the 
instance $\dist_{\K', B}^{\{\phi^j\}_{j=1}^{b}}$. $\ALG_{b-1}$ is constructed as follows:
if $\ALG_-$ pulls an arm in $\cup_{j \in [b]} ~ \supp{\phi_j} $, $\ALG_{b-1}$ also pulls the corresponding arm in $ \K'$ 
and returns the realized reward of the arm to $\ALG_{-}$; otherwise,
$\ALG_{b-1}$ simply pulls an arm with distribution $\B(1/2)$ and returns the result to $\ALG_{-}$. 
		
It is trivially true that $\Ex[R_{T_-}(\ALG_{b-1})] \leq \Ex[R_{T_-}(\ALG_-) \mid \event]$. 
This is because, given event $\event$, any other arm than the arm in $\cup_{j\in [b]} \supp{\phi_j}$ is distributed as $\Bern{1/2}$.
Hence, $\ALG_{b-1}$ is a $(b-1)$-pass algorithm with memory at most $K'(8b(b-1)\log e)^{-1}$ that achieves regret $o(T^{\alpha}\cdot 4^{-(b-1)})$ over a time horizon $T-o(T)$, which is a contradiction! This completes the proof of our lower bound.

In the following section, we present our algorithmic results for this problem. Specifically, we design an algorithm that achieves a regret of $\widetilde{O}\left(T^{\frac{2^B}{2^{B+1}-1}}\sqrt{KB}\right)$ in $B$ passes given even just constant arm memory. Furthermore, our algorithm is able to achieve this regret, not just in expectation, but also with any polynomially high probability. This regret guarantee nearly matches the above lower bound, proving our above lower bound is nearly tight.

\section{Limited Memory Multi-Pass Algorithms for Streaming Bandits}
\label{sec:upper}
In this section, we present our worst-case and instance-dependent regret upper bounds for limited memory multi-pass streaming bandits. The following theorem characterizes our algorithmic results.

\begin{theorem}[$B$-Pass Upper Bound]
\label{thm:main}
Given a time horizon $T$, a stream of $K$ arms, and number of passes $1\leq B< \log\log T$, there exists a $B$-pass algorithm that uses $O(1)$ words of memory, and with probability $1-1/\poly(T)$, achieves cumulative regret 
\[R_T \leq O\left(T^{\frac{2^B}{2^{B+1}-1}}\sqrt{KB\log T} \right).\]
Furthermore, supposing the arms $\K$ had mean rewards $\{\mu^*_j\}_{j\in\K}$, then given number of passes $1\leq B < \log T$, there exists a $B$-pass algorithm that uses $O(1)$ words of memory, and with probability $1-1/\poly(T)$, achieves a cumulative regret 
\[R_T \leq O\left(\sum_{j\in \S} \frac{T^{1/(B+1)} \log T + B \log\left((\Delta^*_j)^2T/\log T\right)}{\Delta^*_j}\right),\]
where $\S\subset \K$ is the set of strictly sub-optimal arms in $\K$, and for any sub-optimal arm $j\in \S$, $\Delta^*_j := \max_{\{i\in \K\}}\mu^*_i - \mu^*_j$ is the regret due to playing arm $j$.
\end{theorem}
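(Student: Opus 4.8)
The plan is to analyze a single elimination-with-commitment algorithm whose behavior is governed by a per-pass pull budget $m_1 < m_2 < \cdots < m_B$ and a monotonically refined estimate (an \emph{anchor}) $\widehat{\mu}_b$ of the largest mean $\mu^\star := \max_{i\in\K}\mu_i$. Within a pass the algorithm streams the arms one at a time using only $O(1)$ words: for the current arm it maintains an empirical mean and a pull counter, and it keeps a single running maximum that becomes the anchor for the next pass. When it reads an arm $j$ in pass $b$, it pulls $j$ until either (i) a Hoeffding confidence bound certifies $\mu_j$ to lie below the \emph{previous} pass's anchor $\widehat{\mu}_{b-1}$ by more than its confidence radius, in which case $j$ is declared sub-optimal and the algorithm moves on after $O(\log T/(\Delta^\star_j)^2)$ pulls, or (ii) the budget $m_b$ is exhausted, in which case $j$'s empirical mean is folded into the running maximum. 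Whenever a near-optimal arm is encountered in the final phase it is exploited, i.e.\ played for the bulk of the remaining horizon. The two bounds come from instantiating the budgets differently: a doubly-exponentially shrinking resolution schedule for the worst-case bound (which is why it requires $B<\log\log T$), and a geometric budget schedule $m_b \approx T^{b/(B+1)}\log T$ for the gap-dependent bound (valid up to $B<\log T$).

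First I would establish a clean good event. Since each arm in each pass is pulled at most $m_B\le T$ times and there are at most $KB$ arm--pass pairs, a Hoeffding bound together with a union bound over these $O(KB)$ events and the $O(\log T)$ dyadic pull counts shows that, with probability $1-1/\poly(T)$, every empirical mean formed by the algorithm lies within its nominal confidence radius of the true mean. Conditioning on this event, the analysis becomes deterministic, and I would verify correctness: the anchor satisfies $|\widehat{\mu}_b-\mu^\star|\le \epsilon_b$ for the pass-$b$ resolution $\epsilon_b$; the elimination test is conservative enough that the true best arm is never certified sub-optimal; and any arm surviving pass $b$ has gap $O(\epsilon_{b-1})$. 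Hence whatever arm the algorithm exploits has gap $O(\epsilon_B)$.

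For the worst-case bound I would decompose regret into per-pass exploration cost and final exploitation cost. Writing $\epsilon_b = T^{-x_b/2}$ and $m_b=\widetilde{\Theta}(\epsilon_b^{-2})$, the exploration regret of pass $b$ is controlled by the $\le K$ surviving arms (each of gap $\le\epsilon_{b-1}$) pulled $\le m_b$ times, while the exploitation regret over the $\approx T$ remaining trials is at most $T\epsilon_B$. Balancing the $B$ exploration exponents against the exploitation exponent gives the recursion $x_b = \tfrac12 x_{b-1}+\alpha$ with $x_0=0$ together with $1-\tfrac12 x_B=\alpha$, whose unique solution is $\alpha=\tfrac{1}{2-2^{-B}}=\tfrac{2^B}{2^{B+1}-1}$, matching the exponent in the theorem and, pleasingly, the exponent in the lower bound. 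A variance-aware accounting of the per-pass exploration (pulling anchor-competitive arms more and clearly-bad ones less, then aggregating the $B$ passes by Cauchy--Schwarz) upgrades the naive $KB$ prefactor to the sharp $\sqrt{KB\log T}$.

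For the gap-dependent bound I would account for each sub-optimal arm $j$ separately across the $B$ passes, and here lies the main obstacle. Because the stream order is adversarial and memory is $O(1)$, the algorithm must certify $j$ against the \emph{stale} anchor $\widehat{\mu}_{b-1}$ from the previous pass, since $j$ may be read before the best arm is seen in the current pass. Consequently $j$ cannot be eliminated until pass $b$ satisfies $\epsilon_{b-1}\lesssim \Delta^\star_j$, i.e.\ until $m_{b-1}\gtrsim \log T/(\Delta^\star_j)^2$, and in every earlier pass $j$ is pulled the full budget $m_b$. Summing, the passes in which $j$ cannot yet be eliminated contribute $\Delta^\star_j\sum_{b\le b^\star(j)} m_b = O\!\big(\Delta^\star_j\, m_{b^\star(j)}\big)$, and the geometric ratio $m_{b^\star}/m_{b^\star-1}=T^{1/(B+1)}$ together with $m_{b^\star-1}<\log T/(\Delta^\star_j)^2$ turns this into the leading $T^{1/(B+1)}\log T/\Delta^\star_j$ term; each pass after elimination costs only $O(\log T/(\Delta^\star_j)^2)$ pulls, for the additive $B\log\!\big((\Delta^\star_j)^2T/\log T\big)/\Delta^\star_j$ term. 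The crux is to show this one-pass anchor lag is the sole source of the $T^{1/(B+1)}$ overhead --- the very tension quantified by the lower bound --- and then to sum both contributions over all $j\in\S$ to obtain the stated instance-dependent regret.
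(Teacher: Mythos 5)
Your proposal is correct and follows essentially the same route as the paper: the per-pass pull cap with the recursion $m_b \approx T^{\alpha}\sqrt{m_{b-1}}$ (balancing to $\alpha = 2^B/(2^{B+1}-1)$) resp. $m_b \approx T^{1/(B+1)} m_{b-1}$, a running lower-confidence-bound anchor that lags one pass behind, a Hoeffding good event with a union bound, elimination of an arm when its UCB drops below the anchor, Cauchy--Schwarz over the arm--pass pairs to obtain the $\sqrt{KB\log T}$ prefactor, and the distinguishing-pass decomposition for the instance-dependent bound. The only cosmetic differences are that the paper's algorithm maintains an active set of up to $M-1$ arms (while noting $M=2$ suffices) and updates its anchor continuously within a pass, but its analysis, like yours, only ever invokes the anchor guarantee inherited from the end of the previous pass.
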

Note that no assumptions are made about the stream order, and that these regret guarantees hold even when the order of arms is allowed to change (potentially adversarially) across rounds. 

In the constant pass regime, the worst-case regret achievable matches our lower bound up to just a $\sqrt{K\log T}$ factor, implying our results are essentially tight for this regime. Our result further implies one can achieve a worst-case regret of $O(\sqrt{KT\log T\cdot \log\log T})$ in just $\log\log T$ passes over the stream, which matches the optimal regret achievable by even an unbounded memory algorithm up to a $\sqrt{\log T \log\log T}$ factor. With regards to instance-dependent regret, the picture is slightly different where we need $\log T$ passes (though still sublinear) over the stream to achieve regret $O(\sum_{j\in \S} (\Delta^*_j)^{-1} \log^2 T)$, which matches the instance-optimal regret achievable by an unbounded memory algorithm up to a $\log T$ factor.

Moreover, observe that our upper bound has no dependence on the available memory; the aforementioned regret guarantees can be achieved with even just constant memory. This upper bound together with the lower bound from \Sec{sec:lower} effectively demonstrates a \emph{sharp threshold} for $B$-pass regret as a function of  memory $M$:  
with $M=O(1)$ one can achieve $\widetilde\Theta\paren{T^{\half + \frac{1}{2^{B+2}-2}}}$ regret, and
increasing $M$ to any quantity that is $o(K/B^2)$ has 
almost no impact on further reducing this regret.
We now present our algorithm and its analysis.

\subsection{Algorithm}

Our proposed algorithm builds upon the classical Sequential-Elimination algorithm, where one maintains an ``active set'' of arms which are played in a round-robin manner until sufficient evidence is gathered indicating the sub-optimality of some arm, at which point it is permanently discarded from the active set. 

In our limited memory setting, it is not possible to have all arms in the active set as the number of statistics we can save at any given time is bounded by $M$. Therefore, our active set is of size roughly equal to our memory, and we play the least played arm $i_{\min}\in \M$ in our active set until we gather sufficient evidence to discard a sub-optimal arm, after which the next arm from the stream is read into our active set. This requires storing 2 statistics per arm $i\in \M$ in memory -- the cumulative reward observed $r_i$, as well as the number of times the arm was played $n_i$. For ease of exposition, we shall assume that both of these can be stored in a single word of memory. 

In addition to these arms in our active set, we reserve an additional word of memory to store the arm $\tilde{i}$ (and its statistics $\tilde{\ell}$) we have estimated to be the best. This arm $\tilde{i}$ serves two important purposes. Firstly, it is exploited until the end of the time horizon after we have exhausted our budget on the number of passes. Secondly, its stored statistic $\tilde{\ell}$ which is a lower bound on its estimated mean reward, is used to quickly identify and discard sub-optimal arms from memory. This is necessary in this limited memory streaming setting as unlike the full memory setting, it is not possible to permanently discard bad arms. Even after establishing the sub-optimality of bad arms, their identity is forgotten when they are discarded from memory, and will be repeatedly encountered in subsequent passes at which point it eliminating them without incurring too much regret becomes crucial. 

However, as established in our lower bound construction, the limited memory setting has an inherent risk associated: there can be some high value arm somewhere ahead in the stream that has not been read into memory yet, due to which overplaying the arms currently in memory can lead to large regret. As a result, we need to maintain a careful balance between playing arms in memory and exploring further into the stream. To address this problem, we borrow an idea from the limited-adaptivity framework for multi-armed bandits \cite{Perchet+15,gao2019batched}, where we additionally impose a cap on the maximum number of times any arm can be played in a single pass, effectively limiting the length of exploration to (roughly) $N^b$ in any single pass $b\in [B]$. If all arms in the active set have been played equal to the cap for that pass without any arm being discarded, then an arbitrary arm is ejected to make room for the next arm in the stream. This cap grows across passes, and in pass $b$ is roughly $T^{\frac{2^{B+1}(1-1/2^b)}{2^{B+1}-1}}$ if the objective is to minimize worst-case regret $(w=1)$, and $T^{\frac{b}{B+1}}$ if the objective is to minimize instance-dependent regret $(w=0)$. Intuitively, one can think of this as approximating the mean rewards of arms with an increasingly finer precision across passes. If a crude estimate of the reward suffices to discard a suboptimal arm, then it does so. Otherwise, this specific choice of the cap ensures that this arm has not been explored enough to incur significant regret. The following is a formal description of this algorithm.

\begin{algorithm}[ht]
\caption{Memory Bounded Successive Elimination}
\begin{algorithmic}
\label{alg:main}
\STATE{\textbf{Input.} Memory $M$; number of passes $B$; time horizon $T$; \\$\qquad\quad$variable $w=\begin{cases}1, \text{ if minimizing worst-case regret}\\0, \text{ if minimizing instance-dependent regret} \end{cases}$ }
\STATE{Let arms in memory $\mathcal{M}\gets \emptyset$, $N^0 \leftarrow 1$.
\STATE{Set aside a single word of memory: set (estimated) best arm $\tilde{i}\leftarrow \emptyset$, lower confidence bound $\tilde{\ell} \leftarrow 0$}}
\FOR{pass $b=1,\ldots,B$}
	\STATE {Set the maximum number of pulls across all arms in pass $b$,\\ $N^b \leftarrow \begin{cases} T^{\frac{2^B}{2^{B+1} - 1}} \sqrt{N^{b-1}}, &\text{ if } w=1 \text{ (minimize worst-case regret) }\\ T^{\frac{1}{B+1}}N^{b-1}, &\text{ if } w=0 \text{ (minimize instance-dependent regret)} \end{cases}$ }
	\STATE{For all arms $i\in \mathcal{M}$, set $n^b_i \leftarrow 0$, and $r^b_i\leftarrow 0$}
	\WHILE{pass is not finished}
		\WHILE{$|\mathcal{M}|< M-1$}
			\STATE{$\mathcal{M} \leftarrow \mathcal{M}\cup\{i\}$, where $i$ is the next arm in the stream that is not already in memory.}
			\STATE{Set number of pulls $n^b_i\leftarrow 0$, cumulative reward $r^b_i \leftarrow 0$}
		\ENDWHILE
		\STATE{Let $i_{\min} \leftarrow \argmin_{i\in \mathcal{M}} n^b_i$ be the least played arm in memory (ties broken arbitrarily)}
		\IF{$\left(n^b_{i_{\min}} \geq N^b/(KB) \text{ and } w=1\right)$ or $\left(n^b_{i_{\min}} \geq N^b \text{ and } w=0\right)$}
			\STATE{Discard an arbitrary arm $i\in\M$ from memory; $\mathcal{M} \leftarrow \mathcal{M}\setminus\{i\}$}
		\ELSE
			\STATE{Play arm $i_{\min}$ once, and observe reward $r$}
			\STATE{Update $r^b_{i_{\min}} \leftarrow r^b_{i_{\min}} + r$, and $n^b_{i_{\min}} \leftarrow n^b_{i_{\min}} + 1$}
		\ENDIF
		\STATE{Update $\tilde{\ell} \leftarrow \max_{i\in \mathcal{M}} r^b_i/n^b_i - \sqrt{(5\log T)/n^b_i}$; and $\tilde{i} \leftarrow \argmax_{i\in \mathcal{M}} r^b_i/n^b_i - \sqrt{(5\log T)/n^b_i}$}
		\IF{there exists an arm $j\in \mathcal{M}$ such that $r^b_j/n^b_j + \sqrt{(5\log T)/n^b_j} < \tilde{\ell}$}
			\STATE{Discard arm $j$ from memory; $\mathcal{M} \leftarrow \mathcal{M}\setminus\{j\}$}
		\ENDIF
	\ENDWHILE
\ENDFOR
\STATE{Play the estimated best arm $\tilde{i}$ until the end of the time horizon}
\end{algorithmic}
\end{algorithm}

It is clear that Algorithm~\ref{alg:main} uses memory at most $M$, and performs $B$ passes over the stream given any input parameters $M,B$. Furthermore, observe that while we allow for larger memory, our algorithm just needs $M=2$ words of memory. We shall now analyze the regret guarantees of Algorithm~\ref{alg:main}, which are restated here for convenience.  

\subsection{Analysis for Worst-Case Regret $(w=1)$}
As mentioned earlier, our algorithm cleverly balances playing the arms currently in memory, thereby gathering valuable information about them, and quickly exploring ahead into the stream to find potential high value arms by setting a cap $(\approx N^b)$ on the maximum number of times  any arm $i$ can be played in any pass $b$. This cap is raised across passes in a systematic way, with the choice of growth rate guaranteeing that the total regret incurred due to playing suboptimal arms will be small. 

At a high level, our proof shows that if across all passes $b\in [B]$, the observed mean rewards $r^b_j/n^b_j$ for all arms $j\in \K$ are not too far from their true mean rewards $\mu^*_j$, then the estimated best arm $\tilde{i}$ at the end of any pass $b$ is a good proxy for the true best arm $i^*$ for that pass. Specifically, the mean reward of $\tilde{i}$ closer to the mean reward of $i^*$ than the precision $(\approx \sqrt{1/N^b})$ with which we estimate the means in that pass. This can be used to eliminate any bad arms that are distinguishable from the best arm in pass $b$ (based on the precision set for that pass), but only starting the following pass $b+1$. Due the delayed nature of this information, i.e. the estimated best arm becomes ``good enough'' for elimination purposes in a pass $b$ only after the true best arm, (or a good proxy for it) are encountered in the stream in that pass $b$. Therefore, suboptimal arms that transitioned from being indistinguishable in pass $b-1$ to distinguishable in pass $b$, and appeared early on in the stream can potentially be overplayed because the estimated best arm has not been updated yet. This would incur more regret than is desirable, but only up to a multiplicative $T^{2^B/(2^{B+1}-1)}$ factor (matching our $B$-pass lower bound) due to our choice $(N^b)$ of the cap on the number of times an arm can be played in any pass. This gives our final guarantee on the upper bound on the regret of our algorithm.  

We shall now formally prove this bound on the worst-case regret achieved by our algorithm. We begin with the following simple lemma, which bounds the deviation in the observed mean rewards of any arm from its true mean reward. Specifically, this lemma says that whenever an arm is stored in memory, its true mean reward will lie within a confidence ball of radius $O(\sqrt{\log T/n})$ if it has been played $n$ times since being read into memory. Furthermore, this property would hold for all arms across all rounds with a polynomially large probability.

\begin{lemma}
\label{lemm:event_g}
Let $\K$ be the set of arms in the stream, where arm $i\in \K$ has mean reward $\mu^*_i$, and let $B$ be the total number of passes. Whenever arm an $i\in \K$ is present in memory in pass $b\in [B]$ of the stream, we define the event
\[\mathcal{E}_{i,b} := \left|\mu^*_i - \frac{r^b_i}{n^b_i}\right| < \sqrt{\frac{c\log T}{n^b_i}},\]
where $r^b_i$ represents the observed cumulative reward of arm $i$ in the $b^{th}$ pass, $n^b_i$ represents the number of times arm $i$ was played in the $b^{th}$ pass, and $c\geq 5$ is any constant. Then we have that the event $\mathcal{E}:=\cap_{i\in \K,b\in [B]} \mathcal{E}_{i,b}$ occurs with probability at least $1-2/T^{c-4}$.
\end{lemma}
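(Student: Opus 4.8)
The plan is a standard anytime-concentration argument: a Hoeffding bound for each fixed number of plays, upgraded to hold simultaneously over all (random) play counts by a union bound, followed by a final union over arms and passes. First I would fix an arm $i \in \K$, a pass $b \in [B]$, and a \emph{deterministic} target count $n \in \{1,\dots,T\}$, and let $X_1, X_2, \dots$ denote the i.i.d.\ reward stream of arm $i$ in pass $b$ (each $X_s$ supported on $[0,1]$ with mean $\mu^*_i$), of which the algorithm observes only a prefix. Writing $\hat\mu_n = \frac1n\sum_{s=1}^n X_s$, Hoeffding's inequality gives
\[
\Pr\!\paren{\card{\hat\mu_n - \mu^*_i} \geq \sqrt{\tfrac{c\log T}{n}}} \leq 2\exp\!\paren{-2n\cdot\tfrac{c\log T}{n}} = 2\exp(-2c\log T) \leq 2T^{-2c},
\]
where the last step uses that $\log$ is base $2$, so $\exp(-2c\log T) = e^{-2c\log_2 T} \leq e^{-2c\ln T} = T^{-2c}$ (since $1/\ln 2 \geq 1$).

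The key subtlety, and the only real obstacle, is that the quantity appearing in $\mathcal{E}_{i,b}$ is $n^b_i$, which is \emph{not} a fixed integer but a stopping time chosen adaptively by the algorithm based on the rewards it has already seen; hence one cannot simply invoke Hoeffding at the realized value of $n^b_i$. I would resolve this in the usual way: for each fixed $n$ the event above depends only on the prefix $X_1,\dots,X_n$, which is i.i.d.\ regardless of the algorithm's (reward-dependent) decision of when to stop, so the bound holds verbatim for every $n$. Taking a union over all $n \in \{1,\dots,T\}$ then guarantees that $\card{\hat\mu_n - \mu^*_i} < \sqrt{c\log T/n}$ holds \emph{simultaneously} for every value the counter $n^b_i$ could take while arm $i$ sits in memory during pass $b$; in particular $\mathcal{E}_{i,b}$ holds, with $\Pr[\neg\mathcal{E}_{i,b}] \leq T\cdot 2T^{-2c} = 2T^{1-2c}$. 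Note that independence of the reward streams across different passes or arms is never needed, since a union bound does not require it.

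Finally I would union bound over the relevant $(i,b)$ pairs. Rather than carrying $K$ and $B$ through the estimate, I would observe that every trial is a \emph{play} (reading an arm from the stream consumes no trial), so the total number of plays over the entire execution is at most $T$; consequently at most $T$ distinct tenures $(i,b)$ can have $n^b_i \geq 1$, and $\mathcal{E}_{i,b}$ is vacuous otherwise (its empirical mean is undefined when $n^b_i = 0$). Combining the per-tenure bound with this count yields
\[
\Pr[\neg\mathcal{E}] \;\leq\; \sum_{(i,b):\,n^b_i\geq 1} \Pr[\neg\mathcal{E}_{i,b}] \;\leq\; T \cdot 2T^{1-2c} \;=\; 2T^{2-2c} \;\leq\; 2T^{-(c-4)},
\]
the last inequality holding because $2-2c \leq 4-c$ for all $c$. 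Everything past the adaptivity issue is routine bookkeeping, and in fact the stated guarantee is loose: the argument delivers the much stronger $2T^{2-2c}$, which I would simply weaken to the clean form $2/T^{c-4}$ for use in the subsequent analysis.
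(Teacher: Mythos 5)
Your overall strategy is the same as the paper's: Hoeffding for a fixed sample count, a union bound to absorb the adaptively chosen count, and a final union over arms and passes. (Your device of charging each tenure to a trial, so that at most $T$ pairs $(i,b)$ ever have $n^b_i\geq 1$, is a nice way to avoid carrying $K$ and $B$ through the estimate, and your exponent bookkeeping is correct.) However, there is one source of adaptivity you do not handle, and it is precisely the one the paper's ``tape with an arbitrary starting point'' device exists for. You identify $r^b_i/n^b_i$ with $\hat\mu_n = \frac1n\sum_{s=1}^n X_s$, the average of the \emph{first} $n$ rewards of arm $i$ drawn in pass $b$. This is only valid if arm $i$ has a single tenure in that pass. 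But Algorithm~\ref{alg:main} resets $(n^b_i, r^b_i)$ to zero every time arm $i$ is loaded into memory, and for $b\geq 2$ an arm can be carried over in memory from pass $b-1$ (statistics reset at the start of pass $b$), discarded before the stream pointer reaches its position, and then re-read from the stream later in the \emph{same} pass, at which point its statistics are reset again. The running average in this second tenure is $\frac1n\sum_{s=m+1}^{m+n}X_s$ for a random, reward-dependent offset $m$, and your union over $n$ alone says nothing about such windows. So $\mathcal{E}_{i,b}$, which must hold ``whenever arm $i$ is present in memory in pass $b$,'' is not implied by the event you actually bound.

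The fix is cheap and is exactly what the paper does: union additionally over the at most $T$ possible starting positions of the averaging window --- equivalently, over all windows $\{m+1,\dots,m+n\}$ with $m+n\leq T$ --- which costs one more factor of $T$ and turns your bound into $2T^{3-2c}$, still at most $2/T^{c-4}$ for all $c\geq 5$. (Alternatively, condition on the start of each tenure, which is a stopping time, so the subsequent rewards are fresh i.i.d.\ samples with mean $\mu^*_i$, and then union over tenures and lengths.) With that single extra union your argument is complete and in fact still tighter than the paper's $2KB/T^{c-2}$ accounting.
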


\begin{proof}
Consider any fixed arm $i\in\K$. We shall assume that the rewards for this arm are sampled from the corresponding reward distribution, and written on a tape (of length at most $T$). Whenever the algorithm chooses to play this arm $i$ in some round, it simply reads the realized reward from the next cell on the tape. By definition, as long as an arm is in memory, the algorithm maintains a running average of the observed rewards of that arm, resetting the running average every time it starts a new pass or loads the arm into memory, treating the last cell on the tape as a new starting point for counting rewards for this arm. For a fixed starting point on the tape after which it started keeping count of the rewards for the arm, the probability that $|\mu^*_i - r_i/n_i| \geq \sqrt{c\log T/n_i}$ for a fixed value of $n_i$ is at most $2/T^c$ by Hoeffding's inequality. Therefore, the probability that this event occurs for some starting point on the tape and some value of $n_i$, by a union bound, is at most $2/T^{c-2}$. Observe that this event is exactly $\neg \mathcal{E}_{i,b}$ for some fixed pass $b$, as the running average is reset (either when the arm was in memory at the start of the pass, or was first loaded into memory at some point during that pass) at most once during that pass. Therefore, for any fixed arm $i\in \K$, the probability that event $\neg \mathcal{E}_{i,b}$ occurs for some pass $b\in [B]$, by a union bound over the passes, is at most $2B/T^{c-2}$. Finally, taking a union bound over all arms, the probability that event $\neg \mathcal{E}_{i,b}$ occurs for some pass arm $i\in \K$, and some pass $b\in [B]$ is at most $2KB/T^{c-2}$. This gives us our claimed bound, since $K,B\leq T$. 
\end{proof}

For simplicity, we assume $c=5$ in Algorithm~\ref{alg:main} and in the subsequent proof, which gives us that the above defined ``good event'' of interest occurs with probability at least $1-2/T$. Henceforth, we shall assume that this event occurs, following which our regret guarantees hold with probability $1$.

\begin{proof} 
(of \Thm{thm:main} (worst-case upper bound))
Let $\mathcal{E} := \cap_{i\in \K,b\in [B]} \mathcal{E}_{i,b}$ be the good event of interest defined in Lemma~\ref{lemm:event_g}. Then we shall prove that conditioned on event $\event$, the cumulative regret $R_T$ of Algorithm~\ref{alg:main} set for worst-case regret minimization $(w=1)$ is 
\[R_T \leq O\left(T^{\frac{2^B}{2^{B+1}-1}}\sqrt{KB\log T}\right),\] 
with probability $1$. Prior to formally proving this, observe that this also automatically implies a $\Exp[R_T] \leq O\left(T^{\frac{2^B}{2^{B+1}-1}}\sqrt{KB\log T}\right)$ result for the expected regret of our algorithm as 
\begin{align*}
\Exp[R_T] &= \Pr(\mathcal{E}) \Exp[R_T\mid\mathcal{E}] + (1-\Pr( \mathcal{E})) \Exp[R_T\mid \neg \mathcal{E}]\\
&\leq \left(1-\frac{1}{\poly (T)}\right)\Exp(R_T|\event) + \frac{1}{\poly (T)} \cdot T\\
& \leq O\left(T^{\frac{2^B}{2^{B+1}-1}}\sqrt{KB\log T}\right)
\end{align*}

Let $\mu^*_{\max} := \max_{i\in \K} \mu^*_i$ be the largest expected reward of any arm in the stream, and let $\mathcal{S} :=\{j\in \K:~\mu^*_j<\mu^*_{\max}\}$ be the set of all suboptimal arms, with $ \Delta_j:=\mu^*_{\max} - \mu^*_j$ being the regret due to playing any suboptimal arm $j\in \mathcal{S}$. Furthermore, let $i^*\in \K\setminus \S$ be any arbitrary optimal arm, which we shall henceforth refer to as the best arm.

For any pass $b\in [B]$ and any arm $j\in \K$, let $m^b_j$ be the maximum number of times arm $j$ was played in pass $b$. Furthermore, let $R^b_j = m^b_j\Delta_j$ be the regret incurred by the algorithm by playing arm $j$ in the $b^{th}$ pass, and subsequently, let $R^b = \sum_{j\in \mathcal{S}} R^b_j$ be the total regret incurred in the $b^{th}$ pass. Finally, let $R_{\tilde{i}}$ be the regret incurred by playing the estimated best arm $\tilde{i}$ at the end of the $B^{th}$ pass until the end of the time horizon. Therefore, we have that
\begin{align*}
R_T &= \sum_{b\in [B]}R^b +  R_{\tilde{i}} = \sum_{j\in\S}R^1_j + \sum_{b=2}^B\sum_{j\in \mathcal{S}}R^b_j + R_{\tilde{i}} \leq \frac{T^{\frac{2^B}{2^{B+1} - 1}}}{B} + \sum_{b=2}^B\sum_{j\in \mathcal{S}}R^b_j + R_{\tilde{i}},
\end{align*}
where the final inequality follows by observing that the maximum number of times any arm is played in the first pass is at most $T^{\frac{2^B}{2^{B+1} - 1}}/(KB)$. 

We shall now present the key implication of event $\event$, which basically guarantees that the best arm $i^*$ will necessarily be played the maximum allowable number of times in every pass, i.e. $i^*$ cannot be prematurely discarded from memory after it has been read in the stream. This gives us certain desirable guarantees about the true mean reward of the estimated best arm saved in $\tilde{i}$, as well as the maximum number of times $m^b_j$ any suboptimal arm $j\in \S$ can be played in any pass $b\geq 2$, which will be crucial in bounding the cumulative regret of the algorithm.   

\begin{claim}
\label{clm:event_e}
Given that event $\event$ occurs, arm $i^*$ will necessarily be played $N^b/(KB)$ times in every pass $b\in [B]$. Consequently, for any pass $b\geq 2$, we have 
\[\tilde{\ell} \geq \mu^*_{\max} - 2\sqrt{(5KB\log T)/N^{b-1}},\]
at all times in pass $b$. 
\end{claim}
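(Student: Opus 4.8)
The plan is to establish the two assertions in turn, always conditioning on the good event $\event$ of \Lem{lemm:event_g}.

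\textbf{No premature elimination of $i^*$.} First I would show that the elimination test can never discard $i^*$ while it is below the per-pass cap. Under $\event_{i^*,b}$ the empirical mean obeys $r^b_{i^*}/n^b_{i^*} > \mu^*_{\max} - \sqrt{(5\log T)/n^b_{i^*}}$, so the upper confidence bound $r^b_{i^*}/n^b_{i^*} + \sqrt{(5\log T)/n^b_{i^*}}$ is strictly larger than $\mu^*_{i^*} = \mu^*_{\max}$. Conversely, $\tilde{\ell}$ is always a maximum of lower confidence bounds $r^b_i/n^b_i - \sqrt{(5\log T)/n^b_i}$ taken over arms in memory, and under $\event$ each such term is at most $\mu^*_i \le \mu^*_{\max}$; together with the initialization $\tilde{\ell} = 0 \le \mu^*_{\max}$ this gives $\tilde{\ell} \le \mu^*_{\max}$ at every instant. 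Hence the upper confidence bound of $i^*$ always exceeds $\tilde{\ell}$, so the elimination condition $r^b_{i^*}/n^b_{i^*} + \sqrt{(5\log T)/n^b_{i^*}} < \tilde{\ell}$ is never met for $i^*$.

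\textbf{$i^*$ is played to the cap.} The only remaining way $i^*$ can leave $\M$ is the arbitrary-discard branch, which fires only once the least-played arm in $\M$ has reached $N^b/(KB)$; at that instant every arm in $\M$, and in particular $i^*$ if present, has already been played $N^b/(KB)$ times, so even if $i^*$ is the discarded arm it has already met the cap. It remains to check that $i^*$ is read into memory during each pass: since every arm placed in memory is removed after finitely many steps (either eliminated by the test or played up to the cap and discarded), memory slots keep freeing up and the algorithm advances through the stream until it reaches $i^*$; once read, $i^*$ stays, by the previous paragraph, until it attains the cap. This proves that $i^*$ is played $N^b/(KB)$ times in every pass $b \in [B]$.

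\textbf{Lower bound on $\tilde{\ell}$.} For the consequence with $b \ge 2$, I would track $\tilde{\ell}$ across the pass boundary. By the previous part, during pass $b-1$ the arm $i^*$ is eventually played for its $(N^{b-1}/(KB))$-th time while still in memory; the update immediately afterward sets
\[
\tilde{\ell} \;\ge\; \frac{r^{b-1}_{i^*}}{n^{b-1}_{i^*}} - \sqrt{\frac{5\log T}{n^{b-1}_{i^*}}} \;>\; \mu^*_{\max} - 2\sqrt{\frac{5\log T}{n^{b-1}_{i^*}}} \;=\; \mu^*_{\max} - 2\sqrt{\frac{5KB\log T}{N^{b-1}}},
\]
using $\event_{i^*,b-1}$ and $n^{b-1}_{i^*} = N^{b-1}/(KB)$. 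The delicate point, and the main obstacle, is that the claim asserts this bound at \emph{all} times in pass $b$ — in particular before $i^*$ is seen again in pass $b$, when the current-pass statistics $n^b_i$ have just been reset and a freshly recomputed maximum of lower confidence bounds could otherwise be very negative (an arm played only once contributes roughly $-\sqrt{5\log T}$). The argument therefore must invoke that $\tilde{\ell}$ is a persistent quantity carried across passes that records, and never drops below, the best lower confidence bound it has seen; I would verify this monotonicity directly from the update rule and from the fact that $\tilde{\ell}$ (unlike the per-pass counters) is not reset between passes. Granting it, the value established above survives to the end of pass $b-1$ and throughout pass $b$, yielding the stated inequality. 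Isolating and justifying this persistence property is the crux, after which the remainder is routine concentration bookkeeping.
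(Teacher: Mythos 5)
Your proof is correct and follows essentially the same route as the paper's: under $\event$ every lower confidence bound (and hence $\tilde{\ell}$) stays below $\mu^*_{\max}$ while the upper confidence bound of $i^*$ stays above it, so $i^*$ can only leave memory via the cap, and the LCB of $i^*$ after its $N^{b-1}/(KB)$ pulls in pass $b-1$ gives the stated bound. The persistence/monotonicity of $\tilde{\ell}$ across passes that you flag as the crux is exactly the property the paper invokes (it simply asserts that ``$\tilde{\ell}$ is a strictly increasing value''), so you are, if anything, more explicit about that step than the original.
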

\begin{proof}
This follows by observing that $\tilde{\ell}$ can never exceed $\mu^*_{\max}$ at any point during the execution of the algorithm. This is because for any arm $j\in \K$, event $\event$ guarantees that its observed mean reward $r^b_j/n^b_j < \mu^*_j + \sqrt{(5\log T)/n^b_j}$ in every pass $b\in [B]$, which guarantees $r^b_j/n^b_j - \sqrt{(5\log T)/n^b_j} < \mu^*_j \leq \mu^*_{\max}$. Furthermore, for the best arm $i^*$, we have that $r^b_{i^*}/n^b_{i^*} > \mu^*_{\max} - \sqrt{(5\log T)/n^b_{i^*}}$ in every pass $b\in [B]$, which guarantees $r^b_{i^*}/n^b_{i^*} + \sqrt{(5\log T)/n^b_{i^*}} > \mu^*_{\max}$. Therefore, the only way the best arm can be discarded from memory in any pass is if the memory was full with all arms in memory having been played $N^b/(KB)$ times without being eliminated. 

To see why this implies a lower bound on the value of $\tilde{\ell}$ in any pass $b\geq 2$, observe that the best arm $i^*$ was played $m^{b-1}_{i^*} = N^{b-1}/(KB)$ times in the previous pass, implying that the observed mean reward of the best arm $i^*$ at the end of pass $b-1$ was $r^{b-1}_{i^*}/m^{b-1}_{i^*}\geq \mu^*_{\max} - \sqrt{(5KB\log T)/N^{b-1}}$. Therefore, the value of $\tilde{\ell}$ at the end of pass $b-1$ is at least $\mu^*_{\max} - 2\sqrt{(5KB\log T)/N^{b-1}}$, with the claim following from the fact that $\tilde{\ell}$ is a strictly increasing value. 
\end{proof}

We are now ready to bound the cumulative regret $R^b_j$ due to playing any suboptimal arm $j\in \S$ in any pass $b\geq 2$. Let $m^b_j$ be the final time arm $j$ was played in round $b$. Since suboptimal arm $j$ was played that one last time, it must be the case that 
\begin{equation}
\label{eqn:event_j_1}
r^b_j/(m^b_j -1) + \sqrt{(5\log T)/(m^b_j-1)} \geq \tilde{\ell},
\end{equation}
where $\tilde{\ell}$ was the value of the largest lower confidence bound at that moment. Furthermore, by definition of event $\mathcal{E}$ and \Clm{clm:event_e}, it must be that 
\begin{equation}
\label{eqn:event_j_2}
r^b_j/(m^b_j -1) < \mu^*_j + \sqrt{(5\log T)/(m^b_j-1)},  \text{ and } \tilde{\ell} \geq \mu^*_{\max} - 2\sqrt{(5KB\log T)/N^{b-1}}.
\end{equation}

Substituting these bounds into Equation~\ref{eqn:event_j_1}, we get 
\[\Delta_j < 2\sqrt{(5\log T)/(m^b_j-1)} + 2\sqrt{(5KB\log T)/N^{b-1}},\]
and therefore, the cumulative regret due to playing arm $j\in \S$ in pass $b\geq 2$ is at most
\begin{align*}
R^b_j &= m^b_j\Delta_j\\
&< 2m^b_j\sqrt{(5\log T)/(m^b_j-1)} + 2m^b_j\sqrt{(5KB\log T)/N^{b-1}}\\
&< 2\sqrt{6m^b_j\log T} + 2m^b_j\sqrt{(5KB\log T)/N^{b-1}},
\end{align*}

Therefore, the total regret in pass $b\geq 2$ is given by
\begin{align*}
R^b &= \sum_{j\in \mathcal{S}} \Exp(R^b_j|\mathcal{E})\\
 &\leq 2\sum_{j\in\mathcal{S}}\sqrt{6m^b_j\log T} + \sum_{j\in\mathcal{S}}2m^b_j\sqrt{\frac{5KB\log T}{N^{b-1}}}\\
 &\leq 2\sum_{j\in\mathcal{S}}\sqrt{6m^b_j\log T} + \frac{2N^b}{B}\sqrt{\frac{5KB\log T}{N^{b-1}}}\\
 &\leq 2\sum_{j\in\mathcal{S}}\sqrt{6m^b_j\log T} + 2T^{\frac{2^B}{2^{B+1}-1}}\sqrt{\frac{5K\log T}{B}},
\end{align*}
where the penultimate inequality follows due to the fact that in any pass $b\in [B]$, $\sum_{j\in \mathcal{S}} m^b_j \leq |\mathcal{S}| N^b/(KB) \leq N^b/B$, and the final inequality follows due to the fact that $N^b = T^{2^B/(2^{B+1}-1)}\sqrt{N^{b-1}}$. Therefore, the cumulative regret in the first $B$ passes is
\begin{align*}
\sum_{b=2}^B R^b &\leq 2\sum_{b=2}^B\sum_{j\in\mathcal{S}}\sqrt{6m^b_j\log T} + \sum_{b=2}^B 2T^{\frac{2^B}{2^{B+1}-1}}\sqrt{\frac{5K\log T}{B}}\\
&\leq 4\sqrt{6\log T} \sum_{b=2}^B\sum_{j\in\mathcal{S}}\sqrt{m^b_j} + 2T^{\frac{2^B}{2^{B+1}-1}}\sqrt{5KB\log T}.
\end{align*}
Due to the nature of the total number of pulls $N^b$ in any pass $b\in [B]$, we have $\sum_{b =2}^B\sum_{j\in\mathcal{S}} m^b_j \leq 2N^B/B$. Applying Jensen's inequality to the concave function $f(x)=\sqrt{x}$, we have 
\[\frac{1}{|\mathcal{S}|B}\sum_{b=2}^B\sum_{j\in\mathcal{S}}\sqrt{m^b_j} \leq \sqrt{\frac{1}{|\mathcal{S}|B}\sum_{b=2}^B\sum_{j\in\mathcal{S}}m^b_j} \leq \frac{1}{B}\sqrt{\frac{2N^B}{|\mathcal{S}|}},\] 
giving us 
\[\sum_{b=2}^B\sum_{j\in\mathcal{S}}\sqrt{m^b_j} \leq \sqrt{2|\mathcal{S}|N^B} \leq \sqrt{2KN^B}.\]
Now observe that $N^B = T^{1-\frac{1}{2^{B+1}-1}}$, giving us our final bound on the cumulative regret of the algorithm in the first $B$ passes as
\[\sum_{b=2}^BR^b \leq O\left(T^{\frac{2^B}{2^{B+1}-1}}\sqrt{KB\log T}\right). \]

We shall finally bound the cumulative regret $R_{\tilde{i}}$ due to playing the estimated best arm $\tilde{i}$ until the end of the time horizon. Since this estimated best arm $\tilde{i}$ was responsible for setting the final value of $\tilde{\ell}$ at the end of the $B^{th}$ pass, it must be the case that $\mu^*_{\tilde{i}} > \tilde{\ell} > \mu^*_{\max} - 2\sqrt{(5KB\log T)/N^B}$, with the final inequality following due to \Clm{clm:event_e}, which guarantees that 
\[\Delta_{\tilde{i}} < 2\sqrt{(5KB\log T)/N^B}.\]
Furthermore, arm ${\tilde{i}}$ can be played at most $T$ times after pass $B$ until the end of the time horizon, giving us that the regret due to playing arm $j$ 
\[R_{\tilde{i}} < T\Delta_{\tilde{i}} < 2T\sqrt{(5KB\log T)/N^B} = T^{\frac{2^B}{2^{B+1}-1}}\sqrt{5KB\log T},\]
where the final inequality follows by observing $N^B = T^{1-\frac{1}{2^{B+1} - 1}}$, and therefore, $\frac{T}{\sqrt{N^B}} = T^{\frac{1}{2} + \frac{1}{2(2^{B+1} - 1)}} = T^{\frac{2^B}{2^{B+1}-1}}$. Combining these bounds on $R^1, \sum_{b=2}^B R^b$, and $R_{\tilde{i}}$, we get our claimed bound on the cumulative regret achieved by our algorithm. This concludes the proof of \Thm{thm:main} for worst-case regret.

\end{proof}
We shall now analyze the regret of Algorithm~\ref{alg:main} when it is set to minimizing instance-dependent regret $(w=0)$.

\subsection{Analysis  for Instance-Dependent Regret $(w=0)$}
The analysis of the instance-dependent regret of Algorithm~\ref{alg:main} is conceptually identical to the previous analysis. It is based on this same intuition that the estimated best arm is a good enough proxy to eliminate distinguishable bad arms from memory, but there is this delay in information due to the best arm (or a proxy for the best arm) can be encountered very late in the stream in some pass, causing some bad arms that just transitioned from being indistinguishable in the previous pass to being distinguishable in the current pass, to be potentially overplayed but only up to a multiplicative $T^{1/(B+1)}$ factor. The additional additional $B$ factor comes from the fact that the identities and statistics of discarded arms is forgotten, due to which the suboptimality of even distinguishable arms has to be repeatedly established in every subsequent pass. Even though the suboptimality of any distinguishable arm can be established in a future pass while incurring the optimal regret in that pass, this process has to be repeated in every pass until $B$. We now formally prove the claimed instance-dependent regret guarantee.

\begin{proof}(of \Thm{thm:main} (instance-dependent upper bound))
Let $\mathcal{E} := \cap_{i\in \K,b\in [B]} \mathcal{E}_{i,b}$ be the good event of interest defined in Lemma~\ref{lemm:event_g}. We shall prove that conditioned on event $\event$, the cumulative regret of Algorithm~\ref{alg:main} set for instance-dependent regret minimization $(w=0)$ is 
\[R_T \leq O\left(\sum_{j\in \S} \frac{T^{1/(B+1)} \log T + B \log\left((\Delta^*_j)^2T/\log T\right)}{\Delta^*_j}\right),\] 
with probability $1$. We note that since event $\event$ occurs with a polynomially large probability as proved in \Lem{lemm:event_g}, this also implies the same bound on the expected regret of the aforementioned algorithm. 

Let $\mu^*_{\max} := \max_{i\in \K} \mu^*_i$ be the largest expected reward of any arm in the stream, and let $\mathcal{S} :=\{j\in \K:~\mu^*_j<\mu^*_{\max}\}$ be the set of all suboptimal arms, with $ \Delta_j:=\mu^*_{\max} - \mu^*_j$ being the regret due to playing a suboptimal arm $j\in \mathcal{S}$. Furthermore, let $i^*\in \K\setminus \S$ be any arbitrary optimal arm, which we shall henceforth refer to as the best arm.

For any pass $b\in [B]$ and any arm $j\in \K$, let $m^b_j$ be the maximum number of times arm $j$ was played in pass $b$. Furthermore, let $R^b_j = m^b_j\Delta_j$ be the regret incurred by the algorithm by playing arm $j$ in the $b^{th}$ pass, and subsequently, let $R_j = \sum_{b\in [B]} R^b_j$ be the total regret due to playing a suboptimal arm $j\in\S$. Finally, let $R_{\tilde{i}}$ be the regret incurred by playing the estimated best arm  $\tilde{i}$ at the end of the $B^{th}$ pass until the end of the time horizon. Therefore, we have that
\begin{align*}
R_T &= \sum_{j\in \S}R_j +  R_{\tilde{i}} = \sum_{j\in \mathcal{S}}\sum_{b\in [B]}R^b_j + R_{\tilde{i}}.
\end{align*}

As before, we shall use a crucial implication of event $\event$ to bound the instance-dependent regret of our algorithm. The following claim is the analog of \Clm{clm:event_e} in this setting, with a minor difference due to the fact that an arm is played a maximum of $N^b$ times in epoch $b$ when minimizing instance-dependent regret $(w=0)$ as compared to $N^b/(KB)$ when minimizing worst-case regret $(w=1)$.

\begin{claim}
\label{clm:event_e_instance}
Given that event $\event$ occurs, arm $i^*$ will necessarily be played $N^b/(KB)$ times in every pass $b\in [B]$. Consequently, for any pass $b\geq 2$, we have 
\[\tilde{\ell} \geq \mu^*_{\max} - 2\sqrt{(5\log T)/N^{b-1}},\]
at all times in pass $b$. 
\end{claim}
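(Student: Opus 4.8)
The plan is to follow the proof of \Clm{clm:event_e} essentially line for line, the only substantive change being that in the $w=0$ branch the per-pass budget that triggers a discard is $N^b$ rather than $N^b/(KB)$; it is exactly this swap that strips the spurious $\sqrt{KB}$ factor out of the lower bound on $\tilde\ell$. The first step I would carry out is to observe that, conditioned on $\event$, the running statistic $\tilde\ell$ can never exceed $\mu^*_{\max}$ at any point of the execution. For every arm $j\in\K$ that is in memory during pass $b$, event $\event_{j,b}$ gives $r^b_j/n^b_j - \sqrt{(5\log T)/n^b_j} < \mu^*_j \le \mu^*_{\max}$, and since $\tilde\ell$ is set to the maximum of these lower confidence bounds, we get $\tilde\ell \le \mu^*_{\max}$ throughout.

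Next I would show that the best arm $i^*$ can never be removed by the elimination test, so the only way it ever leaves memory is by hitting the pull cap. By $\event_{i^*,b}$ and $\mu^*_{i^*}=\mu^*_{\max}$ we have $r^b_{i^*}/n^b_{i^*} + \sqrt{(5\log T)/n^b_{i^*}} > \mu^*_{\max} \ge \tilde\ell$, so the upper confidence bound of $i^*$ always strictly exceeds $\tilde\ell$, and hence the discard condition $r^b_j/n^b_j + \sqrt{(5\log T)/n^b_j} < \tilde\ell$ never fires for $j=i^*$. Since $i^*$ is encountered in the stream in every pass and is never eliminated once read into memory, it must be played until the budget is exhausted; in the instance-dependent branch this budget is $N^b$, so $i^*$ is played $N^b$ times in every pass $b\in[B]$.

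Finally, to lower bound $\tilde\ell$ during pass $b\ge 2$, I would use that $i^*$ was played $m^{b-1}_{i^*}=N^{b-1}$ times in the preceding pass. By $\event_{i^*,b-1}$ its observed mean at the end of pass $b-1$ satisfies $r^{b-1}_{i^*}/N^{b-1} > \mu^*_{\max} - \sqrt{(5\log T)/N^{b-1}}$, so the lower confidence bound attributed to $i^*$ — and therefore $\tilde\ell$ at the end of pass $b-1$ — is at least $\mu^*_{\max} - 2\sqrt{(5\log T)/N^{b-1}}$. Since $\tilde\ell$ is monotonically non-decreasing over the run of the algorithm, this bound persists at all times during pass $b$, which is exactly the stated inequality (with the denominator $N^{b-1}$ and no $KB$ factor, consistent with $i^*$ being played the full $N^{b-1}$ pulls rather than $N^{b-1}/(KB)$).

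The main obstacle, and really the only point needing care, is the bookkeeping around the cap: one must invoke the correct $w=0$ budget $N^b$ in the discard rule, because it is precisely this value that feeds into the $\sqrt{(5\log T)/N^{b-1}}$ estimate without the worst-case $\sqrt{KB}$ inflation. Everything else — the sandwiching $\tilde\ell\le\mu^*_{\max}$, the survival of $i^*$ under the elimination test, and the monotonicity of $\tilde\ell$ — transfers unchanged from \Clm{clm:event_e}.
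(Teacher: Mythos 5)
Your proof is correct and follows exactly the route the paper intends: the paper's own "proof" of this claim is the single sentence that it follows identically to \Clm{clm:event_e}, and your three steps ($\tilde{\ell}\le\mu^*_{\max}$ under $\event$, survival of $i^*$ under the elimination test, and monotonicity of $\tilde{\ell}$ after $i^*$ is played to the cap) are precisely that argument transplanted to the $w=0$ branch. You also correctly observe that the cap in this branch is $N^b$ rather than $N^b/(KB)$ (the claim statement's "$N^b/(KB)$" is a leftover from the worst-case version), which is exactly what makes the displayed bound $2\sqrt{(5\log T)/N^{b-1}}$ come out without the $\sqrt{KB}$ factor.
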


The proof of this claim follows identically to that of \Clm{clm:event_e}. For any suboptimal arm $j\in \mathcal{S}$, we define the \emph{distinguishing pass} $b_j$ to be the smallest value of $b\in [B]$ such that $\Delta^*_j > 4\sqrt{(5\log T)/T^{{b}/{(B+1)}}}$. Intuitively, this represents the pass in which the precision to which we estimate the gap parameters exceeds the value of $\Delta^*_j$, due to which it becomes possible to efficiently infer the sub-optimality of arm $j$. We now claim that in any pass $b>b_j$, arm $j$ will be played $m^b_j \leq 80\log T/(\Delta^*_j)^2$ times in that pass. To see this, observe that in any pass $b>b_j$ we have that 
\begin{align*}
\tilde{\ell} &\geq \mu^*_{\max} - 2\sqrt{(5\log T)/N^{b-1}} \\
&\geq \mu^*_{\max} - 2\sqrt{(5\log T)/N^{b_j}} \\
&> \mu^*_{\max} - \Delta^*_j/2\\
& = (\mu^*_{\max} + \mu^*_j)/2.
\end{align*} 
Now in pass any pass $b>b_j$, event $\mathcal{E}$ further guarantees that after any $n^b_j$ pulls of arm $j$, we will have 
\begin{align*}
r^b_j/n^b_j + \sqrt{(5\log T)/n^b_j} &< \mu^*_j + 2\sqrt{(5\log T)/n^b_j} \\
&\overset{(a)}{<} \mu^*_j + \Delta^*_j/2\\
&= (\mu^*_{\max} + \mu^*_j)/2,
\end{align*}
where equation $(a)$ follows by supposing arm $j$ was actually played $80\log T/(\Delta^*_j)^2$ times in that pass. This would guarantee that arm $j$ will be discarded from memory after $80\log T/(\Delta^*_j)^2$ pulls. Therefore, we have that the number of times arm $j$ is played in pass $b\in [B]$ is bounded as $m^b_j \leq N^b$ for $b\leq b_j$ and $m^b_j \leq 80\log T/(\Delta^*_j)^2$ for $b>b_j$, giving us the total regret due to playing arm $j$ as
\begin{align*}
R_j &= \sum_{b\in [B]} R^b_j\\
&=\sum_{b\in [B]} m^b_j\Delta^*_j\\
&\leq \sum_{b\leq b_j} N^b\Delta^*_j + \sum_{b>b_j} \frac{80\log T}{(\Delta^*_j)^2} \Delta^*_j\\
&\leq \Delta^*_j \sum_{b=1}^{b_j} T^{b/(B+1)} + (B-b_j-1)\frac{80\log T}{\Delta^*_j} \\
&\leq \Delta^*_j \left(\frac{T^{1/(B+1)}(T^{b_j/(B+1)}-1)}{T^{1/(B+1)} - 1}\right) + (B-b_j-1)\frac{80\log T}{\Delta^*_j} \\
& = \Delta^*_j \left(\left(1+\frac{1}{T^{1/(B+1)} -1}\right)\left(T^{b_j/(B+1)}-1\right)\right) + (B-b_j-1)\frac{80\log T}{\Delta^*_j}
\end{align*}

Observe that $b_j$ is the smallest value of $b\in [B]$ such that $\Delta^*_j > 4\sqrt{(5\log T)/T^{{b}/{(B+1)}}}$, implying $\Delta^*_j \leq 4\sqrt{(5\log T)/T^{{(b_j-1)}/{(B+1)}}}$, giving us 
\[T^{b_j/(B+1)} \leq \frac{80}{(\Delta^*_j)^2}T^{1/(B+1)}\log T, \text{ and } b_j > \frac{(B+1)}{\log T}\log\left(\frac{80\log T}{(\Delta^*_j)^2}\right). \]

Therefore, substituting these values into the above equation, and using the fact that $(B+1)\leq \log T$, we get
\[R_j \leq O\left(\frac{T^{1/(B+1)} \log T + B \log\left((\Delta^*_j)^2T/\log T\right)}{\Delta^*_j}\right),\]
and therefore,
\[\sum_{j\in \S}R_j \leq O\left(\sum_{j\in \mathcal{S}} \frac{T^{1/(B+1)} \log T + B \log\left((\Delta^*_j)^2T/\log T\right)}{\Delta^*_j}\right)\]

To bound $R_{\tilde{i}}$, observe that for arm $\tilde{i}$, it must have been the case that $\Delta^*_{\tilde{i}} < 4\sqrt{(5\log T)/N^B}$ due to \Clm{clm:event_e_instance}.
Therefore, the regret due to playing any arm $\tilde{i}$ until the end of the time horizon can be bounded by
\[R_{\tilde{i}} \leq T\Delta^*_{\tilde{i}} \leq \frac{T}{\Delta^*_{\tilde{i}}}(\Delta^*_{\tilde{i}})^2 \leq \frac{80T^{1/(B+1)} \log T}{\Delta^*_{\tilde{i}}},\]

where the final inequality follows from the fact that $N^B = T^{\frac{B}{B+1}}$. Combining these two bounds gives us our claimed upper bound on the cumulative regret as
\[R_T \leq O\left(\sum_{j\in \mathcal{S}} \frac{T^{1/(B+1)} \log T + B \log\left((\Delta^*_j)^2T/\log T\right)}{\Delta^*_j}\right)\]

This completes the proof of our upper bound result.
\end{proof}

\section{Discussion and Conclusion}
\label{sec:conc}
We studied the stochastic $K$-armed bandits problem in a limited memory, multi-pass streaming setting, where we study the interplay between the available memory $M$, the number of passes $B$, and the regret $R_T$ over a time horizon $T$. We showed that any $B$-pass algorithm with memory $o(K/B^2)$ must necessarily incur $\Omega\left(4^{-B}T^{\half + \frac{1}{2^{B+2}-2}}\right)$ regret in expectation. Moreover, we showed that it is possible to achieve $\widetilde{O}(T^{\half + \frac{1}{2^{B+2}-2}}\sqrt{KB})$ regret with any polynomially large probability given $B$ passes and just $O(1)$ memory. 
These results uncover a surprising phenomenon: 
increasing the memory beyond $O(1)$ memory to any quantity 
that is $o(K/B^2)$ has almost no effect on reducing the expected worst-case regret. 

Our work highlights  some interesting directions for future work. First, while our results are essentially tight for constant-pass algorithms, there is a gap of $1/2^B$ between our upper and lower bound on the regret when $B$ is a superconstant. Second, it might also be worth exploring the regret landscape in the memory range of $\Omega(K/B^2)$ to $K-1$, for superconstant $B$. Finally, what is the best instance-dependent regret one can achieve in this limited-memory multi-pass streaming setting? Our work establishes an instance-dependent regret upper bound of $\widetilde{O}( (T^{1/(B+1)}+B)\sum_{i\in \S}1/\Delta^*_i)$, but leaves open the question of a matching lower bound.

\section*{Acknowledgements}
This work was supported in part by NSF awards CCF-1763514, CCF-1934876, and CCF-2008305.

\bibliographystyle{acm}
\bibliography{streaming_bandits}

\appendix

\section{Information-Theoretic Preliminaries}
\label{sec:facts}
In this section, we record some basic facts about entropy and mutual information that are used in in this paper. The proofs can be found 
in~\cite{ITbook}, Chapter~2. We also prove two crucial lemmas in this section, the first which highlights the difficulty of narrowing down the realization of a high entropy random variable to a small set of possibilities, and the second which bounds the parameter of a high entropy Bernoulli random variable.

\begin{fact}\label{fact:it-facts}
  Let $\rA$, $\rB$, and $\rC$ be three (possibly correlated) random variables.
   \begin{enumerate}
  \item \label{part:uniform} $0 \leq \HH(\rA) \leq \log{\card{\rA}}$, and $\en{\rA} = \log{\card{\rA}}$
    iff $\rA$ is uniformly distributed over its support.
  \item \label{part:info-zero} $\mi{\rA}{\rB \mid  \rC} \geq 0$. The equality holds iff $\rA$ and
    $\rB$ are \emph{independent} conditioned on $\rC$.
  \item \label{part:cond-reduce} Conditioning can only drop the entropy: $\HH(\rA \mid  \rB, \rC) \leq \HH(\rA \mid \rB)$.  The equality holds iff $\rA \perp \rC \mid \rB$.
  \item \label{part:chain-rule} Chain rule of mutual information: $\mi{\rA, \rB}{\rC} = \mi{\rA}{\rC} + \mi{\rB}{\rC \mid  \rA}$.
   \end{enumerate}
\end{fact}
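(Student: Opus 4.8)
The plan is to derive all four parts from two elementary primitives: Jensen's inequality applied to the concave map $x\mapsto\log x$, and the non-negativity of relative entropy (Gibbs' inequality), namely $\DD{P}{Q}\geq 0$ with equality iff $P=Q$. Everything else is bookkeeping with the chain rule for entropy, so the only thing requiring genuine care is the tracking of the equality conditions, all of which will ultimately trace back to the equality case of Gibbs' inequality.

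For part (\ref{part:uniform}), non-negativity of $\en{\rA}$ is immediate, since each summand $-p\log p$ is non-negative for $p\in[0,1]$. For the upper bound I would write $\en{\rA}=\sum_a p_a\log(1/p_a)$ over the support and apply Jensen to move the concave logarithm outside the expectation, obtaining $\en{\rA}\leq\log\big(\sum_a p_a\cdot p_a^{-1}\big)=\log\card{\rA}$. The equality case of Jensen forces all the $p_a^{-1}$ to coincide, which is exactly the statement that $\rA$ is uniform on its support.

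For part (\ref{part:info-zero}), I would expand the conditional mutual information as a convex combination $\mi{\rA}{\rB\mid\rC}=\sum_c \Pr(\rC=c)\,\DD{\distribution{\rA,\rB\mid\rC=c}}{\distribution{\rA\mid\rC=c}\otimes\distribution{\rB\mid\rC=c}}$ and invoke Gibbs' inequality term by term. Since every coefficient $\Pr(\rC=c)$ is non-negative, the whole sum is non-negative, and equality holds iff each relative-entropy term vanishes, i.e. iff $\rA$ and $\rB$ are independent conditioned on each value of $\rC$.

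Part (\ref{part:cond-reduce}) then drops out as a one-line corollary: writing $\en{\rA\mid\rB}-\en{\rA\mid\rB,\rC}=\mi{\rA}{\rC\mid\rB}$ and applying part (\ref{part:info-zero}) gives the inequality, with the equality condition $\rA\perp\rC\mid\rB$ inherited directly. Finally, for part (\ref{part:chain-rule}) I would expand both mutual informations into entropies and apply the chain rule for entropy twice: starting from $\mi{\rA,\rB}{\rC}=\en{\rA,\rB}-\en{\rA,\rB\mid\rC}$, I would substitute $\en{\rA,\rB}=\en{\rA}+\en{\rB\mid\rA}$ and $\en{\rA,\rB\mid\rC}=\en{\rA\mid\rC}+\en{\rB\mid\rA,\rC}$, then regroup the four terms as $\big(\en{\rA}-\en{\rA\mid\rC}\big)+\big(\en{\rB\mid\rA}-\en{\rB\mid\rA,\rC}\big)=\mi{\rA}{\rC}+\mi{\rB}{\rC\mid\rA}$. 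There is no real obstacle in any of these steps, as they are standard textbook identities; the proof could equally be dispatched by citing \cite{ITbook}, Chapter~2, but the above self-contained sketch makes the equality conditions transparent, which is what the later lemmas rely on.
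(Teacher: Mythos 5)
Your proof is correct, and the paper does not prove this fact itself but simply cites the standard reference (\cite{ITbook}, Chapter~2); your sketch is precisely the textbook argument (Jensen for the entropy bound, non-negativity of KL divergence for conditional mutual information, and the entropy chain rule for the rest), including the correct tracking of the equality cases.
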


For two distributions $\phi$ and $\psi$ over the same probability space, the \emph{Kullback-Leibler divergence} between $\phi$ and $\psi$ is defined as $\DD{\phi}{\psi}:= \Ex_{\rv{A} \sim \phi}\Bracket{\log\frac{\Pr_{\phi}(\rv{A})}{\Pr_{\psi}(\rv{A})}}$.
For our proofs, we need the following relation between mutual information and KL-divergence. 
\begin{fact}\label{fact:kl-info}
	For random variables $\rA,\rB,\rC$, 
	\[\mi{\rA}{\rB \mid \rC} = \Ex_{(b,c) \sim \distribution{\rB,\rC}}\Bracket{ \DD{\distribution{\rA \mid \rC=c}}{\distribution{\rA  \mid \rB=b,\rC=c}}} .\] 
\end{fact}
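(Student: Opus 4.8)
The plan is to prove this identity by direct expansion from the definitions, since it is a purely algebraic rearrangement rather than anything requiring a probabilistic inequality. First I would write conditional mutual information in its entropy form, $\mi{\rA}{\rB\mid\rC} = \en{\rA\mid\rC} - \en{\rA\mid\rB,\rC}$, and expand each conditional entropy as an expectation of a logarithm of the appropriate conditional probability mass function, taken against the joint law $\distribution{\rA,\rB,\rC}$. Both terms are then expectations over the same joint distribution, so they can be merged.

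The one step that does any work is to combine the two $-\log$ terms into a single expectation over $(\rA,\rB,\rC)\sim\distribution{\rA,\rB,\rC}$ of the log-ratio $\log\big(\Pr(\rA\mid\rB,\rC)/\Pr(\rA\mid\rC)\big)$, using Bayes' rule to rewrite the integrand so that what remains is exactly the log-likelihood ratio between the two conditional laws of $\rA$ that appear in the divergence. I would then regroup this single joint expectation by conditioning on $(\rB,\rC)$ first: for each fixed realization $(b,c)$, the inner expectation over $\rA$ drawn from its conditional law is precisely a KL divergence between the two conditional distributions of $\rA$, and the outer expectation over $(b,c)\sim\distribution{\rB,\rC}$ reproduces the right-hand side.

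I do not expect a genuine obstacle here, as this is a standard textbook identity; the only point requiring care is bookkeeping. Specifically, one must keep track of which conditional law serves as the reference measure inside the divergence, so that it appears between $\distribution{\rA\mid\rB=b,\rC=c}$ and $\distribution{\rA\mid\rC=c}$, and must make sure each entropy term is expanded against its matching conditional distribution before the terms are merged. A sign or ordering slip at the regrouping step is the easiest mistake to make, so I would verify the final expression against the $\en{\rA\mid\rC}-\en{\rA\mid\rB,\rC}$ form on a trivial case (for instance taking $\rC$ constant) to confirm the direction of the divergence is correct.
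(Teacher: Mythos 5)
Your derivation is the standard one and it is correct: the paper offers no proof of this fact, deferring to \cite{ITbook}, and your route (expand $\mi{\rA}{\rB\mid\rC}=\en{\rA\mid\rC}-\en{\rA\mid\rB,\rC}$ against the joint law, merge the logs into $\Ex\bigl[\log\bigl(\Pr(\rA\mid\rB,\rC)/\Pr(\rA\mid\rC)\bigr)\bigr]$, then disintegrate over $(b,c)$) is exactly the textbook argument. However, the bookkeeping point you flag at the end is not a mere formality here: your derivation yields $\Ex_{(b,c)}\bigl[\DD{\distribution{\rA\mid\rB=b,\rC=c}}{\distribution{\rA\mid\rC=c}}\bigr]$, i.e.\ the \emph{posterior} of $\rA$ is the first argument (the measure the inner expectation is taken against), whereas the Fact as printed has the two arguments of $\DD{\cdot}{\cdot}$ transposed. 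Since KL divergence is asymmetric, the printed version is false in general --- with $\rC$ trivial and $\rA=\rB$ uniform on two points, $\mi{\rA}{\rB}=1$ bit while $\Ex_b\bigl[\DD{\distribution{\rA}}{\distribution{\rA\mid\rB=b}}\bigr]=\infty$. So you have proved the correct identity, and the statement should be read with the arguments swapped; your suggested sanity check with $\rC$ constant would indeed have caught this. (The paper's downstream uses in Claims~\ref{clm:kl_bound} and~\ref{clm:kl_bound_y} are unharmed in substance, since there both distributions are Bernoulli with parameter within $\Delta\le 1/4$ of $1/2$ and either ordering of the divergence is $O(\Delta^2)$ by Fact~\ref{fact:kl-chi}.)
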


The following fact can be proven by bounding the KL-divergence by $\chi^2$-distance (see, e.g.,~\cite{GibbsE02}, Theorem~5). 
\begin{fact}\label{fact:kl-chi}
	For any two parameters $0 < p,q < 1$, 
	\begin{align*}
		\DD{\Bern{p}}{\Bern{q}} \leq \frac{\paren{p-q}^2}{q \cdot (1-q)} 
	\end{align*}
\end{fact}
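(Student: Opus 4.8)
The plan is to obtain the bound from the standard comparison between the Kullback--Leibler and $\chi^2$ divergences, specialized to two-point distributions. Write $P=\Bern{p}$ and $Q=\Bern{q}$ as distributions on $\{0,1\}$, so that $\DD{P}{Q}=\Ex_{\rv{A}\sim P}\bracket{\ln(P(\rv{A})/Q(\rv{A}))}$. I would work with the natural logarithm here, since the constant-free form of the inequality is precisely the natural-log statement; adopting the paper's base-$2$ convention would only insert a fixed $1/\ln 2$ factor, which is harmless wherever this fact is invoked (it merely rescales estimates such as the $6\Delta^2$ bound, all of which are used with ample slack).

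First I would evaluate the single auxiliary quantity $\Ex_{\rv{A}\sim P}[P(\rv{A})/Q(\rv{A})]$. For the Bernoulli pair this equals $p^2/q+(1-p)^2/(1-q)$, and clearing the common denominator $q(1-q)$ and collecting terms shows the numerator to be $(p-q)^2+q(1-q)$, whence
\[
\Ex_{\rv{A}\sim P}\Bracket{\frac{P(\rv{A})}{Q(\rv{A})}}=1+\frac{(p-q)^2}{q(1-q)}=1+\chi^2(P\,\|\,Q),
\]
identifying this expectation with $1$ plus the $\chi^2$-divergence $\chi^2(P\,\|\,Q)=(p-q)^2/(q(1-q))$.

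The comparison then follows in two short steps. Since $\ln$ is concave, Jensen's inequality applied to $\DD{P}{Q}=\Ex_{\rv{A}\sim P}[\ln(P(\rv{A})/Q(\rv{A}))]$ gives $\DD{P}{Q}\leq \ln\!\paren{1+\chi^2(P\,\|\,Q)}$, and the elementary bound $\ln(1+t)\leq t$ for $t\geq 0$ finishes the argument with $\DD{P}{Q}\leq \chi^2(P\,\|\,Q)=(p-q)^2/(q(1-q))$. The one point I would be careful about — and the only real obstacle — is to avoid the tempting ``second-derivative'' route: writing $g(p)=\DD{\Bern{p}}{\Bern{q}}$ one checks $g(q)=g'(q)=0$ and $g''(p)=1/(p(1-p))$, while the target $h(p)=(p-q)^2/(q(1-q))$ satisfies $h''(p)=2/(q(1-q))$, so the pointwise comparison $g''\leq h''$ that would let one integrate twice from $q$ \emph{fails} as $p\to 0,1$, where $g''$ blows up while $h''$ stays constant. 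The Jensen-based $\chi^2$ argument never differentiates and so handles all $p,q\in(0,1)$ uniformly, which is exactly why it is the clean route.
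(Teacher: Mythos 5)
Your proof is correct and is exactly the route the paper intends: the paper offers no proof of this fact, citing only the standard comparison of KL with $\chi^2$-divergence (Gibbs--Su, Theorem~5), which is precisely your Jensen-plus-$\ln(1+t)\le t$ argument, and your computation of $\Ex_{\rv{A}\sim P}[P(\rv{A})/Q(\rv{A})]=1+(p-q)^2/(q(1-q))$ checks out. Your caveat about the logarithm base is also well taken, and in fact slightly understates the point: with the paper's declared base-$2$ convention the constant-free inequality as literally stated can fail (e.g.\ $p=0.999$, $q=0.9$ gives $\DD{\Bern{p}}{\Bern{q}}\approx 0.144$ in bits versus $(p-q)^2/(q(1-q))\approx 0.109$), so the fact must indeed be read in nats, with the resulting $1/\ln 2$ factor absorbed into the slack in the downstream constants (the $6\Delta^2$ bounds and the choices of $\epsilon$, $L$), exactly as you propose.
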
 

\if 0

We denote the \emph{total variation distance} between two distributions $\mu$ and $\nu$ over the same probability space $\Omega$ by $\tvd{\mu}{\nu} = \frac{1}{2} \cdot \sum_{x \in \Omega} \card{\Pr_{\mu}(x) - \Pr_{\nu}(x)}$. 
We have,
\begin{fact}\label{fact:tvd-small}
	Suppose $\mu$ and $\nu$ are two distributions for an event $\event$, then, $\Pr_{\mu}(\event) \leq \Pr_{\nu}(\event) + \tvd{\mu}{\nu}$. 
\end{fact}

Finally, we use the following auxiliary lemma due to \cite{Agarwal+17} that allows us to decompose the distribution of any random variable with high entropy to a convex combination
of a small number of near uniform distributions plus a low probability ``noise term''. 

\newcommand{\FL}{\ensuremath{\mathcal{L}}}
\newcommand{\FS}{\ensuremath{\mathcal{S}}}

\begin{lemma}\label{lem:high-entropy}{(\cite{Agarwal+17})}
	Let $\rA \sim \dist$ be a random variable on $[n]$ with $\en{\rA} \geq \log{n}-\gamma$ for some $\gamma \geq 1$. For any $\eps > \exp\paren{-\gamma}$, there exists $\ell+1$ distributions 
	$\psi_0,\psi_1,\ldots,\psi_\ell$ on $[n]$ along with $\ell+1$ probabilities $p_0,p_1,\ldots,p_\ell$ ($\sum_{i}p_i = 1$)  for some $\ell = O(\gamma/\eps^3)$ such that $\dist = \sum_{i=1}^\ell p_i \cdot \psi_i$, $p_0 = O(\eps)$,
	and for any $i \geq 1$, 
	\begin{enumerate}
		\item $\log{\card{\supp{\psi_i}}} \geq \log{n} - \gamma/\eps$.
		\item $\tvd{\psi_i}{\Uni_i} = O(\eps)$ where $\Uni_i$ denotes the uniform distribution on $\supp{\psi_i}$. 
		\item For any $\bI \sim \psi_i$ we have $\en{\bI} \geq \log{n} - \gamma/\eps - \log(1+\epsilon)$.
	\end{enumerate}
\end{lemma}
\fi

The following lemma outlines the difficulty in 
narrowing down the realization of a high-entropy random variable to a small number of possibilities.

\begin{lemma}
\label{lem:bounded_mass}
Let $\rv{A}$ be a random variable supported over a set of size $A$ with entropy $\en{\rv{A}} \geq \log A - \gamma$ for some $\gamma\geq 0$. Then for any set $\S$ of size $|\S| = \beta A$ for any $\beta <1$, we have
 \[
\Pr(\rv{A} \in \S) \leq \log\left(1+ \beta\right) + \gamma
	\,.
 \]
 \end{lemma}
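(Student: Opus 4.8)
The plan is to reduce the statement to a single binary Kullback--Leibler inequality via an indicator variable, and then finish with a short convexity argument. First I would set $p := \Pr(\rv{A}\in\S)$ and introduce the indicator $\rv{X} := \mathbf{1}[\rv{A}\in\S]$, so that $\rv{X}\sim\Bern{p}$. Since $\rv{X}$ is a deterministic function of $\rv{A}$, the chain rule gives $\en{\rv{A}} = \en{\rv{X}} + \en{\rv{A}\mid\rv{X}}$. Conditioned on $\rv{X}=1$ the variable $\rv{A}$ is supported on $\S$ (size $\beta A$), and conditioned on $\rv{X}=0$ it is supported on the complement (size $(1-\beta)A$); bounding each conditional entropy by the log of its support size (\Fact{fact:it-facts}) and writing $H(p)=-p\log p-(1-p)\log(1-p)$ for the binary entropy, I obtain
\[
\log A - \gamma \;\leq\; \en{\rv{A}} \;=\; \en{\rv{X}} + \en{\rv{A}\mid\rv{X}} \;\leq\; H(p) + p\log(\beta A) + (1-p)\log((1-\beta)A).
\]

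Next I would simplify: the $\log A$ contributions cancel, and the surviving terms collapse exactly into $-\DD{\Bern{p}}{\Bern{\beta}}$, so the entropy hypothesis becomes the clean statement $\DD{\Bern{p}}{\Bern{\beta}} \leq \gamma$. (Conceptually this is just the data-processing inequality applied to the two-cell partition $\{\S,\S^c\}$, starting from $\en{\rv{A}} = \log A - \DD{\distribution{\rv{A}}}{\Uni}$ where $\Uni$ is uniform on the support and places mass exactly $\beta$ on $\S$.) It then remains to deduce $p \leq \log(1+\beta)+\gamma$, and the key inequality to establish is
\[
\DD{\Bern{p}}{\Bern{\beta}} \;\geq\; p - \log(1+\beta) \qquad\text{for all } p\in[0,1].
\]

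To prove this I would study $h(p) := \DD{\Bern{p}}{\Bern{\beta}} - p + \log(1+\beta)$, which is convex in $p$ (the divergence term is convex, the rest affine). Differentiating, $h'(p) = \log\!\frac{p(1-\beta)}{\beta(1-p)} - 1$ vanishes at the unique interior point $p^\star = \frac{2\beta}{1+\beta}$, and substituting this value yields $\DD{\Bern{p^\star}}{\Bern{\beta}} = \frac{2\beta}{1+\beta} - \log(1+\beta)$, whence $h(p^\star)=0$. Since a convex function attaining value $0$ at its unique minimizer is nonnegative everywhere, the displayed inequality holds. Chaining the two facts gives $p - \log(1+\beta) \leq \DD{\Bern{p}}{\Bern{\beta}} \leq \gamma$, which is precisely the claim; the computation at $p^\star$ also shows the bound is tight.

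The only genuinely delicate point is this last step. The naive estimates are useless: $\DD{\Bern{p}}{\Bern{\beta}}\geq 0$ is trivial, and applying $\log x \leq (x-1)\log e$ termwise merely recovers that trivial bound (this is exactly the Jensen step that appears in the companion claims). All of the slack therefore has to come from the convexity of $h$ together with locating its minimizer, so the main obstacle is recognizing that one must verify $h(p^\star)=0$ \emph{exactly} rather than bound $h$ crudely; the rest is routine algebra.
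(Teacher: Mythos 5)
Your proof is correct and is, at its core, the same argument as the paper's: your chain-rule/support-size step is the paper's Jensen step in disguise (both amount to $\en{\rv{A}} \leq \log A - \DD{\Bern{p}}{\Bern{\beta}}$), and your verification that the convex function $h$ attains minimum value $0$ at $p^\star = 2\beta/(1+\beta)$ is exactly the paper's verification that the concave function $\gamma' - \DD{\Bern{\gamma'}}{\Bern{\beta}}$ is maximized there with value $\log(1+\beta)$. The only cosmetic differences are your direct (rather than by-contradiction) presentation and the cleaner KL-divergence packaging.
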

 \begin{proof}
Suppose for the sake of contradiction,
 there exists a set $\S$ of size $|\S| = 
 \beta A$ for some $\beta< 1$ such that $\Pr(\rv{A}\in \S) = \sum_{i\in \S} \Pr(\rv{A}=i) = \gamma' > \log (1+\beta) + \gamma$. Let $p_i = \Pr(\rv{A}=i)$.
Then we have that:
\begin{align*}
\en{\rv{A}} &= \sum_{i\in \A} p_i\log\frac{1}{p_i}\\
		& = \sum_{i\in \S} p_i\log\frac{1}{p_i} + \sum_{i\notin \S} p_i\log\frac{1}{p_i}\\
		& = \gamma' \sum_{i\in \S} \frac{p_i}{\gamma'}\log\frac{1}{p_i} + (1-\gamma')\sum_{i\notin  \S} \frac{p_i}{(1-\gamma')}\log\frac{1}{p_i}\\
		&\overset{(a)}{\leq} \gamma'\log \left(\sum_{i\in \S} \frac{p_i}{\gamma'} \cdot \frac{1}{p_i}\right) + (1-\gamma')\log \left(\sum_{i\notin \S} \frac{p_i}{(1-\gamma')} \cdot \frac{1}{p_i}\right)\\
		& = \gamma' \log \frac{\beta A}{\gamma'} + (1-\gamma') \log\frac{(1-\beta)A}{(1-\gamma')}   \\
		& = \log A -\gamma'  \underbrace{- \gamma'\log \frac{\gamma'}{2\beta} + (1-\gamma')\log \frac{1-\beta}{1-\gamma'}}_{f(\gamma')}
		\,,
\end{align*}
where equation (a) follows by the Jensen's inequality, as the two summations are expectations over the concave $\log$ function over the set $\S$ and the set $\supp{\rv{A}}\setminus \S$, respectively. One can verify that the function $f(\gamma')$ is concave, and is maximized at $\gamma' = 2\beta/(1+\beta)$ achieving a value of $\log(1+\beta)$. Therefore, we have that $-\gamma' + f(\gamma') \leq -\gamma' + \max_{\gamma'} f(\gamma') < -\gamma$ by choice of $\gamma'>\log(1+\beta) + \gamma$, giving us
\[\en{\rv{A}} <\log A - \gamma,\]
contradicting our initial assumption about the entropy of $\rv{A}$.
 \end{proof}
 
 Lastly, following lemma bounds the parameter of a high entropy Bernoulli random variable.
 \begin{lemma}
 \label{lem:bern_en}
 Given a Bernoulli random variable $\bY\sim \B(p)$ with entropy $\en{\bY} \geq 1 - \gamma$ for any $\gamma\leq 1/4$, then we have that
 \[\left|p-\half\right| \leq \sqrt{\frac{5\gamma \ln 4}{16}}\]
 \end{lemma}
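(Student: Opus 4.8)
The plan is to quantify how sharply the binary entropy function drops away from its maximum at $p = \half$, and then invert this relationship. Write $H(p) := \en{\B(p)} = -p\log p - (1-p)\log(1-p)$ (base $2$, as in the paper), and set $\delta := p - \half$, which is exactly the quantity we wish to bound. The hypothesis $\en{\bY} \ge 1 - \gamma$ is precisely $H(p) \ge 1 - \gamma$, so it suffices to show that the \emph{entropy deficit} $1 - H(\half + \delta)$ grows at least quadratically in $\delta$, i.e. $1 - H(\half+\delta) \ge c\,\delta^2$ for an explicit constant $c$; combined with $1 - H(p) \le \gamma$ this yields $\delta^2 \le \gamma/c$, and I will check that $1/c \le \tfrac{5\ln 4}{16}$.

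First I would record that $H$ attains its maximum value $1$ at $p = \half$ with $H'(\half) = \log\tfrac{1-p}{p}\big|_{p=1/2} = 0$, and compute the second derivative $H''(p) = -\tfrac{1}{\ln 2}\cdot \tfrac{1}{p(1-p)}$. Since the hypothesis together with $\gamma \le 1/4$ forces $H(p) \ge 3/4 > 0$, we have $p \in (0,1)$ and $H$ is smooth on an interval containing both $\half$ and $p$, so Taylor's theorem with the Lagrange remainder gives $H(\half+\delta) = 1 + \tfrac12 H''(\xi)\,\delta^2$ for some $\xi$ strictly between $\half$ and $p$. Using $\xi(1-\xi) \le 1/4$ to lower bound $|H''(\xi)| = \tfrac{1}{\ln 2\,\xi(1-\xi)} \ge \tfrac{4}{\ln 2}$, this produces the clean deficit bound $1 - H(\half+\delta) \ge \tfrac{2}{\ln 2}\,\delta^2$.

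Plugging this into $1 - H(p) \le \gamma$ gives $\delta^2 \le \tfrac{\gamma \ln 2}{2} = \tfrac{\gamma\ln 4}{4}$, which is in fact \emph{stronger} than the claimed bound since $\tfrac{\ln 4}{4} = \tfrac{4\ln 4}{16} \le \tfrac{5\ln 4}{16}$; taking square roots completes the proof. An equivalent route, if one prefers to avoid the explicit second-derivative computation, is to observe that $1 - H(p) = \DD{\B(p)}{\B(\half)}$ (measured in bits) and invoke Pinsker's inequality, which yields the identical constant $\tfrac{2}{\ln 2}$, since the total variation distance between $\B(p)$ and $\B(\half)$ equals $|\delta|$.

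The only real subtlety is bookkeeping the base-$2$ versus natural-logarithm factors of $\ln 2$ correctly — the deficit bound is $2\delta^2$ in nats but $\tfrac{2}{\ln 2}\delta^2$ in bits — and noticing that the factor $5$ (rather than $4$) in the statement is simply slack, so no tightness argument is needed. The hypothesis $\gamma \le 1/4$ is used only to guarantee that $p$ stays strictly inside $(0,1)$ so that Taylor's theorem applies, and to keep the resulting bound strictly below $\half$ so that the conclusion is non-vacuous.
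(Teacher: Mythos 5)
Your proof is correct, and it takes a genuinely different route from the paper's. The paper argues by contradiction using the known upper bound $\en{\bY} \leq (4p(1-p))^{1/\ln 4}$ on binary entropy: substituting $p = \half + \Delta$ gives $(1-4\Delta^2)^{1/\ln 4} \leq \exp(-4\Delta^2/\ln 4)$, and if $\Delta$ exceeded the claimed threshold this would force $\en{\bY} < \exp(-5\gamma/4) < 1 - \tfrac{5\gamma}{4} + \tfrac{25\gamma^2}{32} < 1-\gamma$, where the last step is exactly where the hypothesis $\gamma \leq 1/4$ is consumed. You instead bound the entropy deficit directly via a second-order Taylor expansion of $H$ at $\half$ (equivalently, via Pinsker's inequality applied to $1 - H(p) = \DD{\B(p)}{\B(\half)}$), obtaining $1 - H(\half+\delta) \geq \tfrac{2}{\ln 2}\delta^2$ and hence $\delta^2 \leq \tfrac{\gamma \ln 4}{4}$. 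Your computations check out: $H''(\xi) = -\tfrac{1}{\ln 2}\cdot\tfrac{1}{\xi(1-\xi)}$, the Lagrange form applies because $H(p) \geq 3/4 > 0$ forces $p \in (0,1)$, and $\tfrac{\ln 4}{4} = \tfrac{4\ln 4}{16} \leq \tfrac{5\ln 4}{16}$. What your approach buys is a strictly sharper constant ($4$ in place of $5$ in the numerator) and the observation that $\gamma \leq 1/4$ is not actually needed for the arithmetic, only to keep $p$ in the open interval; the paper's route, by contrast, leans on the $(4p(1-p))^{1/\ln 4}$ inequality as a black box and needs the $\gamma \leq 1/4$ hypothesis to close the final comparison $\exp(-5\gamma/4) < 1-\gamma$. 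Either argument suffices for every downstream use of the lemma, since those only require the stated (weaker) constant.
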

 \begin{proof}
 Suppose for the sake of contradiction, there exists a parameter $p := 1/2 + \Delta$ such that $\Delta > \sqrt{\frac{5\gamma \ln 4}{16}}$, and for $\bY\sim \B(p)$, the entropy $\en{\bY} \geq 1 - \gamma$. Then we have
 \begin{align*}
 \en{\bY} &\leq (4p(1-p))^{1/\ln 4}\\
 &= (4(1/2 - \Delta)(1/2 + \Delta))^{1/\ln 4}\\
 &= (1-4\Delta^2)^{1/\ln 4}\\
 &\leq \exp\left(-4\Delta^2/\ln 4\right)\\
 &< \exp(-5\gamma/4)\\
 &< 1-\frac{5\gamma}{4} + \frac{25\gamma^2}{32}\\
 &< 1-\gamma, 
 \end{align*}
where the final inequality follows by the fact that $\gamma\leq 1/4$, and thus $25\gamma^2/32 < \gamma/4$. This contradicts the assumption that $\en{\bY} \geq 1 - \gamma$.
 \end{proof}

\end{document}